\documentclass{article}
\usepackage{amsmath, amsfonts, amsthm}
\usepackage{mathrsfs}
\usepackage{bbm}
\usepackage{natbib}
\usepackage{graphicx}
\usepackage{algorithm}
\usepackage[noend]{algorithmic}
\usepackage{float}
\usepackage{setspace} 
\usepackage{xcolor}
\usepackage{fullpage}
\usepackage{longtable}
\DeclareMathAlphabet{\mathpzc}{OT1}{pzc}{m}{it}

\title{Augmented Feature Boosting for Multicalibration}
\author{
Ira Globus-Harris\\
Cornell University\\
\texttt{img53@cornell.edu}
\and
Inbal Livni Navon\\
Ben-Gurion University of the Negev\\
\texttt{inballn@bgu.ac.il}
}
\date{}

\begin{document}
\newtheorem{theorem}{Theorem}[section]
\newtheorem{definition}[theorem]{Definition}
\newtheorem{lemma}[theorem]{Lemma}
\newtheorem{corollary}[theorem]{Corollary}
\newtheorem{example}[theorem]{Example}
\newtheorem{remark}{Remark}
\newtheorem{claim}[theorem]{Claim}
\newcommand{\eps}{\varepsilon}

\newcommand{\round}{\textup{Round}}
\newcommand{\discrete}{\textup{Discretize}}
\newcommand{\bZ}{\mathbb{Z}}
\newcommand{\barf}{\bar{f}}

\newcommand{\cA}{\mathcal{A}}
\newcommand{\cB}{\mathcal{B}}
\newcommand{\cC}{\mathcal{C}}
\newcommand{\cD}{\mathscr{D}}
\newcommand{\cE}{\mathcal{E}}
\newcommand{\cF}{\mathcal{F}}
\newcommand{\cG}{\mathcal{G}}
\newcommand{\cH}{\mathcal{H}}
\newcommand{\cN}{\mathcal{N}}
\newcommand{\cO}{\mathcal{O}}
\newcommand{\cR}{\mathcal{R}}
\newcommand{\cU}{\mathcal{U}}
\newcommand{\cS}{\mathcal{S}}
\newcommand{\cX}{\mathcal{X}}
\newcommand{\cY}{\mathcal{Y}}

\newcommand{\hcE}{\widehat{\mathcal{E}}}

\newcommand{\1}{\mathbbm{1}}
\newcommand{\E}{\mathbb{E}}
\newcommand{\R}{\mathbb{R}}
\newcommand{\clip}[0]{\textup{\texttt{clip}}}
\newcommand{\sgn}{\textup{Sign}}
\newcommand{\sign}{\textup{Sign}}
\newcommand{\init}{\textup{\texttt{init}}}
\newcommand{\finit}{f_{\textup{\texttt{init}}}}

\newcommand{\br}{\bar{r}}
\newcommand{\abs}[1]{\left| #1 \right|}

\newcommand{\err}{\text{err}}

\newcommand{\oracle}{\mathcal{O}_{\cH'}}
\newcommand{\clipa}{\cA^{\texttt{clip}}}
\newcommand{\bigquad}{\quad\quad\quad\quad\quad\quad\quad\quad}
\newcommand{\blah}{\textup{\texttt{blah}}}
\newcommand{\featureboost}{\textup{FeatureBoost}}
\newcommand{\treedepth}{\mathpzc{l}}

\newcommand\igh[1]{{[\color{magenta}{\textbf{Ira:}~#1}]}}
\newcommand\inbal[1]{{[\color{blue}{\textbf{Inbal:}~#1}]}}

\maketitle

\begin{abstract}
Multicalibration requires a predictor's residuals to be unbiased not only globally, but also after conditioning on the predictor's own level sets and reweighting by a rich class of test functions. Standard boosting approaches in the distributional setting achieve this by repeatedly discretizing the predictor's range then auditing and repairing the resulting level sets. One consequence is that in practice, the algorithm's guarantees are sensitive to this parametrization of the rounding parameter. A natural theoretical question, then, is how to do discretization-free boosting which avoids this rounding within the boosting process itself. Here, we analyze an alternative feature-augmentation boosting paradigm inspired by \cite{tax2026mcgrad}: at each round, a squared-loss oracle is called on hypotheses that receive the previous predictor's output as an additional feature, and only the final predictor is rounded to have a finite set of level sets to provide the multicalibration guarantee with respect to. We give a theoretical analysis of this procedure through the expressivity of the augmented hypothesis class, and show how the expressivity of this class yields a hierarchy of guarantees, including multiaccuracy, multicalibration, and the stronger notion of ``level-set" multicalibration.
\end{abstract}

\section{Introduction}

Empirical risk minimization---i.e., minimizing some standard (possibly proper) loss function within a set family of hypotheses $\cH$---is the primary cornerstone of algorithm design in machine learning. However, alternative objectives exist. For instance, in boosting, the goal is not to find the best model within $\cH$ for a particular learning task, but to use the models in $\cH$ as a collection of ``weak" learners with may then be ensembled or iteratively combined in some way to create a much stronger learner, despite not having the direct ability to do risk minimization over this richer class of ``boosted" models. 
Increasingly we want models that are used in the real world to have strong error guarantees with respect to something more refined than the average error of a predictor. In the context of regression models which map from contexts $\cX$ to real valued predictions in $[0,1]$, we may want their predictions to be interpretable in some meaningful way as \textit{probabilites}. At a minimum, this means predictions should be \textit{calibrated}: unbiased conditional on their own predictions. And, in order for this to be useful, this should hold not just on average on the full population, but for many different and possibly overlapping subpopulations. In other words, for any value $r$ in the range of the predictor $f$, the predictor must be unbiased not just on average but with respect to a weighted average generated by some test function $h$, for every $h$ in some class of tests $\cH$:
\[
\E_{(x,y)\sim\cD}\left[h(x)(f(x)-y) \vert f(x)=r \right] \approx 0.
\]

This form of guarantee, called \textit{multicalibration}, was originally introduced in the algorithmic fairness literature \cite{hebert2018multicalibration}---there, the goal was to generate a predictor which is calibrated not just overall but on many different (demographic) groups. Since then, multicalibration has emerged as a general algorithmic primitive with applications far beyond fairness: by cleverly designing the conditioning events that one wishes to enforce multicalibration on, one can get strong guarantees useful for downstream optimization \cite{gopalan2021omnipredictors,hu2023omnipredictors, downstreamfairness, ensemblingopt}, robustness under distribution shift \cite{kim2022universal}, information aggregation in collaborative or distributed settings \cite{agreement, collabpred}, computational complexity \cite{casacuberta2024complexity}, and online learning \cite{garg2024oracle}.

These applications have led to a growing interest in developing efficient algorithms for achieving multicalibration. The original algorithmic method provides a boosting-style approach to guaranteeing multicalibration via iteratively checking for violations of the guarantee and patching them \cite{hebert2018multicalibration}. This is relatively efficient: the number of iterations of boosting depends only on the desired calibration error, while the computational complexity is largely determined by the difficulty of finding violated calibration constraints. Subsequent work provides an alternative lens of multicalibration as a form of \textit{swap regret} with respect to the class of weak learners the boosting is over: a predictor has low multicalibration error with respect to a class $\cH$ if and only if its squared error cannot be improved by swapping out this collection of predictions by an alternative predictor $h$ in $\cH$. This suggests an alternative approach to building a multicalibrated predictor: in rounds, split predictions into a collection of (discretized) level sets, call a squared error oracle on each induced sub-distribution, and then swap out the current predictor on that level set for the new one, until the process no longer improves squared error \cite{lsboost}. 

These approaches all assume additional validation data is used to generate a multicalibrated predictor as a postprocessing step on top of the predictions of some initially trained model $f_0$. In other words, in order to be useful to applied practitioners, it must be the case that first using one subset of training data to train the initial predictor $f_0$ and then using the rest of it to multicalibrate that initial model gives a stronger multicalibration guarantee than simply using all of the data to train $f_0$. 

In practice, this does not seem to always be the case. Empirically, Hansen et al.~show that for some benchmark tasks and families of test functions, training models which heuristically achieve (standard) calibration---such as isotonic regression---give superior results for \textit{multi}calibration than splitting the dataset into an initial model training and posthoc multicalibration component \cite{postprocessingnecessary}. Thus, is going to the trouble of explicitly multicalibrating a predictor, and sacrificing some training data to do so, worth it? 

Part of the challenge of creating useful postprocessing for multicalibration is that the algorithms all rely on repeatedly discretizing predictions to a finite set of values at each boosting step---intuitively, this is necessary because multicalibration itself requires that you condition on an event which is in terms of the value of your own prediction. The boosting step treats these separate ``buckets" of discretized predictions independently. This in turn means that they cannot include too few samples, or else generalization will fail. Hence, the sample complexity of the algorithm is sensitive to how this rounding hyperparameter is chosen. And, since this discretization is applied internally to the boosting process itself at every round, folk theory tells us misspecification in the hyperparameters compound over time. This leads us to a natural question: 

\begin{center}
\textit{Are there discretization-free methods to build multicalibrated predictors with better sample complexity?}
\end{center}

Recent work gives some indication of an affirmative: theoretical work shows that for some sufficiently rich hypothesis classes, standard risk minimization also achieves multicalibration \cite{blasiok2023loss}. However, this work is not constructive. While, for instance, \citep{blasiok2023loss} show that perfect loss minimization on neural nets of depth $k$ implies multicalibration with respect to functions representable by nets of some smaller size $k'$, it does not provide a construction of \textit{which} $k'$ this might be. 

On the empirical side, Tax et al.~\cite{tax2026mcgrad} propose a fundamentally different boosting paradigm based on \emph{feature augmentation}: at every boosting iteration, the predictor from the previous round is added as an additional feature for the next weak learner, without discretization between rounds. They demonstrated empirically that this approach produces multicalibrated predictors. In follow-up work, they establish theoretical guarantees under substantially restricted assumptions \cite{haimovich2026convergence}. 

In this work, we provide the first general theoretical analysis of this style of ``feature-augmented" boosting for multicalibration. We characterize minimal requirements on the augmented hypothesis class needed to achieve multiaccuracy, multicalibration, and a stronger notion of level-set multicalibration---while only requiring one single discretization step on the final predictor, rather than throughout the boosting process. Since this can be done entirely post-hoc, it is thus a remarkably flexible paradigm, where hyperparameter tuning can be completed after any boosting occurs. We further show that for standard calibration guarantees, this final discretization step is strictly necessary to guarantee small calibration error.

Finally, our analysis reveals a new trade-off between the expressive power of the augmented hypothesis class and the complexity of learning. Richer augmentation allows stronger guarantees and improves the dependence between the boosting error and the resulting calibration error, but simultaneously increases the complexity of the underlying learning problem. We formalize this trade-off by deriving sample complexity bounds for the augmented hypothesis classes. 

\subsection{Our results}
In this work, we analyze a boosting algorithm for multicalibration based on \emph{feature augmentation}, inspired by Tax et al.~\cite{tax2026mcgrad, haimovich2026convergence}. Starting from an initial predictor $f_0$, each boosting iteration learns a hypothesis $a\in\cA$ that receives both the original features $x$ and the current prediction $f_t(x)$ as input and defines the next predictor by
\[
f_{t+1}(x)=a(x,f_t(x)).
\]
The algorithm continues updating the predictor as long as the loss decreases by some additive slack, and terminates once no hypothesis in the augmented class $\cA$ can achieve such an improvement.

Our main contribution is a characterization of the expressive power required from the augmented hypothesis class. We show that richer augmented classes yield progressively stronger calibration guarantees, ranging from multiaccuracy to multicalibration and a stronger notion of multicalibration called ``level-set" multicalibration with respect to a benchmark class of hypotheses $\cB$.

\begin{itemize}
    \item If $\cA$ can express updates of the form $
    a(x,u)=u+\eta b(x),$
    for every bounded $b\in\cB$ and $\eta\in\R$, then Algorithm~\ref{algo:boosting} outputs a multiaccurate predictor with respect to $\cB$.

    \item If $\cA$ can additionally localize updates to prediction intervals, namely that it can express $a(x,u)=u+\eta b(x)\1(u\in S_i),$
    for every interval $S_i$ in a partition of $[0,1]$, then the algorithm outputs a predictor whose discretized version is multicalibrated with respect to $\cB$.

    We further show that this richer augmentation class leads to an improved dependence between the boosting accuracy parameter and the resulting calibration error.
    \item Finally, if $\cA$ can further localize updates according to both the prediction and the value of the auditing function, namely, $a(x,u)=u+\eta b(x)\1(u\in S_i,\,b(x)\in S_j)$,
    then the algorithm outputs a predictor whose discretized version is level-set multicalibrated with respect to $\cB$.\footnote{We formalize this notion of multicalibration in Section \ref{sec:level-set-mc}; at a high level, it requires swap regret not just on the predictor $f$'s level sets but jointly on $f$ and the test function $b$'s level sets.}
\end{itemize}

Unlike the standard algorithm for multicalibration, our approach performs a single discretization step only at the end of the algorithm. We prove that this final discretization step is, in general, necessary: without it, the resulting predictor may have arbitrarily large continuous calibration error. This phenomenon arises because continuous calibration conditions on events of the form $f(x)=r$, making it highly sensitive to infinitesimal perturbations of the predictions.

Finally, we give sample complexity guarantees for augmented feature boosting in terms of the error and multicalibration guarantees. The general guarantee is achieved by bounding the complexity $d_T$ of hypotheses generated after $T$ rounds of boosting, giving us the standard uniform convergence guarantee for 
\[
n \ge \tilde{O}\left(\frac{\log(1/\delta) + d_T}{\eps^2}\right).
\]
Here, it is important to note that this \textit{does not necessarily} beat the standard boosting-to-multicalibrate sample complexity bounds of around $O(1/\eps^4)$--it does not make the cost of discretization disappear. Rather, this cost shifts from a per-round penalty generated by explicitly bucketing predictions into level sets into the expressivity required of the weak learner itself, which in turn makes $d_T$ itself a function of $\eps$.

We then bound the pseudodimension of the boosted predictor in terms of its arithmetic expressivity---if the base weak learners have bounded description size $S'$ in terms of the number of real values they predicate on and the number of arithmetic operations they perform, then $d_T$ is bounded above by a function of $S$. We then instantiate this specifically for weak learners which are axis-aligned decision trees, and provide an example elucidating why such arithmetic expressivity bounds are necessary for generalization to hold.

\subsection{Related works}
Since its introduction, multicalibration has evolved from a fairness notion into a general framework for obtaining reliable probabilistic predictions. This broad applicability has motivated a growing body of work on both understanding the theoretical properties of multicalibration and developing more efficient algorithms for achieving it. 
 
Standard multicalibration approaches boost a predictor by efficiently calling some form of auditing oracle in order to test (and then repair) for violations of the calibration guarantees. This process repeats for some number of rounds, until all the tests pass, while still guaranteeing that the total number of tests the oracle is invoked for is small in order to to reuse samples efficiently. Subsequent works extended this framework to richer collections of tests, including omnipredictors~\cite{gopalan2021omnipredictors,hu2023omnipredictors}, and proposed weaker notions such as calibrated multiaccuracy~\cite{gopalan2022loss}, which can achieve improved computational or statistical complexity while still relying on iterative post-processing and discretization.

Unlike the approaches discussed above, which get sample complexity at least $\tilde{O}(\eps^{-4})$, the sample complexity of multicalibration is known to be tight at $\tilde{O}(\eps^{-3})$. This is achieved not by a standard boosting algorithm in the distributional/batch setting, but by treating the problem in the online setting where labels are adversarially generated, and then reducing back to the distributional setting via an online-to-batch reduction \cite{mcSampleComplexity}. This results in a randomized predictor. In recent work, Noarav and Roth show that this randomization is not necessary, and achieve a deterministic predictor with the same optimal rates \cite{deterministicMC}. However, their approach still requires the online-to-batch reduction. In contrast, our goal is not to optimize the
minimax dependence on $\eps$, but to analyze a boosting procedure whose updates are obtained via risk minimization and boosting.

Several works consider multicalibration in this agnostic learning framework. In particular, the omnipredictors framework \cite{gopalan2021omnipredictors} reformulates multicalibration through the lens of hypothesis classes, connecting it to weak agnostic learning and showing that multicalibration can be achieved using an oracle for weak learning over the underlying hypothesis class. In subsequent work, Globus-Harris et al.~consider the relationship between agnostic boosting and multicalibration in the context of regression \cite{lsboost}. These viewpoints highlight that the complexity of achieving multicalibration is governed by the complexity of the hypothesis class used to identify calibration violations.

Several works have therefore sought computationally weaker alternatives to full multicalibration with improved sample complexity. Examples include multiaccuracy \cite{kim2019multiaccuracy}, calibrated multiaccuracy \cite{gopalan2022loss}, and low-degree multicalibration \cite{gopalan2022low}, each sacrificing some expressive power in exchange for improved computational or statistical efficiency. However, these weaker notions are insufficient for several applications. For example, calibrated multiaccuracy guarantees omniprediction only for a restricted family of downstream optimization problems, whereas full multicalibration and level-set multicalibration provide guarantees for increasingly richer classes of downstream decision-making tasks \cite{gopalan2022loss,hu2023omnipredictors}.

Another direction is to develop simple alternative post-processing algorithms. Jin et al.~proposed a discretization-free post-processing procedure based on adding a sum of depth-$2$ decision trees to an existing predictor \cite{jin2026discretization}. Their algorithm is again based on boosting for loss minimization, and they show that after a final discretization step the resulting predictor has small multicalibration error.

The work most closely related to ours is the boosting framework introduced by Tax et al.~\cite{tax2026mcgrad}. Rather than explicitly identifying calibration violations, they repeatedly augment the input of the weak learner with the prediction of the current model and optimize the prediction loss. They provide empirical evidence that this procedure produces multicalibrated predictors and establish theoretical guarantees under additional assumptions and for a non-standard notion of multicalibration. In follow-up work, Haimovich et al. further analyzed the same algorithmic approach, proving an average notion of multicalibration in which the output predictor is calibrated in expectation over a randomly selected hypothesis from a finite hypothesis class ~\cite{haimovich2026convergence}. We compare our results to theirs more thoroughly in Appendix \ref{sec:differences}; at a high level, their theoretical guarantees are on average with respect to the group of test functions, while ours is a worst-case guarantee over all of these tests.

\section{Preliminaries}
We consider a learning task with feature space $\cX$ and real-valued one-dimensional labels $\cY\subseteq [0,1]$ over distribution $\cD \in \Delta(\cX \times \cY)$. We write $D \sim \cD^n$ to denote a dataset of $n$ labeled examples drawn independently and identically from $\cD$, and will use $[n]$ to denote $\{1,2,\ldots,n\}$. We will be interested in the squared error of the model, both over the entire distribution and over specific induced conditional distributions:

\begin{definition}
    Let $f:\cX\rightarrow[0,1]$, the squared error of $f$ under distribution $\cD\subseteq \cX\times [0,1]$ is
    \begin{align*}
        L_{\cD}(f)= \E_{(x,y)\sim\cD}\left[(f(x)-y)^2\right].
    \end{align*}
\end{definition}

When the expectations are taken over an empirical dataset $D \sim \cD^n$, we abuse notation and let $D$ denote a distribution which is uniform over the sample. 

Throughout this work, we develop algorithmic methods to build \textit{calibrated} predictors, which are unbiased conditional on their own predictions. 

\begin{definition}[Expected calibration error]
    The calibration error of a function $f:\cX\rightarrow R$ with a finite codomain $R\subset[0,1]$ under distribution $\cD$ is
    \begin{align*}
        ECE(f)=\sum_{r\in R}\Pr_{(x,y)\sim \cD}[f(x)=r]\abs{\E_{(x,y)\sim\cD}\left[\left. f(x)-y \right|f(x)=r\right]}.
    \end{align*}
\end{definition}

We will be interested in stronger guarantees, where we condition not only on the prediction but on these predictions constrained to any weighted subsets of the distribution defined by a class of models $\cB$. Or, equivalently, we will require a form of swap regret with $\cB$: that no model in $\cB$ has correlation with our predictor $f$'s residuals, conditioned on any of its own predictions. 

\begin{definition}[Multicalibration]
\label{def:mc}
    A function $f:\cX\rightarrow R$ with a finite $R\subset[0,1]$ is multicalibrated with respect to a set of functions $\cB$
    under distribution $\cD$  with error $\eps$ if for all $b\in\cB$
    \begin{align*}
        K_1(\cD,f) = \sum_{r\in R}\Pr_{(x,y)\sim \cD}[f(x)=r]\abs{\E_{(x,y)\sim\cD}\left[\left. b(x)(f(x)-y) \right|f(x)=r\right]}\leq\eps.
    \end{align*}
\end{definition}

We will develop a boosting algorithm which, unlike standard approaches, is relatively discretization-free, in the sense that at every round of boosting, the predictor is not replaced with one of a set number of predictors or rounded to a discrete range. However, we wish to achieve the standard notion of $K_1(\cD, f)$ multicalibration, which conditions on a level set $r$ of the predictions, in a way that generalizes out-of-sample. Hence, our final multicalibration guarantee will still be with respect to a final $\textit{discretized}$ predictor, which we define as follows.

\begin{definition}\label{def:k-bins}
    The partition of $[0,1]$ into $K$ equal-sized bins is $\cS = \{[0,1/K),[1/K,2/K),\ldots[(K-1)/K,1] \}$.
\end{definition}

\begin{definition}[Discretization]
\label{def:discretization}
Let $f:\cX\rightarrow [0,1]$ be a function and $\cS_K$ be a partition of $[0,1]$ into $K$ equally-sized bins. Let $\cH$ be the family of all functions mapping to $\R$. We will then let $\discrete: \cF \rightarrow \cF$ return a function $\barf = \discrete({f;\cS_K})$ which maps $f(x)$ to $\E[f(x) \vert x \in S]$, where $S$ is the unique bin in $\cS_K$ such that $f(x) \in S$.
\end{definition}

Along the way to develop algorithms which achieve multicalibration, we will also show that our approach gives us the weaker notion of multi\textit{accuracy}, which guarantees that no model in the comparator class $\cB$ has overall expected correlation with $f$'s residuals.

\begin{definition}[Multiaccuracy]
    A function $f:\cX\rightarrow [0,1]$ is multiaccurate with respect to a set of functions $\cB$
    under distribution $\cD$  with error $\eps$ if for all $b\in\cB$
    \begin{align*}
    \abs{\E_{(x,y)\sim\cD}\left[ b(x)(f(x)-y) \right]}\leq\eps.
    \end{align*}
\end{definition}

\section{Augmented Feature Boosting}
For our boosting process, we will be optimizing over a hypothesis class that \textit{augments} its feature space with the predictions that the previous round of boosting output. We formalize this ``augmented" hypothesis class of models as follows. Within each round, the boosting process will then call a squared error oracle on the induced mapping on class $\cA$ which uses the predictions from the previous round as an additional feature.

\begin{definition}[Augmented Hypothesis Class $\cA$]
Let $\cA$ be a class of mappings $a:\cX \times \R \rightarrow \R$. Fixing some model $f: \cX \rightarrow \R$, we will refer to the mappings of $\cA$ induced by augmenting the features $\cX$ with the predictor $f$ as
\[
\cA[f] = \{x \mapsto a(x,f(x)) : a \in \cA\}.
\]
\end{definition}

\begin{definition}[Approximate squared-error oracle for augmented hypotheses $\cO_{\cA}$]
\label{def:oracle}
Let $\cA$ be an augmented hypothesis class of models, and fix some model $f: \cX \rightarrow \R$. Let $\tau \in [0,1]$ be a tolerance parameter. We call $\cO_{\cA}$ a $\tau$-approximate oracle for $\cA$ with respect to $f$ if for every distribution $\cD \in \Delta(\cX \times \R),$ it returns a function $a \in \cA$ satisfying 
\[
\E_{(x,y)\sim \cD}[(a(x,f(x))-y)^2] \le \min_{a' \in \cA} \E_{(x,y)\sim \cD}[(a'(x,f(x))-y)^2] + \tau.
\]
We denote such an oracle call by 
\[
a \leftarrow \cO_{\cA}(\cD,f,\tau).
\]
\end{definition}

In order to satisfy our sample complexity guarantees in Section \ref{sec:generalization}, we will need to require that the class of models that we boost over is in fact a clipped version of the hypothesis space, which we formalize as follows. 

\begin{definition}[Clipping function \clip] For any $u,v \in \R$ with $u<v$, let $\clip_{[u,v]}: \R \rightarrow [u,v]$ be the function which clips values to the range $[u,v]$. I.e., 
\begin{equation*}
\clip_{[u,v]}(x) = 
\begin{cases}
u & \text{if } x < u, \\
v & \text{if } x > v, \\
x & \text{else.}
\end{cases}
\end{equation*}
Throughout the paper, we will use $\clip(x)$ as a shorthand for $\clip_{[0,1]}(x)$.
\end{definition}

\begin{definition}[Clipped Hypothesis Class $\clipa$]
    Let $\cA$ be an augmented hypothesis class of models $a: \cX \times \R \rightarrow \R$. We will write $\clipa$ to be $\cA$ clipped to the range $[0,1]$. I.e.,
    \[
    \clipa = \{\clip(a(x,u)) : a \in \cA\}.
    \]
\end{definition}

We now have the necessary machinery to formalize the boosting process for augmented feature boosting in Algorithm \ref{algo:boosting} and demonstrate that the process halts.

\begin{algorithm}[H]
\begin{spacing}{1.15}
\begin{algorithmic}
\STATE {\bf Input:} Augmented hypothesis classes $\cA $ of models $a: \cX \times \R \rightarrow \R$, distribution $\cD \in \Delta (\cX  \times \cY)$, initial model $\finit: \cX \rightarrow \R$, hyperparameters $\alpha, \tau$.
\STATE Let $f_0 = \clip(\finit)$
\STATE Let $\err_{-1}=\infty$ and let $\err_{0} = L_{\cD}(f_0)$
\STATE Let $t = 0$
\WHILE{$\err_{t-1}-\err_{t} \ge \alpha-\tau$}
    \STATE Let $t = t + 1$
    \STATE  Let $a_t \in \cO_{\clipa}(\cD, f_{t-1},\tau)$ \quad\quad\quad\quad Note the oracle is called on $\clipa$, the clipped version of $\cA$. 
    \STATE Let $f_t(x) = a_t(x, f_{t-1}(x))$
    \STATE Let $\err_t = L_{\cD}(f_t)$
\ENDWHILE 
\STATE {\bf Output: $f_{t-1}$} 
\end{algorithmic}
\end{spacing}
\caption{$\featureboost(\cA,\cD,\finit,\alpha, \tau)$; Boosting by Feature Augmentation}
\label{algo:boosting}
\end{algorithm}

\begin{theorem}[$\featureboost(\cA,\cD,\finit,\alpha, \tau)$ halts] Fix a distribution $\cD \in \Delta (\cX \times \cY)$ and model $\finit: \cX \rightarrow \R$. Let $\cA$ be an augmented hypothesis class, and let $\alpha \in (0,1), \tau \in [0,\alpha)$. Then, $\featureboost(\cA,\cD,\finit,\alpha,\tau)$ (Algorithm \ref{algo:boosting}) halts after at most $T = \lfloor L_{\cD}(\finit)/(\alpha-\tau) \rfloor \le 1/(\alpha-\tau)$ accepted updates, and $\lfloor L_{\cD}(\finit)/(\alpha-\tau) \rfloor+1 \le 1/(\alpha-\tau) + 1$ oracle calls.
\end{theorem}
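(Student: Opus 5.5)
The argument is a standard potential-function argument with the squared error $\err_t = L_{\cD}(f_t)$ as the potential. It uses only that squared error is nonnegative and that each accepted update lowers it by at least $\alpha-\tau>0$.

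First I fix the bookkeeping. Say the oracle call in iteration $t$ is \emph{accepted} if the loop condition is checked again afterwards and holds, i.e.\ $\err_{t-1}-\err_t \ge \alpha-\tau$. Iteration $1$ always runs, since $\err_{-1}=\infty$. Suppose the loop body has executed for $t=1,\dots,m$ and the condition held at every check after iterations $1,\dots,m$. Telescoping gives
\[
\err_m \le \err_0 - m(\alpha-\tau).
\]
Squared error is nonnegative because $f_t$ and $y$ take real values, so $\err_m\ge 0$ and hence $m \le \err_0/(\alpha-\tau)$. Since $m$ is an integer, $m\le\lfloor \err_0/(\alpha-\tau)\rfloor$. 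So the condition fails after at most $\lfloor \err_0/(\alpha-\tau)\rfloor$ accepted updates. Counting the final rejected call, the oracle is called at most one more time.

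Second, I relate $\err_0$ to $L_{\cD}(\finit)$ and to $1$. Here $f_0=\clip(\finit)$, and $y\in[0,1]$. Clipping to $[0,1]$ is a projection onto a convex set containing $y$, so pointwise $|\clip(u)-y|\le |u-y|$. This gives $\err_0 \le L_{\cD}(\finit)$, which yields the stated bound in terms of $L_{\cD}(\finit)$. Because $f_0$ and $y$ both lie in $[0,1]$, we also have $(f_0(x)-y)^2\le 1$, so $\err_0\le 1$. (The inequality $L_{\cD}(\finit)/(\alpha-\tau) \le 1/(\alpha-\tau)$ as literally written needs $L_{\cD}(\finit)\le 1$; the robust version is the one via $\err_0\le\min(L_{\cD}(\finit),1)$, and I would state it that way.)

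The main obstacle is only the off-by-one accounting, namely matching the indexing of Algorithm~\ref{algo:boosting} (which outputs $f_{t-1}$) with the count of accepted updates versus oracle calls. I would handle it by noting that each pass through the loop makes exactly one oracle call, and that every pass except possibly the last is followed by a successful check. That pins the counts at $T$ and $T+1$.
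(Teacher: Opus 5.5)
Your proposal is correct and follows the paper's argument: squared error is nonnegative and each accepted update lowers it by at least $\alpha-\tau>0$, so there are at most $\lfloor \err_0/(\alpha-\tau)\rfloor$ accepted updates and one further (rejected) oracle call. Your extra care about the initial error is warranted. The paper's proof writes $\err_0 = L_{\cD}(\finit)\le 1$, but in fact $\err_0 = L_{\cD}(\clip(\finit))\le \min\{L_{\cD}(\finit),1\}$ when $y\in[0,1]$. So the intermediate inequality $\lfloor L_{\cD}(\finit)/(\alpha-\tau)\rfloor \le 1/(\alpha-\tau)$ can fail for a real-valued $\finit$, while both stated bounds on the number of updates and oracle calls still hold via your $\min$ form.
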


\begin{proof}
Consider an instantiation of $\featureboost(\cA,\cD,\finit,\alpha,\tau)$ over $T \in \bZ$ rounds. Note that for any $t \in \{0,1,\ldots,T\}$, all predictions output by $f_t$ must lie in $[0,1]$, since $\finit$ is explicitly clipped to this range and the squared error oracle $\cO$ is invoked with respect the a clipped version of the augmented class $\cA$. Hence, squared error $L_{\cD}(f_t)$ is always bounded within $[0,1]$, and particularly $\err_0 = L_{\cD}(\finit) \le 1$. An update is accepted if the error drops by at least $\alpha-\tau$. Thus, since squared error is non-negative, the algorithm can accept at most $\lfloor L_{\cD}(\finit)/(\alpha-\tau) \rfloor$ updates, where each update includes one oracle call. In the subsequent round, the drop in error resulting from the one additional oracle call must be less than $\alpha-\tau$, since otherwise there would be an additional update, contradicting the fact that squared error must be nonnegative. 
\end{proof}

\section{Analysis for Additive Factorized Hypothesis Classes}
In general, what kinds of guarantees should the above process give us? The challenge in formalizing a multicalibration-style guarantee is that is not clear in full generality how to reason about the boosting step, as the interaction the learner has between the feature set and the previous round's predictions could be complex. All hope, however, is not lost---note that for many natural forms of hypotheses, the optimization process may be \textit{separable} in how they interact with the features and the previous round's predictions. We formalize this by considering hypothesis classes which are \textit{factorized}, i.e. can be written in the form $b(x)c(f(x))$. This, however, is insufficient to give calibration-style guarantees; it will be crucial that our augmented hypothesis class of predictors can also be \textit{shifted} by constant values in order to demonstrate that the risk minimization process precludes the ability to ``patch" any bias in the predictor by subtracting off that bias. We formalize this as follows.

We remark that for a strong type of multicalibration the additive factorized class do not suffice, and in Section \ref{sec:level-set-mc} we consider a different class of update functions.

\begin{definition}[Additive factorized augmented class $\cA$]
Let $\cA$ be a augmented class of models $a: \cX \times \R \rightarrow \R$. We will say that $\cA$ is an additive factorized augmented hypothesis class with respect to hypothesis classes $\cB$ of models $b: \cX \rightarrow \R$ and $\cC$ of models $c: \R \rightarrow \R$ if for all $b \in \cB,c \in \cC$ and $\eta\in \R$ there exists $a\in\cA$ such that $a(x,u) = u + \eta b(x) \cdot c(u)$. 
 \end{definition}

 We now observe that for such classes, we can bound the correlation between the final predictor $f$ output by $\featureboost$ and the class $\cA$. 
 
\begin{theorem}\label{thm:augmented-general}
Let $\cA$ be an additive factorized hypothesis class with respect to classes $\cB,\cC$ . Fix $\finit: \cX \rightarrow \R$. Let $f \leftarrow FeatureBoost(\cA, \cD, \finit, \alpha, \tau)$ for some distribution $\cD$ and hyperparameters $\alpha, \tau \in (0,1]$. Then $\forall M\in\R, b\in\cB$ with $\E[b^2(x)]\leq M$  and for all $c\in\cC$ with $\forall u, |c(u)|\leq 1$ we have
 \begin{align}\label{eq:general-bound}
     \abs{ \E_{(x,y)\sim\cD}[c(f(x))b(x)(y-f(x))]} \le \sqrt{\alpha M}
 \end{align}
\end{theorem}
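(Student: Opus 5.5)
The plan is to use the termination condition of $\featureboost$ (Algorithm~\ref{algo:boosting}) as a certificate that no single additive factorized update can reduce the squared error by much. Then we expand the squared loss of the particular update $u\mapsto u+\eta b(x)c(u)$ around $f$ and optimize over $\eta$.

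\textbf{Step 1 (no update helps by $\alpha$).} The output is $f=f_{t-1}$, and the loop stopped because the last oracle call $a_t\leftarrow\cO_{\clipa}(\cD,f_{t-1},\tau)$ gave $\err_{t-1}-\err_t<\alpha-\tau$. By Definition~\ref{def:oracle}, $\err_t\le \min_{a'\in\clipa}L_{\cD}(a'[f])+\tau$. Combining the two gives
\[
L_{\cD}(f)-\min_{a'\in\clipa}L_{\cD}(a'[f])<\alpha .
\]
Now fix $b\in\cB$ with $\E[b^2]\le M$, $c\in\cC$ with $|c|\le 1$, and $\eta\in\R$. Since $\cA$ is additive factorized, some $a\in\cA$ has $a(x,u)=u+\eta b(x)c(u)$, and $\clip\circ a\in\clipa$. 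Because $y\in[0,1]$, clipping to $[0,1]$ never increases $(\cdot-y)^2$ pointwise. Writing $g(x)=b(x)c(f(x))$, we therefore get
\[
L_{\cD}(f+\eta g)\ge L_{\cD}(\clip(f+\eta g))>L_{\cD}(f)-\alpha .
\]

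\textbf{Step 2 (expand and optimize).} Let $E=\E[g(x)(y-f(x))]$. Then
\[
L_{\cD}(f+\eta g)=L_{\cD}(f)-2\eta E+\eta^2\E[g^2],
\]
and $\E[g^2]\le \E[b^2]\le M$ because $|c|\le 1$. Step 1 thus gives $2\eta E-\eta^2 M<\alpha$ for every $\eta\in\R$.

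If $M=0$, then $g=0$ almost surely, so $E=0$ and the claim is trivial. Otherwise choose $\eta=E/M$, which has the correct sign automatically. This yields $E^2/M<\alpha$, i.e.\ $|E|\le\sqrt{\alpha M}$, which is exactly \eqref{eq:general-bound}.

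\textbf{Main obstacle.} The only delicate point is Step 1, where two things must be handled correctly. First, the $\tau$ slack in the stopping rule must cancel the oracle's $\tau$ suboptimality, leaving a clean margin of $\alpha$. Second, the comparator $u+\eta b(x)c(u)$ lives in $\cA$, whereas the oracle optimizes over $\clipa$. The second point is resolved by the pointwise observation that clipping toward $[0,1]\ni y$ can only decrease squared error, so a lower bound on the clipped loss transfers to the unclipped comparator.
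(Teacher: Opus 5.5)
Your proof is correct and takes essentially the same route as the paper. Both perturb $f$ by $\eta\, b(x)c(f(x))$, expand the squared loss, use that clipping toward $[0,1]\ni y$ cannot increase the loss, and let the stopping rule together with the oracle's $\tau$-slack rule out any improvement of $\alpha$. The only differences are cosmetic: you argue directly with $\eta=E/M$ (and handle $M=0$ separately) rather than by contradiction with $\eta=\sqrt{\alpha/M}$, and you correctly use $y\in[0,1]$ where the paper writes $y\in\{0,1\}$.
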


\begin{proof}
    Let $f$ be the output of the algorithm, and assume towards a contradiction that there exists $b\in\cB$ and  $c\in\cC$ such that  $\E[c^2(u)b^2(x)]\leq M$ for all $x,u\in[0,1]$ and  $|\E[c(u)b(x)(f(x)-y)]| > \sqrt{\alpha M}$.

    Let $\sigma = \text{Sign}(\E[c(u)b(x)(f(x)-y)])$, and let $f'(x) =f(x) - \sigma\eta b(x)c(f(x))$, where $\eta $ is decided later.
We show that $f'$ reduces squared the error by more than $\alpha$, contradicting the algorithm's grantee:
\begin{align*}
    L_\cD(f')-L_\cD(f) \leq& \E[(f'(x)-y)^2] - \E[(f(x)-y)^2]\\=& \E\left[\left(f(x)-\eta \sigma b(x)c(f(x))-y\right)^2\right] 
    - \E[(f(x)-y)^2]\\=&
    \E\left[-2\sigma \eta b(x)c(f(x))(f(x)-y)+\eta^2\sigma^2b^2(x)c^2(f(x))\right] \\\leq&
    \eta^2\E\left[c^2(f(x))b^2(x)\right] -2\eta\sigma  \E\left[ b(x)c(f(x))(f(x)-y)\right]  \\ \leq &
    \eta^2 M -2\eta\sqrt{\alpha M}.
\end{align*}
We choose $\eta = \sqrt{\frac{\alpha}{M}}$ and get that
$ \eta^2 M -2\eta\sqrt{\alpha M} = \alpha - 2\alpha = -\alpha$.
Let $a\in\cA^{\clip}$ be the clipped $f'$, i.e. for every $x,u$ we have $a(x,u)=\clip(f'(x,u))$. Since $y\in\{0,1\}$, the expected loss of $a$ is at most the expected loss of $f'$ and we have $L_\cD(a)\leq L_\cD(f') \leq L_\cD(f)-\alpha$.
The oracle $\cO_{\cA^\clip}$ should return a hypothesis $h\in\cA^\clip$ such that $L_\cD(h)\leq L_\cD(a)+\tau\leq L_\cD(f)-\alpha+\tau$. In this case, the algorithm would not have stopped and output $f$, contradicting our assumption that the algorithms' output $f$ violates (\ref{eq:general-bound}).

\end{proof}

We show in the next section that for different types of bounded function classes $\cC$, the above theorem implies that the model $f$ which the boosting algorithm outputs is a multiaccurate predictor. We then show that a \textit{rounded} version of the final predictor, $\barf = \discrete(f;\cS_K)$, which discretized the predictions to a discrete grid $\{0,1/K,\ldots,1\}$, is multicalibrated. Note this rounding only happens on the final model output, which is a substantial difference from existing algorithms, which require the discretization at every boosting step--there, we only require that the models be clipped to some finite range. We then show that by increasing the expressivity of the class $\cC$, we get stronger forms of multicalibration.

\section{Expressivity of factor class $\cC$ and Calibration Guarantees}\label{sec:express}
We stay in the setting where our augmented hypothesis class $\cA$ is an additive factorized hypothesis class with respect to $\cB$ and $\cC$. Thus, as in the previous section, we can write the final model $f$ which the boosting process generates after $T$ rounds as 
\[
f = a(x,f_{T-1}(x)) = f_{T-1}(x) + \eta b(x) \cdot c(f_{T-1}(x)) \quad\quad  \text{for }b \in \cB, c \in \cC.
\]
Our final goal will be to achieve multicalibration guarantees with respect to $\cB$. Note that this guarantee is with respect to a function class that is a mapping from the features $\cX$ to $\cY$ and thus is not itself defined in any complex or recursive way on the boosting process itself. We will see that by assuming various closure properties on the expressivity of $\cC$, which encodes how the previous round's predictions are used, we can achieve a variety of (multi)calibration guarantees---as long as we round the predictions that $\featureboost$ returns to some final discretization. 

\subsection{When $\cC$ contains constant functions, we 
achieve multiaccuracy with respect to $\cB$}\label{sec:ma}
First, we note that with only a minimal assumption---closure with respect to constant functions---we achieve multi\textit{accuracy} with respect to the class $\cB$. For this to hold, we also need not round the predictions, since the guarantee is not conditional on our predictor's level sets but rather is in expectation overall. 
\begin{lemma}
    Let $\cC$ be a function class containing the constant function $c(u)=1 \ \forall u\in \R$. Let $\cA$ be an additive augmented class with respect to hypothesis classes $\cB$ and $\cC$. Let $\cB_M\subset\cB$ be the set of functions in $\cB$  such that $\E[b^2(x)]\leq M$ for $M \ge 0$. Let $f \leftarrow FeatureBoost(\cA, \cD, f_0, \alpha, \tau)$ for some distribution $\cD$ and hyperparameters $\alpha, \tau \in (0,1]$. Then  $f$ is $\sqrt{\alpha M}$-multiaccurate with respect to $\cB_M$.
    
\end{lemma}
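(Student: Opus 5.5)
The plan is to obtain the lemma as a direct specialization of Theorem~\ref{thm:augmented-general}, choosing the constant factor $c \equiv 1$. By hypothesis $\cC$ contains $c(u)=1$ for all $u$. This function trivially satisfies $|c(u)|\le 1$ for all $u$, so it meets the boundedness requirement of the theorem.

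Fix any $b\in\cB_M$; by definition of $\cB_M$ we have $\E[b^2(x)]\le M$. Applying Theorem~\ref{thm:augmented-general} with this $b$ and with $c\equiv 1$, the expression $c(f(x))$ is identically $1$. Equation~(\ref{eq:general-bound}) then reads
\[
\abs{\E_{(x,y)\sim\cD}[b(x)(y-f(x))]}\le\sqrt{\alpha M}.
\]
Flipping the sign inside the absolute value gives $\abs{\E[b(x)(f(x)-y)]}\le\sqrt{\alpha M}$, which is exactly the $\sqrt{\alpha M}$-multiaccuracy condition for this $b$. Since $b\in\cB_M$ was arbitrary, the lemma follows, and no discretization of $f$ is needed because the condition is unconditional.

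The only step needing care is checking that the theorem's hypotheses really match. The theorem uses only the second-moment bound $\E[b^2]\le M$ together with $|c|\le 1$, so that $\E[c^2(f(x))b^2(x)]\le M$ in the quadratic step. With $c\equiv 1$ this is immediate, and the admissible update $a(x,u)=u+\eta b(x)$ lies in $\cA$ by the additive factorization. I do not expect a real obstacle here. If one preferred a self-contained argument, one could instead rerun the theorem's proof with $f'(x)=f(x)-\sigma\eta b(x)$ and $\eta=\sqrt{\alpha/M}$. That perturbation lowers the loss by $\alpha$, and clipping to $[0,1]$ does not increase it since labels lie in $[0,1]$. This contradicts the stopping rule of $\featureboost$.
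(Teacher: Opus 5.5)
Your proposal is correct and is essentially the paper's proof: the paper also specializes Theorem~\ref{thm:augmented-general} to the constant factor $c\equiv 1$. This gives $\abs{\E[b(x)(y-f(x))]}\le\sqrt{\alpha M}$ for every $b$ with $\E[b^2(x)]\le M$, which is exactly multiaccuracy with respect to $\cB_M$.
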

\begin{proof}
    From Theorem \ref{thm:augmented-general} with $c=1$, we have that \[
    | \E[b(x)(y-f(x))]| \le \sqrt{\alpha M}
    \]
    for all $b\in\cB$ such that $\E[b^2(x)]\leq M$. This directly implies multiaccuracy with respect to $\cB_M$.
\end{proof}

\subsection{When $\cC$ contains intervals,we get multicalibration with respect to $\cB_M$} 
\label{sec:interval-mc}

We now show that if we round the predictor output by $\featureboost$ into some $r$ bins following the discretization process defined in \ref{def:discretization} we achieve canonical $K_1$ multicalibration. Here, this multicalibration guarantee is not with respect to $\cB$--for arbitrary real-valued function classes that are closed under scaling, this is impossible to achieve for approximate multicalibration. Instead, we give the guarantee for a bounded subset $\cB_M$, of $\cB$. We later will show in section \ref{sec:rounding} that this post-hoc rounding process is necessary.

\begin{lemma}\label{lem:mc}
   Let $M\ge0$ and let $\cB_M\subset\cB$ be the set of all $b\in\cB$ such that $\E[b^2(x)]\leq M$. Let $\cS$ be a partition of $[0,1]$ into $K$ equal-sized bins. Say $\cC$ is a set of functions containing indicators for all $S\in \cS$, and let $\cA$ be an additive factorized augmented class with respect to $\cB$ and $\cC$. Let $f \leftarrow FeatureBoost(\cA, \cD, f_0, \alpha, \tau)$ for some distribution $\cD$ and hyperparameters $\alpha, \tau \in (0,1]$. Then  $ \bar f = \discrete(f;\cS)$, the discretization of $f$ to $\cS$ is $r\sqrt{\alpha M}+\frac{\sqrt{M}}{K}$-multicalibrated with respect to $\cB_M$.

\end{lemma}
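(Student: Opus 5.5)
We apply Theorem~\ref{thm:augmented-general} once for each bin, and then pay for the final discretization with a Cauchy--Schwarz bound. The statement's ``$r$'' is presumably the number of bins, so we read it as $K$.

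\textbf{Step 1: one bound per bin.} Fix $b\in\cB_M$ and a bin $S\in\cS$. Since $c_S=\1(u\in S)\in\cC$ and $|c_S|\le 1$, Theorem~\ref{thm:augmented-general} gives
\[
\abs{\E[\1(f(x)\in S)\,b(x)(y-f(x))]}\le\sqrt{\alpha M}.
\]
The level sets of $\barf$ are exactly the events $\{f(x)\in S\}$; if two bins happen to have the same conditional mean, merging them only helps by the triangle inequality. Hence
\[
K_1(\cD,\barf)=\sum_{S\in\cS}\abs{\E[\1(f(x)\in S)\,b(x)(\barf(x)-y)]}.
\]

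\textbf{Step 2: split each term.} Write $\barf-y=(f-y)+(\barf-f)$. The first piece, summed over the $K$ bins, contributes at most $K\sqrt{\alpha M}$ by Step 1.

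\textbf{Step 3: the discretization term.} For the second piece, note that on the event $f(x)\in S$ both $f(x)$ and $\barf(x)=\E[f\mid f\in S]$ lie in the interval $S$, which has length $1/K$. So $|\barf(x)-f(x)|\le 1/K$ everywhere, and summing over bins gives
\[
\sum_S\abs{\E[\1_S\, b(\barf-f)]}\le \frac{1}{K}\,\E|b(x)|\le \frac{1}{K}\sqrt{\E[b^2(x)]}\le \frac{\sqrt{M}}{K},
\]
using Jensen/Cauchy--Schwarz in the middle step. Adding this to Step 2 gives the bound $K\sqrt{\alpha M}+\sqrt{M}/K$.

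\textbf{Main obstacle.} Step 3 is the delicate part. It needs the bound $\E|b|\le\sqrt M$ rather than a pointwise bound on $b$, and it needs the observation that the conditional mean stays inside its bin. A naive termwise bound would instead pick up a factor of $K$. An alternative is to use the exact identity $\E[\1_S(\barf-f)]=0$, but that does not handle the weight $b$, so the $1/K$ pointwise argument is the one to use.
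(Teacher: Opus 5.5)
Your proposal is correct and matches the paper's proof step for step. It applies Theorem~\ref{thm:augmented-general} per bin with $c(u)=\1(u\in S)$, splits $\barf-y=(f-y)+(\barf-f)$ by the triangle inequality, and uses the pointwise $1/K$ bound with $\E|b(x)|\le\sqrt{M}$ to obtain $K\sqrt{\alpha M}+\sqrt{M}/K$; so reading $r$ as $K$ agrees with what the paper actually proves. Your remark about merging bins is harmless but unnecessary: each $v_S$ lies in its own bin and the bins are disjoint, so distinct bins of positive mass always have distinct values.
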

\begin{proof}
    Let $f$ be the function satisfying the conditions of the lemma and $\bar f$ be the discretization of $f$ to $\cS$, as in Definition \ref{def:discretization}. Fix any function $b\in\cB_M$. 
    In this notation, calibration error of $\bar f$ on $b$ is given by:
    \begin{align*}
        \sum_{S\in\cS}  \abs{\E[b(x)(y-v_S)\1(\bar f(x)= v_S)]} =& \sum_{S\in\cS}  \abs{\E[b(x)(y-v_S)\1( f(x)\in S)]
        },
    \end{align*}
    where for each $S$, $v_S=\E[f(x)|f(x)\in S]$.
    For every $S\in\cS$ we have that
    \begin{align*}
         \abs{\E[b(x)(y-v_S)\1 (f(x)\in S)]} =& \abs{\E[b(x)(y-f(x)+f(x)-v_S)\1 (f(x)\in S)]} \\\leq&
         \abs{\E[b(x)(y-f(x))\1 (f(x)\in S)]} + \abs{\E[b(x)(f(x)-v_S)\1 (f(x)\in S)]}.
    \end{align*}
    From Theorem \ref{thm:augmented-general}  with $c(u) = \1(u\in S)\in \cC$, we get that $\abs{\E[b(x)(y-f(x))\1 (f(x)\in S)]}\leq \sqrt{\alpha M}$.
    For the second element in the sum, we note that $\abs{f(x)-v_S}\leq 1/K$, as this is the width of each bin in $\cS$. Therefore, we find that
    \begin{align*}
        \abs{\E[b(x)(f(x)-v_S)\1 (f(x)\in S)]}\leq \frac{1}{K} \abs{\E[\abs{b(X)}\1 (f(x)\in S)]}.
    \end{align*}

    Combining everything together:
    \begin{align*}
        \sum_{S\in\cS}  \E[b(x)(y-v_S)\1(\bar f(x)= v_S)] \leq& \sum_{S\in\cS}\sqrt{\alpha M} + \frac{1}{K}\E[\abs{b(X)}\1 (f(x)\in S)]\\
        \leq & K\sqrt{\alpha M} + \frac{1}{K}\E[\abs{b(X)}] \\ \leq&
        K\sqrt{\alpha M}+\frac{\sqrt{M}}{K}.
    \end{align*}
    Where we bound $\E[\abs{b(X)}]\leq \sqrt{\E[\abs{b^2(X)}]}=\sqrt{M}$.
    
\end{proof}

\begin{remark}
We note that when running the boosting algorithm, we should choose the class $\cC$ to contain discretization for small enough bins, then choose $\alpha$ to ensure that the total calibration error is small. If we want to be $\eps$-multicalibrated with respect to an $M$-bounded class, we can choose $K=\left\lceil \frac{\sqrt{M}}{2\eps} \right\rceil$ and choose $\alpha = \frac{\eps^2}{4M K^2} \leq \frac{\eps^4}{2 M^2}$. In the next section we show that a richer class of functions can result in better dependency.  
\end{remark}

\subsection{When $\cC$ contains weighted sums of indicators, the multicalibration error is quadratic}\label{sec:signed-intervals-mc}

In the previous section, our error guarantee was quartic in its multicalibration error. Here, we show that be using a richer set of function as $\cC$ we can avoid the $K$ factor to the error parameter, and reduce the dependency on $\eps$ from quartic to quadratic. Thus, for an instantiation of the algorithm with stopping parameter $\alpha$ we will achieve stronger error guarantees than in Section \ref{sec:interval-mc}. First, we build some additional notation so that we can consider models in $\cC$ which are not just effectively the level sets of a predictor but instead contain \textit{weighted sums} of these buckets. 

\begin{definition}\label{def:bins-signs}
    Let $\cS$ be a set of disjoint bins in $[0,1]$, denoted by $S_1,\ldots,S_K$. For every $\sigma\in\{-1,1\}^{K}$ we define $g_\sigma:[0,1]\rightarrow [-1,1]$  by: $g_\sigma(u)=\sum_j\1(u\in S_j)\sigma_j$
    Denote the set of signed bins by:
    \[\Psi = \left\{ g_\sigma |\sigma\in\{-1,1\}^K\right\}.  \]
\end{definition}
We note that since  $\cS$ contains disjoint intervals, $|c(u)|\leq 1$ for every $u$, as required in the conditions of Theorem \ref{thm:augmented-general}.
\begin{lemma}
\label{lemma:boosting-mc-value}
   Let $\cS$ be a partition of $[0,1]$ into $K$ equal-sized bins and let $M\ge0$. Let $\cC$ be a set of functions containing the set $\Psi$ of signed bins for $\cS$, as in Definition \ref{def:bins-signs}. Let $\cB_M$ be the set of all $b\in\cB$ such that $\E[b^2(x)]\leq M$. Let $f \leftarrow FeatureBoost(\cA, \cD, f_0, \alpha, \tau)$ for some distribution $\cD$ and hyperparameters $\alpha, \tau \in (0,1]$. Then $\bar f$, the discretization of $f$ to $\cS$, is $\sqrt{\alpha M}+ \frac{\sqrt{M}}{K}$-multicalibrated with respect to $\cB_M$.
\end{lemma}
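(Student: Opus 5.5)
Fix $b\in\cB_M$. As in the proof of Lemma \ref{lem:mc}, write $v_S=\E[f(x)\mid f(x)\in S]$, so that the multicalibration error of $\bar f$ on $b$ is
\[
\sum_{S\in\cS}\abs{\E[b(x)(y-v_S)\1(f(x)\in S)]}.
\]
I will again split each summand by the triangle inequality into a residual term $\abs{\E[b(x)(y-f(x))\1(f(x)\in S)]}$ and a rounding term $\abs{\E[b(x)(f(x)-v_S)\1(f(x)\in S)]}$. The rounding terms are handled exactly as before. Since $\abs{f(x)-v_S}\le 1/K$ on bin $S$, the rounding terms sum to at most $\frac1K\E[\abs{b(x)}]\le \sqrt{M}/K$ by Cauchy--Schwarz.

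The gain over Lemma \ref{lem:mc} comes from not bounding the residual terms one bin at a time, which costs a factor $K$. Instead I bound their sum with a single application of Theorem \ref{thm:augmented-general}. For each bin $S_j$, let
\[
\sigma_j=\sgn\left(\E[b(x)(y-f(x))\1(f(x)\in S_j)]\right)\in\{-1,1\},
\]
choosing $+1$ when the expectation is zero. Then $g_\sigma\in\Psi\subseteq\cC$, and linearity of expectation gives
\[
\sum_j \abs{\E[b(x)(y-f(x))\1(f(x)\in S_j)]}=\E[g_\sigma(f(x))\,b(x)(y-f(x))].
\]
The bins are disjoint and $f$ takes values in $[0,1]$, which the partition covers. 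Hence $\abs{g_\sigma(u)}\le 1$ for every $u$, as noted after Definition \ref{def:bins-signs}. Since also $\E[b^2(x)]\le M$, Theorem \ref{thm:augmented-general} with $c=g_\sigma$ bounds the right-hand side by $\sqrt{\alpha M}$.

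Adding the two contributions gives $\sqrt{\alpha M}+\sqrt{M}/K$, as claimed.

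The only real step is the sign-vector trick, and I expect no serious obstacle. Two points deserve care. First, $\sigma$ depends on $b$ and on the output $f$. This is harmless, because Theorem \ref{thm:augmented-general} holds simultaneously for every $c\in\cC$ at termination. Second, $\abs{c}\le1$ must hold on the whole range of $f$. This is guaranteed because clipping keeps $f$ in $[0,1]$ and $\cS$ partitions $[0,1]$.
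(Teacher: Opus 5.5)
Your proposal is correct and follows the paper's proof essentially step for step: the same triangle-inequality split into residual and rounding terms, the same $\sqrt{M}/K$ bound on the rounding terms via Cauchy--Schwarz, and the same sign-vector choice of $g_\sigma\in\Psi$, which collapses the sum of residual terms into a single application of Theorem \ref{thm:augmented-general}. Your remarks on breaking ties in the sign and on $|g_\sigma|\le 1$ over the range of $f$ are small details the paper leaves implicit.
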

\begin{proof}
    Fix any function $b\in\cB_M$. 
    In this notation, calibration error of $\bar f$ on $b$ is given by
    \begin{align}\label{eq:calib-error}
        \sum_{S\in\cS}  \E[b(x)(y-v_S)\1(\bar f(x)= v_S)] =& \sum_{S\in\cS}  |\E[b(x)(y-v_S)\1( f(x)\in S)]|.
    \end{align}

\noindent From the triangle inequality, for every $S\in\cS$ we have that
\begin{align*}
    \abs{\E[b(x)(y-v_S)\1( f(x)\in S)]}\leq \abs{\E[b(x)(y-f(x))\1( f(x)\in S)]}+\abs{\E[b(x)(f(x)-v_S)\1( f(x)\in S)]}.
\end{align*}
Therefore, we can bound the calibration error in Equation (\ref{eq:calib-error}) by:
\begin{align}\label{eq:calib-error-2}
        \sum_{S\in\cS}  \abs{\E[b(x)(y-v_S)\1(\bar f(x)= v_S)]} \leq& \sum_{S\in\cS}\abs{\E[b(x)(y-f(x))\1( f(x)\in S)]}\nonumber\\&+\sum_{S\in\cS}\abs{\E[b(x)(f(x)-v_S)\1( f(x)\in S)]}
    \end{align}

\noindent For every bin $S\in\cS$, let
\begin{align*}
\sigma_S &= \sign\left(\E[b(x)(y-f(x))\1(f(x)\in S)]\right).
\end{align*}
Then, the following function $g_\sigma$ is in $\cC$:
\[
g_\sigma(u)=\sum_{S\in\cS}\sigma_S\1(u\in S).
\]
Using $g_\sigma$ we can write
\begin{align*}
    \sum_{S\in\cS}\abs{\E[b(x)(y-f(x))\1(f(x)\in S)]}= \E[g_\sigma(f(x))b(x)(y-f(x))]\leq \sqrt{\alpha M},
\end{align*}
where the inequality is by applying Theorem \ref{thm:augmented-general} on the functions $g_\sigma\in\cC,b\in\cB$. Then, using the above bound in calibration error in Equation (\ref{eq:calib-error-2}) we find that
\begin{align}\label{eq:calib-error3}
        \sum_{S\in\cS}  \E[b(x)(y-v_S)\1(\bar f(x)= v_S)] \leq& \sqrt{\alpha M} + \sum_{S\in\cS}\abs{\E[b(x)(f(x)-v_S)\1( f(x)\in S)]} \\ \leq&
        \sqrt{\alpha M} + \frac{1}{K}\E[|b(x)|]\leq \sqrt{\alpha M} + \frac{\sqrt{M}}{K}.
    \end{align}
\end{proof}

\begin{remark}
For this function class $\cC$ that contains the set of signed bins $\Psi$, if we want to have a multicalibration error $\eps$, we need to pick $\alpha = \frac{\eps^2}{2M}$.
\end{remark}

\subsection{When $\cA$ can preform an AND of two intervals indicators, we get level-set multicalibration}
\label{sec:level-set-mc}
We now move away from the additive function class restriction on the augmented class $\cA$ to show that if we assume richer forms of updates are possible, we can get swap-regret guarantees with respect to an even stronger benchmark than standard multicalibration. Standard multicalibration can be thought of as a form of swap regret with respect to the target class $\cB$, in the sense that it guarantees that if you \textit{swap} the level sets of your predictor out for a predictor in $\cB$, you cannot improve in error. A stronger claim would be that you could not improve, even if you \textit{simultaneously} condition on the level sets of any specific model in $\cB$. Here, we show this form of claim, also called \textit{level set} multicalibration.

Group level-set multicalibration is defined with respect to a set of functions with discrete outputs. We start by defining it with respect to a predictor with a discrete output set $R\subseteq [0,1]$ and function class with a discrete output set $V$.
\begin{definition}\label{discrete-level-set}
    Let $\cB$ be a set of function with a discrete range $b:\cX\rightarrow V$. A predictor $f:\cX\rightarrow R$ is level-set multicalibrated with respect to $\cB$, distribution $\cD$ and error $\eps\in[0,1]$ if for all $b\in\cB$
    \begin{align*}
        \sum_{r\in R,u\in V}\Pr_{x\sim \cD}[f(x)=r,b(x)=v]\abs{\E_{(x,y)\sim\cD}[r-y|f(x)=r,b(x)=v]}\leq\eps.
    \end{align*}
\end{definition}
Level-set multicalibration has application for omnipredictors, where it allows handling the stronger types of constraints \cite{hu2023omnipredictors}, and related notion also appears in \cite{gopalan2022loss}.  

We can extend the definition to continuous hypothesis class and predictor using bins.
\begin{definition}\label{binned-level-set}
    Let $\cB$ be a set of function $b:\cX\rightarrow [0,1]$. A predictor $f:\cX\rightarrow R$ is level-set multicalibrated with respect to $\cB$ distribution $\cD$, bins $\cS'$ and error $\eps\in[0,1]$ if
    \begin{align*}
        \sum_{S'\in\cS',r\in R}\Pr_{x\sim \cD}[f(x)=r,b(x)\in S']\abs{\E_{(x,y)\sim\cD}[r-y|f(x)=r,b(x)\in S']}\leq\eps.
    \end{align*}
\end{definition}
We define the class of functions $\cA$ for this section of the paper. Unlike in the previous sections, it is not a multiplication of $b\in\cB$.

\begin{definition}\label{def:threshold-and}
    A set of hypothesis $\cA:\cX\times[0,1]\rightarrow \R$ is an AND of thresholds of a hypothesis class $\cB$ binnings $\cS,\cS'$ if for every $b\in\cB$, $S\in\cS,S'\in\cS'$ and $\eta\in\R$ there is a function $a\in\cA$ such that:
    \begin{align*}
        a(x,u)=u+\eta\1(b(x)\in S')\cdot\1(u\in S).
    \end{align*}
\end{definition}

\begin{lemma}\label{lemma:level-set}
    Let $\cS$ be a partition of $[0,1]$ into $K$ equal-sized bins and $\cS'$ be another partition of $[0,1]$ to bins. Let $\cB$ be a set of functionc $b\in\cB$ is $b:\cX\rightarrow[0,1]$.
    Let $\cA$ be an AND of thresholds of hypothesis class $\cB$ with respect to bins $\cS,\cS'$ as in Definition \ref{def:threshold-and}.
    
    Let $f \leftarrow FeatureBoost(\cA, \cD, f_0, \alpha, \tau)$ for some distribution $\cD$ and hyperparameters $\alpha, \tau \in (0,1]$. Then  $ \bar f$, the discretization of $f$ to $\cS$ is $\sqrt{\alpha} K |\cS'|+1/K$-multiacalibrated with respect to $\cB$.
\end{lemma}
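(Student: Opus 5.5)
The plan is to repeat the argument behind Theorem \ref{thm:augmented-general}, but with the AND-of-thresholds update of Definition \ref{def:threshold-and} as the perturbing hypothesis. The error is then split exactly as in Lemma \ref{lem:mc}. Everything rests on one per-cell step.

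\emph{Per-cell bound.} Fix $b\in\cB$, $S\in\cS$, $S'\in\cS'$, and write $\chi(x)=\1(b(x)\in S')\1(f(x)\in S)$. I will show
\[
\abs{\E[\chi(x)(y-f(x))]}\le\sqrt{\alpha}.
\]
Suppose instead the left side exceeds $\sqrt{\alpha}$. Set $\sigma$ to be its sign and take $f'(x)=f(x)+\sigma\eta\chi(x)$. This is realized by some $a\in\cA$ of the form $a(x,u)=u+\sigma\eta\1(b(x)\in S')\1(u\in S)$. Expanding the square gives
\[
L_\cD(f')-L_\cD(f)=\eta^2\E[\chi]-2\eta\sigma\E[\chi(y-f)]\le \eta^2-2\eta\sqrt{\alpha},
\]
since $\chi^2=\chi\le 1$. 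Choosing $\eta=\sqrt{\alpha}$ gives a decrease of at least $\alpha$. Clipping to $[0,1]$ does not increase the loss because $y\in[0,1]$. The oracle with tolerance $\tau$ would then find a hypothesis decreasing the loss by at least $\alpha-\tau$. This contradicts termination, as in the proof of Theorem \ref{thm:augmented-general}. (Here the "$M$" is effectively $1$, since $\chi$ is an indicator.)

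\emph{Assembling.} The level sets of $\bar f$ are the events $f(x)\in S$, with value $v_S=\E[f\mid f\in S]$. The level-set calibration error of Definition \ref{binned-level-set} is therefore
\[
\sum_{S,S'}\abs{\E[\1(f\in S,b\in S')(y-v_S)]}.
\]
For each term I add and subtract $f(x)$ and apply the triangle inequality. The first part, $\abs{\E[\chi(y-f)]}$, is at most $\sqrt\alpha$ by the per-cell bound. Summed over the $K|\cS'|$ cells, this gives $\sqrt{\alpha}K|\cS'|$. For the second part, $\abs{f-v_S}\le 1/K$ on the event $f\in S$. Hence the remainders total at most $\frac1K\sum_{S,S'}\Pr[f\in S,b\in S']=1/K$, because the cells partition the space. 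Together these give the stated $\sqrt{\alpha}K|\cS'|+1/K$.

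\emph{Main obstacle.} There is no real difficulty, since the class has only unsigned indicator updates. Without signed sums as in Lemma \ref{lemma:boosting-mc-value}, we pay the factor $K|\cS'|$ from summing per-cell bounds. The only care needed is that the class in Definition \ref{def:threshold-and} must allow negative $\eta$ to realize $\sigma\eta$; it does, since $\eta\in\R$. One should also state the conclusion as level-set multicalibration in the sense of Definition \ref{binned-level-set}.
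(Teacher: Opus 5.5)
Your proposal is correct and follows the paper's own route. Your per-cell bound is precisely Theorem~\ref{thm:boosting-and}: the same AND-of-thresholds perturbation with $\eta=\sqrt{\alpha}$, clipping, and contradiction with the stopping rule. Your assembly is the same add-and-subtract-$f(x)$ triangle inequality, giving $\sqrt{\alpha}$ per cell over $K|\cS'|$ cells plus a total discretization term of $1/K$. Your update $f'=f+\sigma\eta\chi$ with $\sigma=\sign(\E[\chi(y-f)])$ also has the correct direction, whereas the paper's written proof of Theorem~\ref{thm:boosting-and} subtracts $\sigma\eta\1(\cdot)$ with $\sigma$ the sign of $\E[y-f\mid\cdot]$, a sign slip your version avoids.
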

\begin{proof}
   Let $f$ be the function satisfying the conditions of the lemma and $\bar f$ be the discretization of $f$ to $\cS$, as in Definition \ref{def:discretization}. 
    
    Fix any function $b\in\cB$, the level-set calibration error of $\bar f$ on $b$ is given by:
    \begin{align*}
        \sum_{S'\in\cS',r\in R}\Pr_{x\sim \cD}\left[\bar f(x)=r,b(x)\in S'\right]\abs{\E_{(x,y)\sim\cD}[r-y|\bar f(x)=v,b(x)\in S']} \\= \sum_{S\in\cS,S'\in\cS'}  \Pr_{x\sim \cD}\left[f(x)\in S,b(x)\in S'\right]\abs{\E_{(x,y)\sim\cD}[v_{S}-y|f(x)\in S,b(x)\in S']}
    \end{align*}
    For every $S\in\cS,S'\in\cS'$ we have that
    \begin{align*}
         \abs{\E_{(x,y)\sim\cD}[v_{S}-y|f(x)\in S,b(x)\in S']} =& \abs{\E_{(x,y)\sim\cD}[v_{S}-y + f(x)-f(x)|f(x)\in S,b(x)\in S']} \\\leq&
         \abs{\E[v_{S}-f(x)|f(x)\in S,b(x)\in S']} \\&+ \abs{\E[
         f(x)-y|f(x)\in S,b(x)\in S']}.
    \end{align*}
    From Theorem \ref{thm:boosting-and} on $S,S',b$, $\abs{\E[(y-f(x))\1 (f(x)\in S,b(x)\in S')]}\leq \sqrt{\alpha}$.
    For the second element in the sum, we note that $\abs{f(x)-v_S}\leq 1/K$, as this is the width of each bin in $\cS$. Therefore, we have
    \begin{align*}
        \abs{\E[(f(x)-v_S)\1 (f(x)\in S,b(x)\in S')]}\leq \frac{1}{K}\Pr[f(x)\in S,b(x)\in S']
    \end{align*}

    Combining everything together:
    \begin{align*}
        &\sum_{S\in\cS,S'\in\cS'}\Pr_{x\sim \cD}\left[f(x)\in S,b(x)\in S'\right]\abs{\E_{(x,y)\sim\cD}\left[v_{S}-y|f(x)\in S,b(x)\in S'\right]} \\&\leq \sum_{S\in\cS,S'\in\cS'}\sqrt{\alpha} + \frac{1}{K}\Pr[f(x)\in S,b(x)\in S']\\
        &\leq  K|\cS'|\sqrt{\alpha } + \frac{1}{K}.
    \end{align*}

\end{proof}

\begin{theorem}[Boosting Guarantee for an threshold AND class]\label{thm:boosting-and}
Let $\cS,\cS'$ be two partitions of $[0,1]$ into disjoint intervals.
    Let $\cA$ be an AND of thresholds of a hypothesis class $\cB,b\in\cB$ is $b:\cX\rightarrow[0,1]$ and partitions $\cS,\cS'$.  Fix $f_0: \cX \rightarrow \R$. Let $f \leftarrow FeatureBoost(\cA, \cD, f_0, \alpha, \tau)$ for some distribution $\cD$ and hyperparameters $\alpha, \tau \in (0,1]$. Then $\forall b\in \cB,S\in\cS,S'\in\cS'$,
    \[
    | \E[(y-f(x))\1(b(x)\in  S',f(x)\in S)]| \le \sqrt{\alpha}
    \]
\end{theorem}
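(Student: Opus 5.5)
The proof will mirror that of Theorem \ref{thm:augmented-general}: argue by contradiction, using the squared-loss improvement available from a single update in $\cA$. Let $f$ be the output of $\featureboost$. Suppose some $b\in\cB$, $S\in\cS$, $S'\in\cS'$ satisfy
\[
\abs{\E[(y-f(x))\1(b(x)\in S',f(x)\in S)]}>\sqrt{\alpha}.
\]
Write $I(x)=\1(b(x)\in S')\1(f(x)\in S)\in\{0,1\}$. Set $\sigma=\sgn(\E[(y-f(x))I(x)])$, and consider the update $a(x,u)=u+\sigma\eta\1(b(x)\in S')\1(u\in S)$. This update lies in $\cA$ by Definition \ref{def:threshold-and}. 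The induced predictor is $f'(x)=a(x,f(x))=f(x)+\sigma\eta I(x)$.

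Next, expand the loss difference:
\[
L_\cD(f')-L_\cD(f)=\E\left[2\sigma\eta I(x)(f(x)-y)+\eta^2 I(x)^2\right]\le \eta^2\Pr[I(x)=1]-2\eta\sqrt{\alpha}.
\]
This step uses $I^2=I$ and the assumption on the correlation. Bound $\Pr[I=1]\le 1$ and choose $\eta=\sqrt{\alpha}$. This gives a decrease of at least $\alpha$, in fact strictly more, since the assumed inequality is strict. Here the constant $M$ of Theorem \ref{thm:augmented-general} is effectively $1$, because the multiplier is an indicator bounded by $1$.

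Now pass to the clipped class. Let $a^{\clip}=\clip(a)\in\clipa$. Since $y\in[0,1]$, clipping a prediction to $[0,1]$ cannot increase its squared distance to $y$ pointwise, so $L_\cD(a^{\clip}[f])\le L_\cD(f')< L_\cD(f)-\alpha$. The $\tau$-approximate oracle $\cO_{\clipa}(\cD,f,\tau)$, called in the round after $f$ was produced, must therefore return some $h$ with $L_\cD(h)<L_\cD(f)-\alpha+\tau$. The error drop is then more than $\alpha-\tau$, so the while-loop condition holds and another update would be accepted. This contradicts $f$ being the returned predictor.

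The main obstacle is bookkeeping the indexing of Algorithm \ref{algo:boosting}. It outputs $f_{t-1}$ after the test on $\err_{t-1}-\err_t$ fails, so I need to check that the failing oracle call is exactly the call made on the output predictor $f=f_{t-1}$. This is the same accounting as in Theorem \ref{thm:augmented-general}, so I expect to invoke it in the same way. A second point to handle is that $f$ takes values in $[0,1]$, so $u=f(x)$ always falls in some bin of $\cS$. Definition \ref{def:threshold-and} is stated for $u\in[0,1]$, which matches this. No Cauchy--Schwarz step is needed, unlike in the factorized case.
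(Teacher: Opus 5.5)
Your proposal is correct and follows essentially the same route as the paper. Both argue by contradiction using the single update $u\mapsto u+\sigma\eta\1(b(x)\in S')\1(u\in S)\in\cA$, expand the squared loss with $\eta=\sqrt{\alpha}$ to get a drop exceeding $\alpha$, and then clip (valid since $y\in[0,1]$) and invoke the $\tau$-approximate oracle on the output $f=f_{t-1}$ to contradict the stopping rule. Your pairing $\sigma=\sgn(\E[(y-f(x))I(x)])$ with $f'=f+\sigma\eta I$ keeps the signs consistent, more cleanly than the paper's written version, and your indexing check of Algorithm~\ref{algo:boosting} is right, since the final rejected oracle call is made on $f_{t-1}$.
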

\begin{proof}
    Let $f \leftarrow FeatureBoost(\cA, \cD, f_0, \alpha, \tau)$, and assume towards a contradiction that there exists $b\in\cB$,  $S\in\cS$ and $S'\in\cS'$ such that  $| \E[(y-f(x))\1(b(x)\in  S',f(x)\in S)]|>\sqrt{\alpha}$. 

    Let $\sigma = \text{Sign}(\E[(y-f(x))|b(x)\in  S',f(x)\in S])$, and let $f'(x) =f(x) - \sigma\eta \1(b(x)\in S',u\in S)$, where $\eta\in[0,1]$ is decided later. Then $f'\in \cA$.  
We show that $f'$ reduces the error by more than $\alpha$, contradicting the algorithm's grantee:
\begin{align*}
    L_\cD(f')<L_\cD(f) \leq& \E[(f'(x)-y)^2] - \E[(f(x)-y)^2]\\=& \E\left[\left(f(x)-\eta\sigma -y\right)^2 \1(f(x)\in S,b(x)\in S')\right] 
    - \E[(f(x)-y)^2\1(f(x)\in S,b(x)\in S')]\\=&
    \E[(-2\eta\sigma(f(x)-y) + \eta^2)\1(f(x)\in S,b(x)\in S')] \\\leq&
    \eta^2\Pr\left[f(x)\in S,b(x)\in S'\right] -2\eta\E[(f(x)-y)\1(f(x)\in S,b(x)\in S')]  \\ \leq &
    \eta^2  -2\eta\sqrt{\alpha}.
\end{align*}
We choose $\eta = \sqrt{\alpha}$ and get that
$ \eta^2  -2\eta\alpha = \alpha - 2\alpha = -\alpha$

Therefore, there exists $f'\in\cA$ such that $L_\cD(f')<L_\cD(f)-\alpha$. Let $a\in\cA^{\clip}$ be the clipped $f'$, i.e. for every $x,u$ we have $a(x,u)=\clip(f'(x,u))$. Since $y\in\{0,1\}$, the expected loss of $a$ is at most the expected loss of $f'$ and we have $L_\cD(a)\leq L_\cD(f)-\alpha$. The oracle returns a hypothesis that is $\tau$ away from the minimum-loss hypothesis. Therefore, the oracle returns a hypothesis $a'$ such that $L_\cD(a)\leq L_\cD(f)-\alpha+\tau$, and the algorithm should not have stopped and returned $f$.
\end{proof}

\section{Necessity of Discretization}
\label{sec:rounding}
One might wonder if the rounding step at the end of the algorithm is strictly necessary, or if one might be able to reason about the boosting process's guarantees for a \textit{continuous} notion of calibration while omitting the rounding step. Here, we show that this is this case: in order to get bounded continuous calibration error, rounding is necessary. 

In general, it is known that functions which are close in (e.g. $\ell_1$) distance to calibrated functions can have large continuous calibration error (defined in \ref{def:cont-mc}), as e.g. \cite{stoc23distancetocalibration} show. But the examples of such behavior are quite noisy---would such behavior happen in reality, in a boosting algorithm such as ours, thus necessitating the multicalibration guarantee to be with respect to a rounded version of the predictor as we provide in section \ref{sec:expressiveness-of-C}?

In this section we show that this is the case: our boosting algorithm may output a function which is close to a calibrated predictor in terms of $\ell_1$ distance but which has large continuous calibration error. We do so under an assumption on the weak learner oracle used: if the oracle returns the closest function $f_i\in\cF$ in each step, the unrounded output will have a large continuous calibration error. As a consequence, if we want to ensure a small continuous calibration error, the final output must be rounded to a set discretization.

\begin{definition}[Continuous expected calibration error]\label{def:cont-cal}
    The expected calibration error of a continuous $f:\cX\rightarrow[0,1]$ is
    \begin{align*}
        ECE(f)=\int_0^1 \abs{\E_{(x,y)\sim\cD}\left[\left. f(x)-y \right|f(x)=r\right]} d\mu_r,
    \end{align*}
    where $\mu_r$ is the distribution of $f(x)=r$ for $x\sim\cD$.
\end{definition}

 We can similarly define a continuous notion of multicalibration:
\begin{definition}[Multicalibration]\label{def:cont-mc}
    A continuous $f:\cX\rightarrow[0,1]$ is multicalibrated  with respect to a set of functions $\cB$
    under distribution $\cD$  with error $\eps$ if for all $b\in\cB$, 
    \begin{align*}
        \int_0^1 \abs{\E_{(x,y)\sim\cD}\left[\left. b(x)(f(x)-y) \right|f(x)=r\right]} d\mu_r \leq\eps,
    \end{align*}
    where $\mu_r$ is the distribution of $f(x)=r$ for $x\sim\cD$.
\end{definition}
We remark that even without the discretization step, the boosted function $f$ output by Algorithm \ref{algo:boosting} is close in Hamming distance to a function with small calibration error, as it is close to its discretized version $\bar f$, which has a small calibration error.\footnote{This kind of relationship is the observation underpinning the \textit{distance to calibration} literature, e.g. \cite{stoc23distancetocalibration, qiaoOnlineDistance, eshwarDistanceToCalibration}} But, since the $\ell_1$ calibration distance integrate over a continuous range, we might get a large ECE if we do not discretize, as the following example shows.

Our algorithm is a general boosting algorithm that uses as a subroutine a loss-minimization oracle with an additive slackness. We do not have any assumptions about $\cO_\cA$ other than loss-minimization, and it might preform the rounding step itself. Our counter-example shows that if the oracle $\cO_\cA$ chooses the minimum-error function at each step, then the output of the algorithm has a large calibration error.

The example is a distribution $\cD$ over $[0,1]\times\{0,1\}$, such that the probability of $y=1$ depends on $x$ but has additional high-frequency noise. Our function class cannot capture this noise, and therefore without discretization, there is a large ECE. In our example, $1/Q$ is the frequency of the noise and $\zeta$ is its strength.
\begin{example}\label{example-noisy}
    Let $\cX = [0,1]$, and $Q\in\mathbb{N}$, $\zeta\in[0,1]$.
    Let $\cD\subset\cX\times\{0,1\}$ be the distribution defined by:
    \begin{enumerate}
        \item $x$ is sampled uniformly from $[0,1]$.
        \item For $x\in\left[\frac{i}{Q},\frac{i+1}{Q}\right)$, let $\zeta'=\min\{\zeta,x,1-x\}$. For an even $i$, we set $y\sim\text{Ber}(x+\zeta')$, otherwise set $y\sim\text{Ber}(x-\zeta')$.
    \end{enumerate}
\end{example}

When proving the correctness of our example, we use the fact that out algorithm outputs a picewise linear function.
\begin{definition}\label{def:piece-lin}
    A function $f:[0,1]\rightarrow \R$ is piecewise linear with $P$ pieces if there exists a partition $\mathcal{I}$ of $[0,1]$ into $P$ disjoint intervals $\mathcal{I} = (I_1,\ldots,I_P)$, and $2P$ numbers $\left\{\nu_j,\beta_j\right\}_{j\in[P]}\in\R$ such that for all $x\in[0,1]$,
    \begin{align*}
        f(x)=\sum_{j\in[P]}\1(x\in I_j)(\nu_j x+\beta_j).
    \end{align*}
\end{definition}

\begin{claim}\label{claim:interval-number}
    Let $\cB$ be the class of one dimensional linear functions $\cB = \{\nu x + \beta | \nu,\beta\in \R\}$. Then any iterative algorithm starting from a linear function $f_0$, such that on each step $i$ preforms an update of form:
    \[ f_{t+1}(x) = \clip_{[0,1]}(f_t(x)+\eta\1(f_t(x)\in S_t)b(x)) \]
    for some $b\in\cB,S_t\subset[0,1]$ is a piecewise linear function with at most $5^{t}$ pieces
    .
\end{claim}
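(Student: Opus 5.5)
The plan is an induction on $t$ showing that $f_t$ is piecewise linear with at most $5^t$ pieces. I will take each $S_t$ to be an interval, such as a bin of $\cS$. This is the setting in which the claim is used, and it is necessary: for an arbitrary measurable $S_t$ the number of pieces cannot be bounded. I will also use that the iterates take values in $[0,1]$. For $t\ge 1$ this is automatic, because every update is clipped. For $t=0$ I take $f_0$ to be a linear function with values in $[0,1]$, as happens in Algorithm~\ref{algo:boosting} when $\finit$ is linear and already in range. The base case is then immediate: $f_0$ is a single linear piece, and $5^0=1$.

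For the inductive step, suppose $f_t$ is piecewise linear with pieces $I_1,\ldots,I_P$, where $P\le 5^t$, and $f_t(x)=\nu_j x+\beta_j$ on $I_j$. I will show that each $I_j$ breaks into at most $5$ pieces of $f_{t+1}$. Fix $j$. Because $f_t$ is affine on $I_j$, hence monotone (or constant) there, the set $\{x\in I_j : f_t(x)\in S_t\}$ is the preimage of an interval under a monotone map restricted to an interval. It is therefore a single, possibly empty, subinterval $J_j\subseteq I_j$. The complement $I_j\setminus J_j$ consists of at most two intervals, one on each side of $J_j$.

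On $I_j\setminus J_j$ the indicator vanishes, so $f_{t+1}(x)=\clip(f_t(x))=f_t(x)$, since $f_t$ already lies in $[0,1]$. Each of these (at most two) intervals is thus a single linear piece of $f_{t+1}$. On $J_j$ the function $g(x)=f_t(x)+\eta b(x)=(\nu_j+\eta\nu)x+(\beta_j+\eta\beta)$ is affine. An affine function on an interval crosses each of the levels $0$ and $1$ at most once, so $\clip(g)$ on $J_j$ has at most three linear pieces: a constant $0$ piece, an affine middle piece, and a constant $1$ piece. Constants count as linear with $\nu=0$. Altogether $I_j$ contributes at most $2+3=5$ pieces, which gives $P_{t+1}\le 5P_t\le 5^{t+1}$.

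The only delicate point is obtaining $5$ rather than the naive $3\times 3=9$. This comes from observing that clipping can only act on the region where the update is applied, because the inductive invariant keeps $f_t$ within $[0,1]$. I would state this invariant explicitly as part of the induction hypothesis. Endpoint conventions (open versus closed ends of $S_t$ and of the bins) only affect whether a subinterval is half-open, not the count of pieces.
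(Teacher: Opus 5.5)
Your proof is correct and uses the same induction as the paper. On each linear piece $I_j$, the preimage of $S_t$ yields at most three pieces after clipping, and the two flanking intervals where $f_{t+1}=f_t$ add at most two more, so each old piece contributes at most $5$. The two hypotheses you make explicit are that $S_t$ is an interval and that $f_0$ already takes values in $[0,1]$; the paper's argument uses both tacitly (it speaks of ``the interval of the update'' and asserts $f_{t+1}=f_t$ off the update region), and both are needed for the stated bound, so stating them strengthens rather than changes the argument.
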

\begin{proof}
    We prove the claim by induction over the updates of the algorithm. We show that on every step $t$ the function $f_t$ is a piecewise linear function with $P_t\leq 5^t$ intervals, by showing that on each interval $I_j$ such that $f_t$ is linear restricting to $I_j$, we have that $f_{t+1}$ restricted to $I_j$ is a piecewise linear function with at most $5$ pieces.
    
    Basis: $f_0(x)$ is a linear function from the Claim's conditions.
    
    Step: Suppose $f_t(x)$ is a piecewise linear function with at most $5^t$ pieces. We denote by $I_1,\ldots,I_{P_t}$ the intervals of the function, where from the inductive assumption $P_t\leq 5^t$. For each piece $j$, we denote by $\nu_j x+\beta_j$ the function $f_t(x)$ restricted to $I_j$.
        
    Fix a piece $I_j$. We show that $f_{t+1}$ restricted to the interval $I_j$ is divided into at most $5$ pieces such that $f_{t+1}$ restricted to each piece is a linear function. Let $S_t$ be the interval of the update, $S = \{x| f_t(x)\in S_t, x\in I_j\}$. Since $f_t$ is linear on $I_j$, it is monotone and $S$ is a possibly empty interval in $I_j$. Let $b(x)=\nu x+\beta$ be the update function.

    Let $I_l,I_m,I_h$ be the intervals: $I_l=\{x\in I_j| \forall x'\in S, x<x'\}, I_m=I\cap S,I_h = \{x\in I_j|\forall x'\in S,x>x'\}$. We note that some intervals may be empty. 
    By definition, $I_m\subset S$ and $I_h\cap S=\emptyset,I_l\cap S=\emptyset$, and $I_j=I_l\cup I_m\cup I_h$.

    For all $x\in I_l,I_h$, We have that $f_{t+1}(x)=f_{t}(x) =\nu_j x + \beta_j$. Therefore $f_{t+1}$ restricted to $I_l,I_h$ is a linear function.

    For $x\in I_m$,  $f_{t+1}(x)= \clip_{[0,1]}(f_t(x)+\eta \nu x+\beta) = \clip_{[0,1]}((\nu_j + \eta\nu)x + \beta_j+\beta)$. The function $(\nu_j + \eta\nu)x + \beta_j+\beta$ is a linear function and therefore monotone. Clipping a monotone function to $[0,1]$ implies $I_m$ is divided into at most $3$ intervals. The function $f_{t+1}$ is the constant $1$ on one interval, the constant $0$ on another, and $(\nu_j + \eta\nu)x + \beta_j+\beta$ on the last. All functions are linear, and therefore $f_{t+1}$ on $I_m$ is a piecewise linear function with at most $3$ pieces, and on $I_j$ a piecewise linear function with at most $5$ pieces, finishing the intuction step.
\end{proof}

\begin{claim}\label{claim:calibration-error}
    Let $f:[0,1]\rightarrow[0,1]$ be a function that is $P$-piecewise linear, in intervals $I_1,\ldots,I_P$. Let $\omega,\lambda\in[0,1]$.
    If for every interval $I_j$ such that $|I_j|\geq \omega$ we have that $f(x)$ restricted to $I_j$ satisfies $f(x)=\nu_jx+\beta_j$ when $|f(x)-x|\leq\lambda$ on $I_j$, then the calibration error of $f(x)$ on the distribution of Example \ref{example-noisy} is at least 
    \begin{align*}
        &ECE(f) \geq (1-\omega P-10\lambda P)(\zeta-2\lambda) - \omega P.
    \end{align*}
\end{claim}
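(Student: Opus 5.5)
The plan is to lower bound the integrand $\abs{\E[f(x)-y\mid f(x)=r]}$ on a large set of levels $r$, and to charge everything else to the error terms $\omega P$ and $10\lambda P$. There are three steps.

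\textbf{Step 1: discard short and flat pieces.} Since $x$ is uniform on $[0,1]$, the pieces with $|I_j|<\omega$ have total mass at most $\omega P$. They can change $ECE$ by at most their mass, because $|f-y|\le 1$. This accounts for both the factor $(1-\omega P)$ and the additive $-\omega P$.

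On the remaining long pieces $f(x)=\nu_jx+\beta_j$ and $|f(x)-x|\le\lambda$. If $\nu_j=0$, then $f$ is constant on $I_j$ while staying within $\lambda$ of $x$, which forces $|I_j|\le 2\lambda$. So flat pieces cost at most $2\lambda P$ mass, which is absorbed into the $10\lambda P$ term. On every other surviving piece $f$ is strictly monotone. Each level set $\{f=r\}$ therefore meets each such piece in at most one point, giving at most $P$ preimages $x_1,\dots,x_m$. All of them satisfy $|x_k-r|\le\lambda$, since $|f(x_k)-x_k|\le\lambda$.

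\textbf{Step 2: sign analysis at a single level.} By Example~\ref{example-noisy}, $\E[y\mid x]=x\pm\zeta'$, with the sign determined by the parity of the cell $\lfloor Qx\rfloor$. For each preimage,
\[
r-\E[y\mid x_k]=(r-x_k)\mp\zeta' ,
\]
whose magnitude is at least $\zeta'-\lambda$ and whose sign is fixed by the parity of $x_k$'s cell. Hence $\E[f-y\mid f=r]$ is a convex combination of these terms, weighted by the density contributions $1/|\nu_{j_k}|$. If all preimages share one parity, the combination has magnitude at least $\zeta'-\lambda$. I then expect to replace $\zeta'$ by $\zeta$ up to a further $\lambda$ loss, yielding the target $\zeta-2\lambda$.

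\textbf{Step 3: bound the bad set (main obstacle).} The hard part is bounding the mass of $x$ whose level $r=f(x)$ is ``bad'', meaning its preimages straddle a parity change, so that the $\pm\zeta'$ terms cancel. I would argue it as follows. A level $r$ can be bad only if the window $[r-\lambda,r+\lambda]$ contains a cell boundary of some preimage, or contains an endpoint of one of the $P$ pieces where the set of preimages changes. For each piece, I would excise a neighborhood of width $O(\lambda)$ around the images of its endpoints, which costs about $4\lambda$ per endpoint pair, that is $O(\lambda P)$ in total. Carrying constants carefully, this should give the $10\lambda P$ term, and on the complement the preimages lie in a common parity cell.

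My concern is the cell boundaries at scale $1/Q$, since there are $Q$ of them rather than $P$. I would first try to show that on a long piece whose graph stays within $\lambda$ of the diagonal, the relevant sign changes are synchronized across pieces: every preimage of $r$ is within $\lambda$ of $r$, so they sit near the same cell. The residual disagreement then occurs only within $\lambda$ of a cell boundary. If this does not yield a $Q$-free bound, I would add an explicit regime assumption such as $\lambda Q$ small, or absorb that mass into the $\zeta-2\lambda$ slack.

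Finally, I would integrate the per-level bound $\zeta-2\lambda$ over the good set, whose mass is at least $1-\omega P-10\lambda P$, and subtract $\omega P$ for the discarded short pieces.
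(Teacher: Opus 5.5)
Your Steps 1 and 2 are sound, and your treatment of flat long pieces is more careful than the paper's, which simply asserts non-constancy. But Step 3 carries the whole claim, you leave it open, and the obstacle you identify there is not real. The missing observation is this: away from an $O(\lambda)$ margin at the two ends of each long piece, a level has \emph{exactly one} preimage among the long pieces, so no averaging across cells ever occurs.

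Concretely, suppose $x\in I_j$ lies at distance more than $2\lambda$ from both endpoints of $I_j$, and $x'$ lies in any long piece with $f(x')=f(x)$. Then $\abs{x'-x}\le\abs{x'-f(x')}+\abs{f(x)-x}\le 2\lambda$, so $x'\in I_j$, and strict monotonicity on $I_j$ forces $x'=x$. Thus, on the long pieces minus these margins (total lost mass at most $4\lambda P$), $f$ is injective. The conditional expectation given $f(x)=r$, restricted to long pieces, is evaluated at a single point $x$, and $\abs{r-\E[y\mid x]}=\abs{(r-x)\mp\zeta'}\ge\zeta'-\lambda$ whatever the parity of the cell containing $x$. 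This is precisely the paper's argument. It shows every level in $[v_{j,l}+3\lambda,v_{j,h}-3\lambda]$ has a single preimage in $S$, then discards the short pieces by the same mixture/triangle-inequality step as your Step 1. The parameter $Q$ never enters.

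As written, your plan does not close. You count a level as bad whenever $[r-\lambda,r+\lambda]$ contains a cell boundary. In the regime where this claim is applied ($\lambda=\zeta/(10P)$ and $Q\ge 4000P^3/\zeta^2$, so $\lambda Q\ge 400P^2/\zeta$), every such window contains many cell boundaries, so your bad set would be everything. Neither fallback helps. Assuming $\lambda Q$ small adds a hypothesis absent from the claim and false where it is used. Cancellation between opposite-parity preimages can erase the whole $\zeta'$ term, so it cannot be absorbed into slack. The endpoint excision you already propose is enough once you notice it forces uniqueness: off the margins, two preimages of opposite parity simply cannot occur.

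Separately, replacing $\zeta'$ by $\zeta$ at a cost of $\lambda$ does not work. We have $\zeta'=\min\{\zeta,x,1-x\}<\zeta$ on a set of mass $2\zeta$, independent of $\lambda$. The paper's proof makes the same silent substitution, and the bound as stated is in fact slightly too strong. For $f(x)=x$, $P=1$, $\omega=\lambda=0$ and $\zeta\le 1/2$, the ECE is exactly $\zeta-\zeta^2$, so an additional correction of order $\zeta^2$ is needed.
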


\begin{proof}
    We say an interval $I_j$ is large if $|I_j|\geq \omega$, and else it is small. Let $S$ be the union of all of the large intervals    \begin{align*}
        S=\cup_{j,I_j \textbf{ is large}}I_j.
    \end{align*}
    Then:
    \begin{align*}
        \Pr_x[x \in S] \geq 1-\cup_{\textbf{small } I\in\mathcal{I}}\Pr_x[x \in I]\geq 1-\omega P.
    \end{align*}
    \paragraph{Ignoring small intervals}
    In this part we lower bound the calibration error without considering small intervals. That is, we bound the calibration error of $f$ restricted to $S$.
    We remark that the density function of $f$ when we restrict $x\in S$ is $1/|S|$ the density function of $f$, since $x$ is uniform in $[0,1]$.

    We can lower bound the calibration error on large intervals, because $f$ on these intervals is approximately $f(x)=x$.
    
    Fix a large interval $I_j$, and let $v_{j,l}<v_{j,h}$ be it's boundaries (it can be that $I_j=[v_{j,l},v_{j,h}],(v_{j,l},v_{j,h})$, or other combinations). We write the function $f(x)$ restricted to $I_j$ as $f(x)=\nu_j x+\beta_j$.

    When $\lambda \ll \omega$, the function $f$ restricted to $S$ is one to one on most of the set $S$. More explicitly, for every $u\in [v_{j,l}+3\lambda,v_{j,h}-3\lambda]$ there is a single $x\in S$ such that $f(x)=u$. This is because $f$ on $I_j$ is a linear non-constant function, so it is one to one. There can't be $x'\in S\setminus I_j$ with $f(x')=u$, since then we have: $\nu' x' + \beta'=u$ and our assumption is that $|\mu'x+\beta'|\in [x-\lambda,x+\lambda]$, which cannot be in $[v_{j,l}+3\lambda,v_{j,h}-3\lambda]$.
    
    This allows us to lower bound the calibration error when restricting $f$ to $S$, using the fact that $\cD$ is uniform on $x$. Since $f$ is one to one on most of $I_j$, we can lower bound the total calibration error by lower bounding the calibration error only on the parts of $I_j$ for which $f$ is one to one on. 
    \begin{align}\label{eq:bin-ece} 
        ECE(f|_S)\geq&\sum_{j,I_j \textbf{is large}}\int_{v_{j,l}+3\lambda}^{v_{j,h}-3\lambda}\mu_{f|_S}(r)|\E[f(x)-y|f(x)=r,x\in I_j]|dr\\
        \geq&
        \sum_{j,I_j \textbf{is large}}\int_{v_{j,l}+3\lambda}^{v_{j,h}-3\lambda}\mu_{f|_S}(r)| \zeta-2\lambda|dr \\\geq&
        \sum_{j,I_j \textbf{is large}}\left(\Pr[f(x)\in [v_{j,l}+3\lambda,v_{j,h}-3\lambda]]\right)(\zeta-2\lambda)
    \end{align}
    Where we use the assumptions on $f$ and the distribution of $y$ to show that for every $x\in I_j$, $|f(x)-\E[y|x]|\geq |x(1+\lambda) + \lambda - (x+\zeta)|=|\zeta-2\lambda|$.

    For every large interval $I_j$, such that $f$ restricted to $I_j$ equals $\nu_jx+\beta_j$ for $|f(x)-x|\leq\lambda$ on $I_j$ , if $x\in [v_{j,l}+5\lambda,v_{j,h}-5\lambda]$, then $f(x)\in [v_{j,l}+3\lambda,v_{j,h}-3\lambda]$.
    This allows us to lower bound the probability appearing in the lower bound:
    \begin{align*}
        ECE(f|_S)\geq \sum_{j,I_j \textbf{is large}}\left(\Pr[x\in [v_{j,l}+5\lambda,v_{j,h}-5\lambda]]\right)(\zeta-2\lambda) \geq (|S|-10 P \lambda)(\zeta -2\lambda).
    \end{align*}
    Using the bound on $|S|$ we get that the ECE of $f|_S$ is at least $(1-PR-10 P \lambda)(\zeta -2\lambda)$.

    \paragraph{Small intervals}
    In this part we lower bound the ECE of $f$ without restricting it to the set of large intervals $S$. In this case, we cannot assume that it is one to one on any subset of the domain. Instead, we use the fact that $S$ is large and therefore $f$ defined on $[0,1]\setminus S$ cannot reduce the ECE of $f$ by much.
    
    For every $r$, let $\mu_f(r) = \mu'_f(r)+\Tilde{\mu}_f(r)$, where $\mu'_f(r)$ is the density of $f(x)=r$ when $x\in S$, and $\Tilde{\mu}_f$ is the density of $f$ when $x\notin S$.
    
    Using this notation, for all $r$ such that $\mu'_f(r)+\Tilde{\mu}_f(r)\neq 0$ we can write
    \begin{align*}
        \E[y|f(x)=r] = \frac{\mu'_f(r)}{\mu'_f(r)+\Tilde{\mu}_f(r)}\E[y|f(x)=r,x\in S] + \frac{\Tilde{\mu}_f(r)}{\mu'_f(x)+\Tilde{\mu}_f(r)}\E[y|f(x)=r,x\notin S]
    \end{align*}
    Using the triangle inequality, for every such $r$,
    \begin{align*}
        |\E[f(x)-y|f(x)=r]|\geq& \frac{\mu'_f(r)}{\mu'_f(r)+\Tilde{\mu}_f(r)}|\E[f(x)-y|f(x)=r,x\in S]|\\&-\frac{\Tilde{\mu}_f(r)}{\mu'_f(x)+\Tilde{\mu}_f(r)}|\E[f(x)-y|f(x)=r,x\notin S]|
    \end{align*}

    Combining it together, we have
    \begin{align*}
        ECE(f)\geq&\int_{0}^{1}\mu'_{f}(r)|\E[f(x)-y|f(x)=r]|dr\\
        &-\int_{0}^{1}\Tilde{\mu}_{f}(r)|\E[f(x)-y|f(x)=r]|dr
        \\\geq& (1-\omega P-10\lambda P)(\zeta-2\lambda) - \omega P.
    \end{align*}
\end{proof}

We show next that Algorithm \ref{algo:boosting}, when applied with $\cB$ being the set of linear functions, $\cC$ be a set of threshold function, and assuming the algorithm chooses the minimum-loss hypothesis at each round, has a large calibration error. Therefore, we do not exactly run Algorithm \ref{algo:boosting} but an ideal version of it. This is done as the oracle in Algorithm \ref{algo:boosting} might preform discretization inside the oracle, as explained at the beginning of this section.

\begin{lemma}
    Let $\cD$ the distribution from Example \ref{example-noisy} with $\zeta\in(0,1)$ and $Q\geq 4000P^3/\zeta^2$.
   Let $f_0(x)=0$, and $S_1=[0,1]$,$S_2,\ldots,S_T\subseteq[0,1]$ be an arbitrary collection of subsets of $[0,1]$, not necessarily disjoint.
   Let $f_T$ the function that is the output of the following iterative process for $T$ steps.
   \begin{align*}
        &(\nu_t,\beta_t) = \arg\min_{\nu,\beta\in\R}\E_{(x,y)\sim\cD}[\clip((f_{t-1}(x)+\1(f_{t-1}(x)\in S_i)(\nu x + \beta)) - y)^2]\\
       &f_{t}(x)=\clip((f_{t-1}(x)+\1(f_{t-1}(x)\in S_i)(\nu_t x + \beta_t)). 
   \end{align*}
   Then $ECE(f_T)=\Theta(\zeta)$.
\end{lemma}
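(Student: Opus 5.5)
The plan is to feed the structural facts about the iterates into Claim~\ref{claim:calibration-error} (lower bound) and to get a matching upper bound from the size of the noise. For the lower bound, Claim~\ref{claim:interval-number} gives that $f_T$ is piecewise linear with $P\le 5^T$ pieces. The remaining hypothesis of Claim~\ref{claim:calibration-error} is that on every piece of length at least $\omega$, $f_T$ is linear with $|f_T(x)-x|\le\lambda$. We will take $\omega \approx \zeta/(10P)$ and $\lambda\approx \zeta/(100P)$. The claim then yields
\[
ECE(f_T)\ \ge\ (1-\omega P-10\lambda P)(\zeta-2\lambda)-\omega P\ =\ \Omega(\zeta).
\]
For the upper bound, note that conditioning $y$ on a level set of $f_T$ averages $\E[y\mid x]\in[x-\zeta,x+\zeta]$ over the preimage. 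Hence $|\E[f_T-y\mid f_T=r]|\le \E[|f_T(x)-x|\mid f_T=r]+\zeta$, so $ECE(f_T)\le \E|f_T(x)-x|+\zeta$. Near-optimality of the iterates, shown below, makes this $O(\zeta)$.

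The heart of the argument is an induction on $t$ establishing that every iterate tracks the identity. Since $Q$ is huge relative to $P$, every interval of length at least $\omega$ contains about $\omega Q\ge 4000P^2/(10\zeta)$ noise periods. The alternating $\pm\zeta'$ perturbation therefore averages out against any linear test function, up to $O(1/(\omega Q))$. Concretely, $\E[(y-x)(\nu x+\beta)\1(x\in J)]=O(|J|\,\|\nu x+\beta\|_\infty/(|J|Q))$ for any interval $J$, so the noise is essentially invisible to the weak learner.

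At step $1$, with $S_1=[0,1]$ and $f_0=0$, the exact least-squares fit of a line to $\E[y\mid x]$ is therefore within $O(1/Q)$ of $\nu=1,\beta=0$. Boundary effects from $\zeta'=\min\{\zeta,x,1-x\}$ also need care here. So $f_1$ is linear with $|f_1(x)-x|\le O(1/Q)$.

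For the inductive step, suppose $f_{t-1}$ satisfies $|f_{t-1}(x)-x|\le\lambda_{t-1}$ on its large pieces. We need to show the optimal update $(\nu_t,\beta_t)$ cannot move the function far from the identity. The reason is that $x$ itself is, up to the averaged-out noise, the Bayes-optimal regressor. Any update that shifts $f_{t-1}$ by more than $\lambda$ on a set of non-negligible mass would increase squared loss by roughly the squared displacement, minus an $O(1/Q)$ correlation gain from the noise. Balancing these gives displacement $O(\sqrt{1/(\omega Q)})$ on large pieces, which is at most $\lambda$ under $Q\ge 4000P^3/\zeta^2$. Clipping cannot hurt, since it only moves predictions toward $[0,1]\ni\E[y\mid x]$. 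New small pieces created by the update are absorbed into the $\omega P$ budget.

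The main obstacle is making the inductive step quantitative, because the update is global. It is one line $\nu x+\beta$ applied on the set $\{f_{t-1}\in S_t\}$, which may be a union of many pieces. Optimality controls only the aggregate loss, not each piece separately. I would first show that the optimal $(\nu_t,\beta_t)$ is close to $(0,0)$ in a weighted $L_2$ sense, using the first-order conditions. After that, I would convert closeness in $L_2$ to sup-closeness on each large piece via linearity. This conversion is valid because a linear function small in $L_2$ on an interval of length at least $\omega$ is small pointwise, up to a factor $1/\sqrt{\omega}$. The errors accumulated over $T$ steps must then be tracked and kept below $\lambda$, which is what the cubic dependence on $P$ in the hypothesis on $Q$ should pay for.
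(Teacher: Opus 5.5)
Your outer structure matches the paper's: Claim~\ref{claim:interval-number} gives $P\le 5^T$ linear pieces, and then Claim~\ref{claim:calibration-error} is applied with $\omega,\lambda$ of order $\zeta/P$. Your upper bound $ECE(f_T)\le \E|f_T(x)-x|+\zeta$ is also correct, and it supplies the $O(\zeta)$ direction that the paper does not spell out. The gap is the step you yourself flag as the main obstacle, and as set up it does not close. Your inductive hypothesis controls $f_{t-1}$ only in sup norm on pieces of length at least $\omega$. It says nothing about the short pieces, whose total mass can be up to $\omega P$ and where $f_{t-1}$ may be far from $x$. However, $(\nu_t,\beta_t)$ is optimal relative to $f_{t-1}$, not relative to $x$, and a single line is applied on all of $\{f_{t-1}\in S_t\}$. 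Nothing in your hypothesis prevents that line from buying a loss decrease of order the mass of those bad short pieces by partially correcting them, at the price of displacing a long piece in the same update set. That gain dwarfs the $O(1/Q)$ noise-correlation gain you balance the displacement against. So the bound ``displacement $O(\sqrt{1/(\omega Q)})$ on large pieces'' does not follow from first-order conditions plus your hypothesis, and the induction as formulated has no valid step.

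The missing idea is to avoid per-step analysis entirely and compare $f_T$ with the identity through the loss. The zero update is always feasible, so the loss is non-increasing along the iterates. At step $1$, with $f_0=0$ and $S_1=[0,1]$, the choice $\nu=1,\beta=0$ yields exactly $x$ with no clipping. Hence $L_\cD(f_T)\le L_\cD(f_1)\le \E[(x-y)^2]$. Expanding around $x$ gives $\E[(f_T(x)-x)^2]\le 2\E[(y-x)(f_T(x)-x)]$. Since $f_T(x)-x$ is linear and bounded on each of the $P$ pieces, its correlation with the alternating $\pm\zeta'$ noise of period $1/Q$ is $O(\zeta/Q)$ per piece, so $\E[(f_T(x)-x)^2]=O(P\zeta/Q)$. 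This is a global $L_2$ bound valid for every $T$ at once: there is no induction, no error accumulation, and no need to analyze the argmin. Your $L_2$-to-sup conversion then finishes exactly as in the paper. A linear $g$ on an interval $I$ satisfies $\E[g^2\mid x\in I]\ge \frac14\sup_I g^2$, so the pieces on which $\sup|f_T(x)-x|>\lambda$ have total mass $O(P\zeta/(\lambda^2 Q))$. Choosing $\omega$ above this makes every piece of length at least $\omega$ $\lambda$-close to the identity, which is precisely the hypothesis of Claim~\ref{claim:calibration-error}. The same $L_2$ bound is also what makes your upper-bound term $\E|f_T(x)-x|$ small.
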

\begin{proof}
    We prove the claim by bounding the $\ell_2$ distance between the current hypothesis and the function $y=x$.

    Let $f_1:\cX\rightarrow[0,1]$ be the function after a single update. Since $g(x)=x$ is a linear function, the update algorithm could have chosen $g(x)=x$ as the update.
    Let $f_t$ be the function after iteration $t$. Since the error only improves with the algorithm's run, it must be that
    \begin{align}\label{eq:dist-inequality}
        \E[(f_t(x)-y)^2]\leq\E[(f_1(x)-y)^2]\leq\E[(x-y)^2].
    \end{align}
    We add and subtract $x$
    \begin{align*}
        \E[(f_t(x)-x+x-y)^2]=\E[(f_t(x)-x)^2+(x-y)^2 - 2(x-y)(f_t(x)-x)]
    \end{align*}
    Together with (\ref{eq:dist-inequality}) we get 
    \begin{align*} 
        \E[(f_t(x)-x)^2]\leq 2 \E[(x-y)(f_t(x)-x)].
    \end{align*}
    The function $f_t(x)$ is a piecewise linear function. From Claim \ref{claim:interval-number}, $f_t$ is divided into at most $5^t$ intervals $I_1,\ldots,I_{P_t}$, and on each such interval, $f_t$ restricted to the interval is a bounded linear function, between $[0,1]$ (since we preform clipping), and therefore a bounded monotone function. For every bounded monotone function $f_t$ restricted to $I_j$, $|f_t(x)-x|\in [-1,1]$ and can change sign at most once on every $I_j$. Therefore, from the definition of the distribution $\cD$, on any interval $I_j$ we have that $\E[(x-y)(f_t(x)-x)|x\in I_j]\leq 3\zeta/Q$.
    By summing over all intervals, we have that $\E[(f_t(x)-x)^2]\leq 20 P\zeta/Q$.

    From Claim \ref{claim:interval-number}, the function $f_t$ is a $5^t$ piecewise linear function. Let $I_1,\ldots,I_P$ be the intervals on which $f_t$ is linear on.
    For each interval $j$, let $\nu_j x+\beta_j$ be the linear function on the interval. We assume that $f_t$ is already after clipping, so the function might be constant on some intervals.
    Using this notation,
    \begin{align*}
       \E[(f_t(x)-x)^2]=\sum_j \Pr[x\in I_j] \E[(\nu_j x+\beta_j-x)^2|x\in I_j]\geq\sum_j \Pr[x\in I_j]\frac{1}{4}\sup_{x\in I_j}{(v_j x+\beta_j-x)^2}
    \end{align*}
    Fix $\lambda$ that is set later. We say that $I_j$ has large error if $\sup_{x\in I_j}{|v_j x+\beta_j-x|} > \lambda$. Let $I_E=\cup_{j,I_j \text{has large error}} I_j$ 
    We combine the lower bound and upper bound on $\E[(f_t(x)-x)^2]$ and get that
    \begin{align*}
        \Pr[x\in I_E]\frac{1}{4}\lambda^2 \leq\E[(f_t(x)-x)^2]\leq \frac{20 P\zeta}{Q}
    \end{align*}
    Therefore, $\Pr[x\in I_E]\leq 80 P\zeta/(\lambda^2 Q)$.
    We set  $\lambda = \frac{\zeta}{10 P}$ and $\omega=8\zeta/ P \geq  80 P\zeta/(\lambda^2 Q)$. Therefore, for every large interval $I_j$ we have that $\sup_{x\in I_j}{|v_j x+\beta_j-x|} \leq \lambda$.

    Using Lemma \ref{claim:calibration-error}, the calibration error is lower bounded by $(1-\omega P-10\lambda P)(\zeta-2\lambda) - \omega P$.
    Since the distribution $\cD$ has $Q\geq 4000P^4/\zeta^2$, the above calibration error is at least $\frac{1}{2}\zeta$, finishing the proof.
    \end{proof}

\subsection{Expressiveness of the update class}\label{sec:expressiveness-of-C}
We show that without even a minimal expressiveness assumption on the class $\cC$, there may be a large calibration error even after rounding. The assumption we require on Section \ref{sec:interval-mc} is rather weak: namely, that $\cC$ can express threshold functions (or equivalently). To demonstrate that such assumption or a similar one is necessary, we construct a counterexample in which $\cC$ consists only of constant functions, and thus cannot express thresholds even with composition. Although this is a simple function class, it illustrates that without the ability to represent even threshold functions, calibration guarantees after rounding can fail.

\begin{example}\label{example:threshold}
   Let $\cX = [0,1]$, i.e. each $x\in \cX$ is of dimension $1$. Let $\zeta\in[0,1]$ be a parameter. We define the distribution of $y$ by $y\sim\text{Ber}(0.4 + 0.2\cdot\1(x\in [0.5,1]))$.
\end{example}

\begin{claim}
    Let $\cD$ be the distribution described on Example \ref{example:threshold}, let $\cB=\{\nu x + \beta|\nu,\beta\in\R\}$ be the set of linear functions and $\cC=\{\eta|\eta\in\R\}$ be the set of constant functions, and $f_0(x)=0$ be the constant $0$ function.

    Let $f \leftarrow FeatureBoost(\cA, \cD, f_0, \alpha, \tau)$ for $\alpha,\tau<0.01$ and let $\bar f$ be the discretization of $f$ to at least $k=10$ equal-sized bins. Then $f,\bar f$ have discretization error at least $0.01$.
\end{claim}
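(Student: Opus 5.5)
Since $\cC$ contains only constants, every update has the form $a(x,u)=u+\eta(\nu x+\beta)$, clipped to $[0,1]$, and never depends on $u$. So the plan is to show that every predictor the algorithm can produce is a clipped linear function of $x$. It then suffices to show that clipped linear predictors are badly calibrated on $\cD$, both before and after discretization.

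\textbf{Step 1 (structure).} By induction, starting from $f_0=0$, each $f_t$ is of the form $\clip(p_t x+q_t)$. The difficulty is that clipping composed with further linear additions is not literally linear. I would first try to argue that at the stopping point the relevant iterates stay inside $(0,1)$. Both $\E[y\mid x]\in\{0.4,0.6\}$ and the least-squares linear fit $\ell^\ast(x)=\nu^\ast x+\beta^\ast$ take values in roughly $[0.3,0.7]$, which suggests this is plausible.

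If that fails, I would instead use only the stopping condition, via Theorem~\ref{thm:augmented-general} with $c\equiv 1$. This gives $|\E[b(x)(y-f(x))]|\le\sqrt{\alpha M}$ for $b\in\{1,x\}$, where $M\le 1$ and $\alpha<0.01$, so both moment conditions hold up to error $0.1$. Intuitively, $f$ is then close to $\ell^\ast$.

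\textbf{Step 2 (compute the benchmark).} I would compute $\ell^\ast$ explicitly from the conditional mean $m(x)=0.4+0.2\cdot\1(x\ge 1/2)$ with $x$ uniform on $[0,1]$. Since $\mathrm{Cov}(x,m)=0.2\cdot\tfrac18=0.025$ and $\mathrm{Var}(x)=\tfrac1{12}$, the slope is $\nu^\ast=0.3$ and $\ell^\ast(x)=0.35+0.3x$. In the worst case, where Step 1 only gives approximate moment conditions, I would bound $\|f-\ell^\ast\|_2$ directly. The stopping condition says no linear additive update improves the loss by $\alpha$. Hence $L(f)\le L(\ell^\ast)+O(\alpha)$ after one more corrective step, and the Pythagorean identity over the linear span turns this into $\E[(f-\ell^\ast)^2]=O(\alpha)$ whenever $f$ is itself linear.

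\textbf{Step 3 (calibration lower bound).} I would compute the calibration error of $\ell^\ast$ after binning into $K\ge 10$ bins. On $x\in[0,0.5)$, $\ell^\ast$ ranges over $[0.35,0.5)$ while $m=0.4$, so the residual $\ell^\ast-m=0.3x-0.05$ has a non-vanishing mean on each bin. For example, on the part with $x\in[0.3,0.5)$, $\ell^\ast\in[0.44,0.5)$ and the residual is at least $0.04$. The bins on each side of $x=1/2$ mix little, since $\ell^\ast$ is monotone and only the bin straddling $0.5$ mixes the two sides. Summing $\Pr\cdot|\text{bias}|$ over the bins therefore gives something like $\E|\ell^\ast-m|-O(1/K)\approx 0.05-0.03$, which exceeds $0.01$.

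Finally, I would transfer this bound from $\ell^\ast$ to $f$ and $\bar f$. The approach is to use that $f$ is monotone and close to $\ell^\ast$, so its level sets and bins correspond to $x$-intervals with perturbation $O(\sqrt\alpha)\le 0.1$. This constant slack is the main obstacle: $\sqrt{\alpha}$ can be as large as $0.1$, which is not small relative to $0.05$. If the slack is too large, I would instead argue directly on the monotone clipped-linear $f$. Every bin of $\bar f$ is then an $x$-interval, and a bin lying inside one side where $m$ is constant has bias $|v_S-m|$. Together with the two moment conditions, this forces a total bias of at least $0.01$, because an affine $f$ cannot match both the step at $1/2$ and the moment constraints. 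I expect this last constant-chasing step to be the hardest part.
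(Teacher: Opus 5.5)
Your Steps 1--2 follow the same route as the paper: rule out clipping so every iterate is affine, then argue that affine predictors are miscalibrated. Your fit $\ell^*(x)=0.35+0.3x$ is the correct least-squares line; the paper's $0.6x+0.2$ is not. The gap is the transfer step you flag, and constant-chasing cannot close it. Under the reading you adopt (plain binned calibration error, as in the paper's proof), the statement is false for $\tau$ near $0.01$. The constant predictor $\tfrac12$ has excess loss $\E[(\tfrac12-\ell^*(x))^2]=0.09/12=0.0075$ over the best clipped-linear fit. So with, e.g., $\alpha=0.0099$ and $\tau=0.008$, the oracle may return $\tfrac12$ in round one (drop $0.25$, accepted) and again in round two (drop $0$, rejected). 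The output is then $f=\barf\equiv\tfrac12$, whose calibration error is $|\E[\tfrac12-y]|=0$ both before and after discretization. This $f$ is affine, never clipped, and satisfies both approximate moment conditions: $\E[y-\tfrac12]=0$ and $|\E[x(y-\tfrac12)]|=0.025\le\sqrt{\alpha/3}$. So your fallback assertion, that an affine $f$ meeting the moment constraints must have total bias at least $0.01$, is false. The paper's own last step (``every linear function has large calibration error'') fails on the same example.

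Step 3 also fails, even with an exact oracle where $f=\ell^*$. First, $\E|\ell^*(x)-m(x)|=1/24\approx0.042$, not $0.05$, so the margin is thinner than you estimate. Second, for $K=11$ the middle bin $[5/11,6/11)$ corresponds to $x\in[23/66,43/66)$, which is symmetric about $1/2$, so its bias cancels exactly. The remaining four bins contribute only $2(\tfrac{1}{22}\cdot\tfrac{19}{440}+\tfrac{10}{33}\cdot\tfrac{1}{110})\approx0.0094<0.01$. Hence the claim cannot be proved as written. A provable version needs stronger hypotheses, for instance:
\begin{itemize}
\item $\tau$ small enough that $\|f-\ell^*\|_2$ is far below the calibration margin;
\item a restriction such as $K$ even, so that $1/2$ is a bin edge and $\bar\ell^*$ has error near $1/24$.
\end{itemize}
You would then still have to carry out the perturbation from $\ell^*$ to $f$ explicitly. $L^2$-closeness forces the slope and intercept of the affine $f$ to be near $0.3$ and $0.35$, so its bins are nearby $x$-intervals.
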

\begin{proof}
    We first prove that Algorithm \ref{algo:boosting}, when run on $\cD$ never outputs a hypothesis $\cA$ that needs to be clipped.

    At the first iteration, the loss minimization of the algorithm is finding $\nu,\beta$ that minimizes
    \begin{align*}
        \E_{x,y}[(\clip[\nu x +\beta] -y)^2].
    \end{align*}
    Without clipping, by calculating the derivatives the optimal $\nu,\beta$ are $\nu=0.6,\beta=0.2$, resulting in $\ell_2$ error of  $0.25$, where $\E[(0.6x+0.2 - \mu(y|x))^2]=0.01$, where $\mu(y|x)$ is the mean of $y|x$, i.e. $\mu(y|x)=0.4$ for $x\in[0,0.5]$ and $0.6$ in the rest.

    Assume towards a contradiction that Algorithm \ref{algo:boosting} outputs a different hypothesis $f_1(x)=\clip[\nu x +\beta]$, that does require clipping. Without loss of generality, assume the clipping is at $0$ (the distribution is symmetric with respect to $0,1$).
    This means that there exists $x\in[0,1]$ such that $\nu x +\beta <0$. 
    We divide into cases by the value of $\nu$. If $|\nu|<1$, then for all $x'\in I=[x+0.1,x-0.1]$ we have that $f_1(x)<0.01$, which means that $\E[(f_1(x) -\mu(y|x))^2|I\cap [0,1]]=0.03$, which already implies that the error of the capped linear function is larger by more than $0.01$ from the optimal linear function.

    In the case where $|\nu| > 1$, we have that for all $x>0.9$ or $x<0.1$ the value of $\clip[\nu x +\beta]$ is in $[0,0.1]\cup[0.9,1]$ resulting again in a large error.

    Therefore, the oracle never outputs a function that needs clipping. In this case, the set of functions the oracle outputs are only linear functions. We show next that the calibration error of every linear function on distribution $\cD$ is large.

    For every discretization to more than $4$ bins, we bound the calibration error on all bins that are fully in the range $[0,0.25]$. Since the number of bins is at least $4$, there must be at least a single bin in this range.
    For each bin $S=[i/K,(i+1)/K]$ that is fully contained in $[0,0.25]$ we have that $v_S \leq 0.25\cdot 0.6+0.2$. Therefore, the calibration error on each bin is at least $1/20$.

    If we sum over all bins that are fully in the range $[0,0.25]$, the contribution to the calibration error from these bins is at least $1/200$. 

    Since the range $[0.75,1]$ is symmetric, we have calibration error of at least $0.01$ for any discretization of at least $4$ bins.
\end{proof}

\section{Sample Complexity Analysis}
\label{sec:generalization}
We now consider an in-sample instantiation of the algorithm, and show that our guarantees generalize out-of-sample. We first show that if we assume the full augmented boosting process in $\featureboost$ as instantiated on an augmented model class $\cA$ has bounded pseudodimension then the in-sample measured accuracy and (multi)calibration guarantees hold. We then demonstrate that we can instantiate this by bounding the circuit expressivity of the updates. 

We use standard uniform convergence tools for real-valued functions from \cite{anthony1999neural}, restated in Appendix~\ref{app:textbook-results}. Our results will rely on a uniform convergence guarantee which uses a bound on covering number by pseudodimension. We will later apply this uniform convergence statement to show that our error and multicalibration guarantees generalize out of sample assuming the pseudodimension of the models generated by boosting are bounded in Sections \ref{sec:gen-err} and \ref{sec:gen-mc}. Then, in Sections \ref{sec:gen-unrolled} and \ref{sec:dt-gen}, we will show how we can instantiate this assumed pseudodimension bound.

\subsection{Generalization Preliminaries}
\label{sec:gen-prelim}

We start by formalizing pseudodimension, and giving a uniform convergence guarantee in terms of it. 

\begin{definition}[Pseudo-shattering, \cite{anthony1999neural}]
Let $\cF$ be a set of functions mapping from a domain $\cX$ to $\R$ and suppose that $X = \{x_1, \ldots, x_n\} \subseteq \cX$. Then $X$ is pseudo-shattered by $\cF$ if there exists real numbers $r_1, \ldots, r_n$ such that for each $\sigma \in \{0,1\}^n$ there is a function $f_\sigma \in \cF$ with $\sgn(f_\sigma(x_i)-r_i)=\sigma_i$ for $i \in [n]$. We say that $r = (r_1, \ldots, r_n)$ witness the shattering. 
\end{definition}

\begin{definition}[Pseudodimension, \cite{anthony1999neural}]
Let $\cF$ be a set of functions $f: \cX \rightarrow \R$. Then $\cF$ has pseudodimension $d$ if $d$ is the maximum sized subset $X$ of $\cX$ that is pseudo-shattered by $\cF$.
\end{definition}

\begin{corollary}[Uniform Convergence of Bounded Real Functions with Finite Pseudodimension]
\label{cor:uniform-conv}
Let $\cF$ be a set of functions $f: \cX \rightarrow [0,1]$ with bounded pseudodimension $d$. Fix a distribution $\cD \in \Delta(\cX \times [0,1])$ and let $\xi \in (0,1)$ and $n \in \bZ^+$. Consider any sample $D \sim \cD^n$ of $n$ points drawn independently and identically from distribution $\cD$. Then, 
\[
\Pr\left(\max_{f \in \cF} \vert L_{\cD}(f) - L_D(f) \vert \ge \xi \right) \le 4e(d+1)\left(\frac{32e}{\xi}\right)^d \exp(-\xi^2n/32).
\]
\end{corollary}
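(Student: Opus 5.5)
The statement is Corollary~\ref{cor:uniform-conv}, and I would derive it from the textbook uniform convergence theorem for real-valued classes in \cite{anthony1999neural}, applied to the loss class rather than to $\cF$ itself. Define
\[
\ell_\cF = \{(x,y)\mapsto (f(x)-y)^2 : f\in\cF\}.
\]
Every member of $\ell_\cF$ maps into $[0,1]$, because $f(x),y\in[0,1]$. The Anthony--Bartlett result I have in mind says the following. For a class $G$ of $[0,1]$-valued functions,
\[
\Pr\Big(\sup_{g\in G}|\E_\cD g-\E_D g|\ge\xi\Big)\le 4\,\cN_1(\xi/8,G,2n)\exp(-\xi^2 n/32),
\]
where $\cN_1$ is the empirical $\ell_1$ covering number. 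The plan has two parts: bound $\cN_1(\xi/8,\ell_\cF,2n)$ by a covering number of $\cF$, then bound that by the pseudodimension $d$.

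\textbf{Step 1 (transfer covers to the loss class).} For fixed $y\in[0,1]$ and $u,u'\in[0,1]$,
\[
|(u-y)^2-(u'-y)^2| = |u-u'|\,|u+u'-2y|\le 2|u-u'|.
\]
So the loss map is $2$-Lipschitz in the prediction. Hence an $\ell_1$ $\epsilon$-cover of $\cF$ restricted to the $x$-coordinates of the $2n$ points gives a $2\epsilon$-cover of $\ell_\cF$ on those points. This yields $\cN_1(\xi/8,\ell_\cF,2n)\le \cN_1(\xi/16,\cF,2n)$.

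\textbf{Step 2 (covering numbers via pseudodimension).} Next I would apply the Haussler/Pollard-type bound from \cite{anthony1999neural}: for a $[0,1]$-valued class of pseudodimension $d$,
\[
\cN_1(\epsilon,\cF,m)\le e(d+1)\left(\frac{2e}{\epsilon}\right)^d.
\]
Taking $\epsilon=\xi/16$ gives $e(d+1)(32e/\xi)^d$. Combining this with the constant $4$ from the uniform convergence theorem produces exactly the stated bound $4e(d+1)(32e/\xi)^d\exp(-\xi^2n/32)$. The constants come out exactly as in the statement, which suggests this is the intended route.

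\textbf{Main obstacle.} The delicate part is Step 1, specifically making the constants match. The convention in \cite{anthony1999neural} might place the $\xi/8$ differently, or be stated with $\sup$ versus $\max$. If that happens, I would fall back on bounding the pseudodimension of $\ell_\cF$ directly, but that is worse, since squaring is not monotone in $f$. The Lipschitz covering argument avoids this. The only other point to check is that $\cD$ is supported on $\cX\times[0,1]$, so that $|u+u'-2y|\le 2$ holds; this is assumed in the statement.
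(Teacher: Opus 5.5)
Your proof is correct and follows the paper's route. The paper cites Anthony--Bartlett's Theorem 17.1 already in the form $4\cN_1(\xi/16,\cF,2n)\exp(-\xi^2 n/32)$ and plugs $\xi/16$ into the pseudodimension covering bound, so your Step~1 (the $2$-Lipschitz transfer from $\ell_\cF$ at scale $\xi/8$ to $\cF$ at scale $\xi/16$) is just the internal step of that cited theorem, made explicit; the only detail to add is that cover vectors should first be clipped to $[0,1]$ so that $|u+u'-2y|\le 2$ applies, which is free since clipping only shrinks distances to points of $[0,1]^{2n}$.
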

\begin{proof}
    This follows from Theorem \ref{thm:uniform-convergence-reals} and the bound in Theorem \ref{thm:covering-bound}, plugging in $\xi' = \xi/16$ into the bound. 
\end{proof}

\subsection{Risk Minimization Generalization}
\label{sec:gen-err}
With the tools from Section \ref{sec:gen-prelim} and Appendix \ref{app:textbook-results} in hand, we have almost everything we need to get a uniform convergence guarantee for the updates in our boosting process. However, we cannot simply apply uniform convergence by assuming the pseudodimension of our additive factorized hypothesis base classes $\cB$ and $\cC$ are bounded, because the hypothesis class generated at round $t$ of boosting is recursively defined in terms of a nontrivial combination and clipping of such functions. Instead, we first formalize the full boosted hypothesis class, and then provide a uniform convergence guarantee in terms of its pseudodimension.

\begin{definition}[Augmented boosted hypothesis class $\cF_t$]
\label{def:augmented-hyp-class}
Let $\cF_{\init}$ be a family of models $\finit:\cX\to \R$, and let $\cA$ be an augmented class of models $a:\cX\times\R\to\R$. We define $\cF_0 = \cF_{\init}^{\clip}$. 
For $t\ge 1$, we recursively define
\[
\cF_t = \left\{ x \mapsto a(x,f_{t-1}(x)) : f_{t-1} \in \cF_{t-1}, a \in \clipa \right\}.
\]
When $\cA$ is an additive factorized hypothesis class with respect to $\cB$ and $\cC$, we can analogously write 
\[
\cF_t = \left\{ \clip\big(x \mapsto f_{t-1}(x) + \eta b(x)c(f_{t-1}(x))\big) : f_{t-1} \in \cF_{t-1}, b \in \cB, c \in \cC \right\}.
\]
We will also consider the subset of hypotheses in $\cF_t$ which are defined for a fixed instantiation of  $f \in\cF_{t-1}$,
\[
\cF_{t,f} = \clipa[f] = \left\{x \mapsto a(x,f(x)) : a \in \clipa \right\},
\]
as well as the union of all hypotheses which could be recursively generated up through iteration $t$ of the above recursive process:
\[
\cF_{\le t} = \bigcup_{t' \in \{0,1,\ldots,t\}} \cF_{t'}. 
\]
Note that $\cF_{t,f}\subseteq \cF_t \subseteq \cF_{\le t}$.
\end{definition}

With an excess of subscripts in hand, we can now apply Corollary \ref{cor:uniform-conv} and bound the maximum deviation in loss between the sample $D$ and distribution $\cD$ for models in our unrolled hypothesis class $\cF_{\le T}$, where the algorithm runs for $T$ rounds.

\begin{remark}[Clipped generated classes]
\label{rem:additive-generated-class}
The generated classes used in the finite-sample analysis are formed from the effective oracle class used inside Algorithm~\ref{algo:boosting}.
Since Algorithm~\ref{algo:boosting} takes $\cA$ as input but calls the oracle on $\clipa$, Definition~\ref{def:augmented-hyp-class} uses $\clipa$ in both $\cF_t$ and $\cF_{t,f}$.
Thus the sample-complexity parameters below are parameters of the clipped generated predictor classes.
\end{remark}

\begin{theorem}[Uniform convergence for augmented boosted hypothesis class $\cF_t$]
\label{thm:boosting-uniform-conv}
Fix $T \in \bZ^+$, and let $d_T$ be the pseudodimension of $\cF_{\le T}$. Fix a distribution $\cD \in \Delta(\cX \times [0,1])$ and let $\xi, \delta \in (0,1)$. Then, if
\[
n \ge \tilde{O}\left(\frac{\log(1/\delta)+d_T}{\xi^2}\right),
\]
it follows that for any set $D$ of $n$ samples drawn independently and identically from $\cD$, 
\[
\Pr \left( \max_{f_t \in \cF_{\le T}} \vert L_{\cD}(f_t) - L_D(f_t) \vert \ge \xi  \right) < \delta.
\]
\end{theorem}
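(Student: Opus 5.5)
The plan is to apply Corollary \ref{cor:uniform-conv} directly to the class $\cF_{\le T}$ and then solve for the required sample size. First, I will check the hypotheses of the corollary. Every function in $\cF_{\le T}$ maps $\cX$ into $[0,1]$: $\cF_0=\cF_{\init}^{\clip}$ is clipped by definition, and each $\cF_t$ for $t\ge 1$ is built by composing with an element of $\clipa$, whose outputs lie in $[0,1]$ (see Remark \ref{rem:additive-generated-class}). Labels lie in $\cY\subseteq[0,1]$. By assumption, the pseudodimension of $\cF_{\le T}$ is $d_T$. The corollary therefore gives, for every $\xi\in(0,1)$,
\[
\Pr\left(\max_{f\in\cF_{\le T}}\abs{L_\cD(f)-L_D(f)}\ge\xi\right)\le 4e(d_T+1)\left(\frac{32e}{\xi}\right)^{d_T}\exp(-\xi^2 n/32).
\]

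Second, I will make the right-hand side strictly less than $\delta$. Taking logarithms, it suffices that
\[
\frac{\xi^2 n}{32} > \log(4e(d_T+1)) + d_T\log\frac{32e}{\xi} + \log\frac1\delta .
\]
So it is enough to take
\[
n > \frac{32}{\xi^2}\left(\log\frac1\delta + d_T\log\frac{32e}{\xi}+\log(4e(d_T+1))\right).
\]
This is $O\!\left(\frac{\log(1/\delta)+d_T\log(1/\xi)}{\xi^2}\right)$, and the $\log(1/\xi)$ and $\log d_T$ factors are absorbed into the $\tilde O$. That gives the stated bound.

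There is no real obstacle. The only point to watch is that the relevant class is the union $\cF_{\le T}$ and not a single $\cF_t$. This is what lets a single uniform convergence event cover every intermediate predictor $f_t$ and every candidate update $a(x,f_{t-1}(x))$ the oracle might consider, even though these are chosen adaptively from the same sample. Since the event is uniform over the whole fixed class, adaptivity causes no problem. I will also note explicitly that the boundedness required by the corollary comes from clipping, which is why the oracle is run on $\clipa$ rather than $\cA$.
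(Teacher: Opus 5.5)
Your proposal is correct and follows the paper's proof. Both check that $\cF_{\le T}$ maps into $[0,1]$ because of clipping, apply Corollary \ref{cor:uniform-conv} to the union class, and take logarithms of the tail bound to solve for $n$; the $d_T\log(1/\xi)$ and $\log(d_T+1)$ terms are then absorbed into the $\tilde{O}$. Your use of a strict inequality on $n$ actually matches the statement's ``$<\delta$'' slightly more carefully than the paper, which sets the bound equal to $\delta$.
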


\begin{proof}
Note that since all functions in $\cF_0$ and $\clipa$ have range bounded by $[0,1]$, $\cF_t$ must also map to $[0,1]$, and hence $\cF_{\le T}$ will as well. Hence, applying Corollary \ref{cor:uniform-conv}, we have that 
\begin{align*}
    \Pr\bigg(\max_{f_t \in \cF_{\le T}} \vert & L_{\cD}(f_t) - L_D(f_t) \vert \ge \xi \bigg) \le 4e(d+1)\left(\frac{32e}{\xi}\right)^d \exp(-\xi^2n/32).
\end{align*}
Setting the right hand side of this equation to $\delta$ in order to bound the probability of the bad event where the in-sample and distributional error rates differ by more than $\xi$ then solving for $n$, we find that 
\begin{align*}
\delta &\ge 4e(d+1)\left(\frac{32e}{\xi}\right)^d \exp(-\xi^2n/32) \\
    \Rightarrow \quad  \log(\delta) &\ge \log\left[4e(d+1)\left(\frac{32e}{\xi}\right)^d \exp(-\xi^2n/32)\right] \\
    &= \log 4 + 1 + \log(d+1) + d \log\left(\frac{32e}{\xi}\right) - \frac{\xi^2n}{32}\\
    \Rightarrow \quad n &\ge \frac{32}{\xi^2} \left(\log(4/\delta) + 1 + \log(d+1) + d \log(32 e/\xi)\right) \\
    &= \tilde{O}\left(\frac{\log(1/\delta) + d}{\xi^2}\right).
\end{align*}
\end{proof}

This now gives us the final tool necessary to show that the models that $\featureboost$ when run in-sample generalizes. 

\begin{theorem}[Generalization of Loss Minimization of Algorithm \ref{algo:boosting}]
\label{lem:oos-per-round-error}
Fix a distribution $\cD \in \Delta(\cX \times [0,1])$ and let $\xi, \delta \in (0,1)$. Let 
\[
n \ge \tilde{O}\left(\frac{\log(1/\delta)+d_T}{\xi^2}\right)
\]
be a lower bound on the cardinality of a sample $D \sim \cD^n$. 

Fixing augmented class $\cA$, initial model $\finit$, halting parameter $\alpha$, and oracle tolerance $\tau$, consider an instantiation of Algorithm \ref{algo:boosting}, $\featureboost(\cA, D, \finit, \alpha, \tau)$. Let $f_t$ be the model generated at round $t$, and that there were $T$ total accepted updates. Assume that $d_{T+1}$, the pseudodimension of $\cF_{\le T+1}$, is finite. Then at each of these rounds $t$, if 
\begin{equation}
\label{eq:err-drop-assumption}
L_D(f_{t-1})-L_D(f_{t}) \ge \alpha - \tau,
\end{equation}
then with probability $1-\delta,$
\begin{itemize}
\item Per-round squared error drop generalizes: 
\[
L_{\cD}(f_{t-1})-L_{\cD}(f_{t}) \ge \alpha - \tau - 2\xi.
\]
\item Per-round risk minimization: 
\[
L_{\cD}(f_t) \le \min_{f \in \cF_{t,f_{t-1}}} L_{\cD}(f) + 2\xi+\tau
\]
\item For every oracle call and every $g\in\clipa[f_{t-1}]$,
\[
L_{\cD}(f_t) \le L_{\cD}(g) + 2\xi+\tau.
\]
\end{itemize}
\end{theorem}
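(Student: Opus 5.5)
The plan is to condition on the single good event from Theorem~\ref{thm:boosting-uniform-conv}, applied with $T+1$ in place of $T$:
\[
\max_{g\in\cF_{\le T+1}}\abs{L_\cD(g)-L_D(g)}<\xi .
\]
This event has probability at least $1-\delta$ because $n\ge\tilde O((\log(1/\delta)+d_{T+1})/\xi^2)$; we take the sample-size requirement with $d_{T+1}$, which is finite by assumption, and absorb constants. I use $\cF_{\le T+1}$ rather than $\cF_{\le T}$ because the final, rejected oracle call produces a candidate $f_{T+1}\in\cF_{T+1}$, and that candidate must also be covered. Since the event is uniform over the whole class, one application handles every round simultaneously and no union bound over rounds is needed.

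The first step is to check membership. By induction on $t$, each $f_t$ produced by $\featureboost$ run on $D$ lies in $\cF_t$. The base case is $f_0=\clip(\finit)\in\cF_0$, and each update composes with some $a_t\in\clipa$. Likewise, for every $t\le T+1$ and every $g\in\clipa[f_{t-1}]=\cF_{t,f_{t-1}}$, we have $g\in\cF_t\subseteq\cF_{\le T+1}$. So every function appearing in the three bullets is covered by the uniform deviation bound.

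The three bullets then follow by direct chaining.
\begin{itemize}
\item[(i)] Combining $L_\cD(f_{t-1})\ge L_D(f_{t-1})-\xi$ and $L_\cD(f_t)\le L_D(f_t)+\xi$ with the assumption (\ref{eq:err-drop-assumption}) gives a distributional drop of at least $\alpha-\tau-2\xi$.
\item[(iii)] The oracle is $\tau$-approximate on the empirical distribution, so $L_D(f_t)\le L_D(g)+\tau$ for every $g\in\clipa[f_{t-1}]$. Hence $L_\cD(f_t)\le L_D(f_t)+\xi\le L_D(g)+\tau+\xi\le L_\cD(g)+\tau+2\xi$. This holds for every oracle call, including the rejected last one, which is exactly why $\cF_{\le T+1}$ is used.
\item[(ii)] Take the infimum over $g\in\cF_{t,f_{t-1}}$ in (iii). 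If the minimum is not attained, run the argument with a near-minimizer and pass to the limit.
\end{itemize}

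The main obstacle is subtle rather than computational. The functions $f_t$ depend on the sample, so a fixed-hypothesis concentration bound would be invalid. The uniform bound over $\cF_{\le T+1}$ sidesteps this, but only because every data-dependent object lies in a class whose pseudodimension does not depend on the sample. One caveat concerns the number of rounds: $T$ is random, but by the halting theorem it is at most $1/(\alpha-\tau)$, so taking $d_{T+1}$ at this deterministic bound makes the argument rigorous.
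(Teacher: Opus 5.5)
Your proposal is correct and follows the paper's proof: condition on the single uniform-convergence event of Theorem~\ref{thm:boosting-uniform-conv}, then chain the empirical-to-population deviations with the drop assumption (\ref{eq:err-drop-assumption}) and the $\tau$-approximate oracle guarantee on $D$. You are slightly more careful than the paper in two places. First, you require $d_{T+1}$ in the sample size and use the event over $\cF_{\le T+1}$, which is needed because the rejected final oracle call (used later in Corollary~\ref{cor:final-round}) lives in $\cF_{T+1}$, whereas the paper cites the $d_T$ version. Second, you replace the random $T$ by its deterministic halting bound.
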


\begin{proof}
Consider round $t$ of Algorithm \ref{algo:boosting} as run on a sample $D$ of size at least $n$ as stated above. We begin by showing that with probability at least $1-\delta$, the squared error of the model $f_t$ generated at round $t$ will improve by at least $\alpha-2\xi$ in comparison with the previous round's model $f_{t-1}$:
\begin{align*}
L_{\cD}(f_{t-1})-L_{\cD}(f_{t}) &= \bigg(L_{\cD}(f_{t-1})-L_D(f_{t-1})\bigg) - \bigg(L_{\cD}(f_{t})-L_D(f_{t})\bigg) + \bigg(L_D(f_{t-1}) - L_D(f_{t})\bigg),\\
&\ge  \bigg(L_{\cD}(f_{t-1})-L_D(f_{t-1})\bigg) - \bigg(L_{\cD}(f_t)-L_D(f_t)\bigg) + \alpha - \tau, \quad\quad \text{(By Eq. \ref{eq:err-drop-assumption})}\\
&\ge \alpha - \tau - 2\xi. \bigquad \text{(By Theorem \ref{thm:boosting-uniform-conv})}
\end{align*}
We now compare the true distributional loss of $f_t$ with that of the true risk minimizer of the class $\cF_{t,f_{t-1}}$ generated by boosting with respect to the fixed previous model $f_{t-1}$. Here, we rely on the uniform convergence guarantee from Theorem \ref{thm:boosting-uniform-conv} and the tolerance $\tau$ of each call to the risk-minimization oracle $\cO_{\clipa}.$
\begin{align*}
    L_\cD(f_t) &\le L_D(f_t)+\xi, && \text{(By theorem \ref{thm:boosting-uniform-conv})}\\
        &\le \min_{f \in \cF_{t,f_{t-1}}} L_D(f) + \tau + \xi, && \text{(By definition \ref{def:oracle})} \\
        &\le \min_{f \in \cF_{t,f_{t-1}}} L_{\cD}(f) + \tau + 2\xi. && \text{(By theorem \ref{thm:boosting-uniform-conv})}
\end{align*}
\end{proof}

\begin{corollary}[Generalized risk minimization of final predictor]
\label{cor:final-round}
Work in the setting of Theorem~\ref{lem:oos-per-round-error}, and let
$f$ be the final output of $\featureboost(\cA,D,\finit,\alpha,\tau)$.
Then, with probability at least $1-\delta$, every $g\in\clipa[f]$ satisfies
\[
L_{\cD}(f)-L_{\cD}(g)<\alpha+2\xi.
\]
\end{corollary}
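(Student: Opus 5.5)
The plan is to use the halting condition of Algorithm~\ref{algo:boosting} on the sample $D$, and then transfer the resulting in-sample inequality to $\cD$ with the uniform convergence guarantee of Theorem~\ref{thm:boosting-uniform-conv}.

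\textbf{In-sample bound.} Let $f = f_{t-1}$ be the returned model. The loop stopped, so the final oracle call $a_t \leftarrow \cO_{\clipa}(D, f, \tau)$ produced $f_t(x) = a_t(x,f(x))$ with
\[
L_D(f) - L_D(f_t) < \alpha - \tau .
\]
By Definition~\ref{def:oracle}, applied to the empirical distribution $D$, every $g \in \clipa[f]$ satisfies $L_D(f_t) \le L_D(g) + \tau$. Combining the two gives
\[
L_D(f) - L_D(g) < (\alpha - \tau) + \tau = \alpha .
\]
So $f$ is improvable by less than $\alpha$ in-sample.

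\textbf{Transfer to $\cD$.} The number of accepted updates is $T$, so $f \in \cF_T$. Any $g \in \clipa[f] = \cF_{T+1,f}$ lies in $\cF_{\le T+1}$, and this class has finite pseudodimension $d_{T+1}$ by the assumption of Theorem~\ref{lem:oos-per-round-error}. Condition on the event of Theorem~\ref{thm:boosting-uniform-conv} for $\cF_{\le T+1}$, which holds with probability at least $1-\delta$ at the stated sample size, with $d_T$ replaced by $d_{T+1}$. On this event, $|L_\cD(h) - L_D(h)| < \xi$ for all $h \in \cF_{\le T+1}$. Applying this to both $f$ and $g$ gives
\[
L_\cD(f) - L_\cD(g) \le \bigl(L_D(f) + \xi\bigr) - \bigl(L_D(g) - \xi\bigr) < \alpha + 2\xi ,
\]
simultaneously for every $g$. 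The strict inequality comes from the halting condition.

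\textbf{Main obstacle.} The delicate step is the adaptivity. Both $f$ and the comparison class $\clipa[f]$ depend on the sample, so we cannot apply concentration to a fixed $g$. This is handled by requiring uniform convergence over all of $\cF_{\le T+1}$, which contains every possible $f$ and every one-step extension of it. Two points need care: the sample size must use $d_{T+1}$ rather than $d_T$, and only one good event is needed, so no union bound over rounds (and hence no change to $\delta$) is required.
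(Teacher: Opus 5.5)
Your proposal is correct and matches the paper's proof step for step. The halting condition on $D$ combined with the $\tau$-approximate oracle guarantee gives $L_D(f)-L_D(g)<\alpha$ for every $g\in\clipa[f]$, and uniform convergence over $\cF_{\le T+1}$, which contains both $f$ and $g$, transfers this to $\cD$ at the cost of $2\xi$. Your remark that the sample size should be stated in terms of $d_{T+1}$ rather than $d_T$ is a fair point of care that the paper leaves implicit.
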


\begin{proof}
Let $\widetilde f$ be the rejected predictor computed from $f$ in the last round. Since the algorithm stops,
\[
L_D(f)-L_D(\widetilde f)<\alpha-\tau.
\]
By the $\tau$-approximate oracle guarantee, for every $g\in\clipa[f]$,
\[
L_D(\widetilde f)\le L_D(g)+\tau.
\]
Combining these inequalities gives
\[
L_D(f)-L_D(g)<\alpha.
\]
Since $f,g\in\cF_{\le T+1}$, uniform convergence then gives
\[
L_{\cD}(f)-L_{\cD}(g)<\alpha+2\xi.
\]
\end{proof}

\subsection{Multicalibration Generalization}
\label{sec:gen-mc}
We now demonstrate that the multicalibration guarantees also generalize. As before, the guarantees with be with respect to a bounded version of $\cB$. We begin with the guarantee for the final model generated without rounding to discrete level sets.

\begin{theorem}[Non-rounded multicalibration generalization for additive factorized $\cA$]
\label{thm:non-rounded-mc-generalization}
Fix distribution $\cD\in\Delta(\cX\times\cY)$, and parameters $\xi,\delta,\alpha\in(0,1)$ and $\tau\in[0,\alpha)$.
Let $n$ be large enough for Theorem \ref{lem:oos-per-round-error} and Corollary \ref{cor:final-round} to hold with parameter $T$ for the generated classes of $\cA$.
Run $\featureboost(\cA,D,f_0,\alpha,\tau)$ for $D\sim\cD^n$ and $f_0\in\cF_0$, and suppose the run has at most $T$ accepted updates and halts with output $f$.
Let
\[
\gamma:=\alpha+2\xi.
\]
Assume that for every $b\in\cB$, every $c\in\cC$ with $|c(u)|\le 1$ for all $u$, every $\eta\ge0$, and every sign $\sigma\in\{-1,1\}$, the clipped signed update
\[
x\mapsto \clip\left(f(x)+\eta\sigma b(x)c(f(x))\right)
\]
belongs to $\clipa[f]$.
Then with probability at least $1-\delta$, for every $M\ge 0$, every $b\in\cB_M(\cD)$, and every such $c\in\cC$,
\[
\left|\E_{\cD}\left[b(x)c(f(x))(y-f(x))\right]\right|
\le
\sqrt{\gamma M}.
\]
\end{theorem}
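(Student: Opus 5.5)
The plan is to repeat the proof of Theorem~\ref{thm:augmented-general}, but measure the loss decrease on the population $\cD$ using the generalized stopping guarantee of Corollary~\ref{cor:final-round} instead of the in-sample stopping rule. Condition on the probability $1-\delta$ event of Theorem~\ref{lem:oos-per-round-error} and Corollary~\ref{cor:final-round}. On that event, every $g\in\clipa[f]$ satisfies $L_\cD(f)-L_\cD(g)<\gamma$. This single event is uniform over all $g$, so it covers all $M$, $b$, $c$ simultaneously; no further union bound is needed.

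Fix $M\ge 0$, $b\in\cB_M(\cD)$ and $c\in\cC$ with $|c|\le 1$. Suppose towards a contradiction that $|\E_\cD[b(x)c(f(x))(y-f(x))]|>\sqrt{\gamma M}$. In particular $M>0$, since if $M=0$ then $b=0$ $\cD$-a.s. and the left side vanishes. Let $\sigma$ be the sign of $\E_\cD[b(x)c(f(x))(y-f(x))]$, and define the unclipped update $f'(x)=f(x)+\eta\sigma b(x)c(f(x))$ with $\eta=\sqrt{\gamma/M}\ge 0$. Expanding the square as in Theorem~\ref{thm:augmented-general}:
\[
L_\cD(f')-L_\cD(f)=\eta^2\E[b^2c^2(f)]-2\eta\sigma\E[bc(f)(y-f)]<\eta^2M-2\eta\sqrt{\gamma M}=-\gamma .
\]
Here we use $c^2\le 1$ and $\E[b^2]\le M$.

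Next let $g=\clip(f')$. By the hypothesis of the theorem, $g\in\clipa[f]$. Since $y\in[0,1]$, clipping to $[0,1]$ can only move each prediction closer to $y$, so $(g(x)-y)^2\le (f'(x)-y)^2$ pointwise. This gives $L_\cD(g)\le L_\cD(f')<L_\cD(f)-\gamma$, which contradicts Corollary~\ref{cor:final-round}. Hence $|\E_\cD[b\,c(f)(y-f)]|\le\sqrt{\gamma M}$.

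The main care point is the quantifier order. The statement fixes the high-probability event first and then quantifies over all $M,b,c$. This works because Corollary~\ref{cor:final-round} is uniform over $\clipa[f]$, since it is derived from uniform convergence over $\cF_{\le T+1}$, and the $g$ constructed above lies in $\clipa[f]\subseteq\cF_{\le T+1}$. The membership assumption in the theorem is exactly what guarantees this. A second point to check is that the strict inequality survives. The assumption gives strictly $>\sqrt{\gamma M}$, so the decrease is strictly more than $\gamma$, while Corollary~\ref{cor:final-round} gives a strict $<\gamma$. The contradiction is therefore clean.
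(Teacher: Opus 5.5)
Your proposal is correct and follows essentially the same route as the paper: condition on the uniform event of Corollary~\ref{cor:final-round}, construct the clipped signed update $\clip(f+\eta\sigma b\,c(f))\in\clipa[f]$, use that clipping cannot increase squared loss when $y\in[0,1]$, and expand the square. The only differences are cosmetic. The paper argues directly with the step $\eta=|\rho|/M$ to get $\rho^2/M<\gamma$, whereas you argue by contradiction with $\eta=\sqrt{\gamma/M}$ as in Theorem~\ref{thm:augmented-general}; your explicit treatment of $M=0$ covers a case the paper leaves implicit.
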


\begin{proof}
We first leverage Corollary~\ref{cor:final-round} to get a guarantee on the loss of $f$ with respect to models in $\clipa$ on the distribution $\cD$. Recall that with probability at least $1-\delta$, when \featureboost is evaluated on a sample $D$ with sufficiently large size $m$, every model $g\in\clipa[f]$ satisfies
\[
L_{\cD}(f)-L_{\cD}(g)<\gamma.
\]
Condition on this event, and fix $b\in\cB_M(\cD)$, and $c\in\cC$ with $|c(u)|\le1$.
Let
\[
w(x):=b(x)c(f(x)),
\qquad
\rho:=\E_{\cD}[w(x)(y-f(x))]
\]
be the product model of $b$ and $c$ and its correlation with the residuals of the final model $f$ output by \featureboost. Since $\vert c(u) \vert \le 1$ and $b\in\cB_M(\cD)$, $\E_{\cD}[w(x)^2]\le M$.
Say that the correlation $\rho\ne0$ and let $\sigma=\sign(\rho)$ be the direction of the bias of $f$ on this weighted subset of $D$.
For any $\eta\ge0$, let
\[
g_\eta(x):=
\clip\left(f(x)+\eta\sigma w(x)\right)
\]
be the shifted then clipped model that offsets this bias. Note that by the closure assumptions, $g_\eta\in\clipa[f]$. Furthermore, because $y\in[0,1]$, clipping to $[0,1]$ cannot increase squared loss, so
\begin{align*}
L_{\cD}(g_\eta)
&\le
\E_{\cD}\left[\left(f(x)+\eta\sigma w(x)-y\right)^2\right]\\
&=
L_{\cD}(f)-2\eta|\rho|+\eta^2\E_{\cD}[w(x)^2]\\
&\le
L_{\cD}(f)-2\eta|\rho|+\eta^2M.
\end{align*}
Thus the difference in the final loss $f$ and $g_\eta$ is bounded below by $2\eta|\rho|-\eta^2M$ for every $\eta\ge0$. Choosing $\eta=\vert \rho \vert/M$ gives $\vert\rho \vert^2/M<\gamma$, and therefore
\[
\vert\rho\vert
=
\left|\E_{\cD}\left[b(x)c(f(x))(y-f(x))\right]\right|
\le \sqrt{\gamma M}.
\]
\end{proof}

We now give the final guarantee in terms of a predictor which is discretized following the run of $\featureboost$, so that we can achieve the canonical multicalibration error $K_{1,\cD}$ from Definition~\ref{def:mc}. 

\begin{lemma}[Error of equal-width discretization]
\label{lem:equal-width-discretization-error}
Let $K\in\bZ^+$, and let
\[
\cS
:=
\left\{[0,1/K),[1/K,2/K),\ldots,[(K-1)/K,1]\right\}
\]
be the partition of $[0,1]$ into $K$ equal-width bins.
If $\bar h$ is the discretization of a predictor $h:\cX\to[0,1]$ on $\cS$ from Definition~\ref{def:discretization}, then
\[
|\bar h(x)-h(x)|
\le
\frac{1}{K}
\qquad\text{for every }x\in\cX.
\]
\end{lemma}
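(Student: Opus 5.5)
The claim reduces to the observation that $\bar h(x)$ and $h(x)$ always lie in the same bin of $\cS$, and every bin has width $1/K$. By Definition~\ref{def:discretization}, for each $x$ let $S(x)\in\cS$ be the unique bin with $h(x)\in S(x)$. Uniqueness holds because the bins $[j/K,(j+1)/K)$ for $j<K-1$ and the closed last bin $[(K-1)/K,1]$ partition $[0,1]$, and $h$ maps into $[0,1]$. Then $\bar h(x)=v_{S(x)}:=\E[h(x')\mid h(x')\in S(x)]$, where $x'\sim\cD$. The proof has three steps.

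First, I show that $v_S$ lies in the closure $\overline{S}=[j/K,(j+1)/K]$ of its bin $S$. This holds whenever $\Pr[h(x')\in S]>0$, since $v_S$ is then a conditional expectation of a random variable supported in $S\subseteq\overline S$, and expectations of random variables valued in an interval stay in that interval (monotonicity of expectation applied to both endpoints). This is the only step needing any care. The bin may be half-open, so the mean might land on the open endpoint only in a degenerate limit. Either way it lies in the closed interval, which is all the argument needs.

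Second, I handle bins with zero probability. If $\Pr[h(x')\in S]=0$, then $\bar h$ on such $x$ is defined on a measure-zero set, so I will adopt the convention that $v_S$ is any point of $\overline S$, for example its midpoint. The inequality must hold for every $x$, not just almost every $x$, which is why I fix this convention explicitly. In the empirical setting, where $\cD$ is uniform over a sample, any $x$ in the support has positive-probability bins, so the issue disappears there.

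Third, I combine the two facts. Both $h(x)\in S(x)\subseteq\overline{S(x)}$ and $\bar h(x)=v_{S(x)}\in\overline{S(x)}$. Two points in a closed interval of length $1/K$ differ by at most $1/K$, so $|\bar h(x)-h(x)|\le 1/K$.

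I expect no real obstacle here. The only subtlety is the well-definedness of $v_S$ on null bins, which I resolve by the convention above.
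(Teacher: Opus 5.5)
Your proof is correct and follows the paper's argument: $v_S$ lies in (the closure of) its bin, either as a conditional mean or by a convention on null bins, and any two points of an interval of width $1/K$ are within $1/K$ of each other. The paper asserts the slightly stronger $v_S\in S$, which holds because a random variable that is almost surely below an open endpoint $b$ has mean strictly below $b$, so your hedge about a ``degenerate limit'' is unnecessary, though working with $\overline S$ is all the argument needs.
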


\begin{proof}
For each bin $S\in\cS$, the discretized value $v_S$ lies in $S$: it is either the conditional mean of $h(x)$ on the event $h(x)\in S$, or an arbitrary representative chosen in $S$ when this event has zero probability.
Since every bin in $\cS$ has width $1/K$, any $x$ with $h(x)\in S$ satisfies $|\bar h(x)-h(x)|=|v_S-h(x)|\le1/K$.
\end{proof}

\begin{theorem}[Final-discretization multicalibration generalization]
\label{thm:final-discretization-mc-generalization}
Work in the setting of Theorem~\ref{thm:non-rounded-mc-generalization}.
Let $\gamma:=\alpha+2\xi$, using the parameters from that theorem.
Let $K\in\bZ^+$, let $\cS$ be the equal-width partition from Lemma~\ref{lem:equal-width-discretization-error}, and let $\bar f=\discrete(f;\cS)$.
Assume that for every sign vector $\sigma\in\{-1,1\}^{\cS}$, the signed bin function
\[
c_\sigma(u)
:=
\sum_{S\in\cS}\sigma_S\1(u\in S)
\]
belongs to $\cC$.
Then, with probability at least $1-\delta$, for every $M\ge0$ and every $b\in\cB_M(\cD)$,
\[
K_{1,\cD}(\bar f,b)
\le
\sqrt{\gamma M}
+\frac{\sqrt{M}}{K}.
\]
\end{theorem}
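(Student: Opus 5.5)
We will follow the template of Lemma~\ref{lemma:boosting-mc-value}, replacing the population boosting guarantee (Theorem~\ref{thm:augmented-general}) with its generalized counterpart, Theorem~\ref{thm:non-rounded-mc-generalization}. First we condition on the probability-$(1-\delta)$ event of Theorem~\ref{thm:non-rounded-mc-generalization}. On that event, for every $M\ge0$, every $b\in\cB_M(\cD)$, and every $c\in\cC$ with $|c(u)|\le1$, we have
\[
\abs{\E_\cD[b(x)c(f(x))(y-f(x))]}\le\sqrt{\gamma M}.
\]
Since the event is a single event independent of the choice of $b$, $M$ and $\sigma$, no further union bound is needed. All remaining steps are deterministic.

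Next, fix $M$ and $b\in\cB_M(\cD)$. Write $v_S$ for the discretized value on bin $S$, so that $\bar f(x)=v_S$ exactly when $f(x)\in S$. Then
\[
K_{1,\cD}(\bar f,b)=\sum_{S\in\cS}\abs{\E_\cD[b(x)(y-v_S)\1(f(x)\in S)]}.
\]
If two bins happened to share a representative value, merging them only decreases the sum by the triangle inequality, so this bound is safe. On each bin, add and subtract $f(x)$ and apply the triangle inequality. This splits the sum into a residual term $\sum_S\abs{\E[b(x)(y-f(x))\1(f(x)\in S)]}$ and a rounding term $\sum_S\abs{\E[b(x)(f(x)-v_S)\1(f(x)\in S)]}$.

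For the residual term, choose $\sigma_S=\sign(\E[b(x)(y-f(x))\1(f(x)\in S)])$. The sum then equals $\E[b(x)c_\sigma(f(x))(y-f(x))]$. Because the bins are disjoint, $|c_\sigma|\le1$, and by assumption $c_\sigma\in\cC$, so the conditioned bound gives at most $\sqrt{\gamma M}$. For the rounding term, Lemma~\ref{lem:equal-width-discretization-error} gives $|f(x)-v_S|\le1/K$. Hence this term is at most $\frac1K\E_\cD[|b(x)|]\le\frac{\sqrt M}{K}$, using Jensen/Cauchy–Schwarz and $\E_\cD[b^2]\le M$. Adding the two terms yields the claim.

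The main subtlety is checking that $c_\sigma$ meets the closure hypothesis of Theorem~\ref{thm:non-rounded-mc-generalization}. That is, we need the clipped signed update $\clip(f+\eta\sigma' b\,c_\sigma(f))$ to lie in $\clipa[f]$. This is part of the standing assumptions of that theorem's setting for every $c\in\cC$ with $|c|\le1$, so it should transfer directly. We also need $\sigma$ to be chosen after conditioning on the good event, which is fine because the event's guarantee is uniform over all $c\in\cC$.
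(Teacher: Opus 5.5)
Your proposal is correct and is essentially the argument the paper intends. The paper states this theorem without a separate proof, and the indicated route is the proof of Lemma~\ref{lemma:boosting-mc-value} with Theorem~\ref{thm:augmented-general} replaced by the high-probability bound of Theorem~\ref{thm:non-rounded-mc-generalization}, and the bin-width bound taken from Lemma~\ref{lem:equal-width-discretization-error}, exactly as you do (your extra care about the good event being uniform over $b$, $M$, $\sigma$ and about coinciding bin representatives is sound and does not change the approach).
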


\subsection{Bounding the Expressivity of $\cF_{\le T}$}
\label{sec:gen-unrolled}

In the above sections, we achieved generalization assuming that the ``unrolled" hypothesis class of all functions that could be generated across $T$ rounds, $\cF_{\le T}$, has bounded pseudodimension.
We now give a reusable way to certify such a bound by counting the number of real parameters and arithmetic/comparison operations needed to evaluate the unrolled class.

\begin{definition}[Bounded arithmetic-comparison evaluation]
\label{def:bounded-arithmetic-evaluation}
Fix integers $W,k\ge1$.
We say that a real-valued predictor class $\cG$ has a $(W,k)$ bounded arithmetic-comparison evaluation if every $g\in\cG$ can be specified by at most $W$ real parameters and evaluated by a fixed algorithm which, given these parameters and a real representation of an input, computes $g$ using at most $k$ operations of the following kinds: arithmetic operations $+,-,\times,/$ on real numbers; comparisons $>,\ge,<,\le,=,\ne$ of real numbers; and conditional jumps.
A conditional jump is a branch in the fixed algorithm whose condition is the result of one of these comparisons and whose possible target instructions are fixed in advance by the algorithm.
\end{definition}

We will use the arithmetic-comparison VC dimension bound of Anthony and Bartlett, restated in Appendix~\ref{app:textbook-results} as Theorem~\ref{thm:anthony-bartlett-arithmetic-vc}.

\begin{lemma}[Pseudodimension from bounded arithmetic-comparison evaluation]
\label{lem:arith-comparison-pdim}
If a real-valued class $\cG$ has a $(W,k)$ bounded arithmetic-comparison evaluation, then
\[
\operatorname{Pdim}(\cG)
\le
O(Wk).
\]
\end{lemma}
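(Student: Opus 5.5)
The plan is a reduction to the arithmetic-comparison VC bound of Anthony and Bartlett (Theorem~\ref{thm:anthony-bartlett-arithmetic-vc}), which bounds the VC dimension of a binary-valued class whose members are specified by $W$ real parameters and evaluated by at most $k$ arithmetic operations, comparisons and conditional jumps by $O(Wk)$. Pseudodimension is by definition the VC dimension of the ``below-graph'' class
\[
\cG^{+} = \{(x,r)\mapsto \1(g(x)-r \ge 0) : g\in\cG\},
\]
since a set $\{x_1,\ldots,x_n\}$ is pseudo-shattered by $\cG$ with witnesses $r_1,\ldots,r_n$ exactly when $\{(x_1,r_1),\ldots,(x_n,r_n)\}$ is shattered by $\cG^{+}$. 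So it suffices to show that $\cG^{+}$ has a bounded arithmetic-comparison evaluation with essentially the same parameters.

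Given $g\in\cG$ with its $(W,k)$ evaluation, I will define the evaluation algorithm for $\cG^{+}$ on input $(x,r)$ as follows. It takes the same $W$ real parameters and treats $r$ as one additional real input coordinate rather than a parameter. It first runs the fixed algorithm for $g$ on $x$, using at most $k$ operations. It then performs one subtraction $g(x)-r$ and one comparison with $0$, followed by a conditional jump that outputs $1$ or $0$. This uses at most $k+3$ operations and $W$ parameters, and the branch targets are fixed in advance, so Definition~\ref{def:bounded-arithmetic-evaluation} is satisfied. Applying Theorem~\ref{thm:anthony-bartlett-arithmetic-vc} gives
\[
\mathrm{VC}(\cG^{+}) \le O(W(k+3)) = O(Wk),
\]
using $k\ge 1$. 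By the equivalence above, this bound transfers to $\operatorname{Pdim}(\cG)$.

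The main obstacle is matching the exact form of the restated textbook theorem. Anthony and Bartlett's bound is stated for $\{0,1\}$-valued classes where the algorithm outputs a sign or bit, and the precise constant may come as $O(W(k+W))$ or $O(Wk)$ depending on the version. If the restated theorem reads $O(W(W+k))$ or similar, I would note that every parameter must be read by at least one operation, so $W \le O(k)$ without loss of generality, which collapses the bound to $O(Wk)$. I also need to confirm that the theorem allows the input $(x,r)$ to be real-valued with an arbitrary dimension. This holds because only the parameter count and the operation count enter the bound.
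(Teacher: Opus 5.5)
Your proposal is correct and follows the paper's proof essentially step for step: pass to the subgraph indicator class $(x,s)\mapsto \1(g(x)\ge s)$ with the threshold treated as an extra real input, evaluate it with the same $W$ parameters and $k+O(1)$ operations, apply Theorem~\ref{thm:anthony-bartlett-arithmetic-vc}, and note that pseudo-shattering with witnesses yields shattering of the pairs $(x_i,s_i)$. Your contingency paragraph is unnecessary, since the restated theorem already gives $4W(k+2)$; its fallback claim that $W\le O(k)$ without loss of generality is also false in general, because different computation paths can read different parameters (a depth-$\ell$ tree has $2^{\ell}$ leaf values, but each evaluation reads only $O(\ell)$ of them).
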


\begin{proof}
Consider the subgraph indicator class
\[
\cH
:=
\left\{
(x,s)\mapsto \1(g(x)\ge s): g\in\cG
\right\}.
\]
For every $g\in\cG$, the corresponding indicator in $\cH$ can be computed by first evaluating $g(x)$ and then comparing the result to the additional real input $s$.
Thus $\cH$ is computed using the same $W$ real parameters and at most $k+O(1)$ arithmetic/comparison operations.
By Theorem~\ref{thm:anthony-bartlett-arithmetic-vc}, the VC dimension of this binary class is at most $O(W(k+1))$.
If a set $\{x_1,\ldots,x_m\}$ is pseudo-shattered by $\cG$ with witnesses $s_1,\ldots,s_m$, then the set $\{(x_i,s_i):i\in[m]\}$ is shattered by $\cH$.
Therefore $\operatorname{Pdim}(\cG)\le \operatorname{VCdim}(\cH)\le O(Wk)$.
\end{proof}

\begin{theorem}[Sample complexity from bounded unrolled expressivity]
\label{thm:bounded-expressivity-sample-complexity}
Fix $T\in\bZ^+$.
Suppose the generated class $\cF_{\le T+1}$ has a $(W_{T+1},k_{T+1})$ bounded arithmetic-comparison evaluation in the sense of Definition~\ref{def:bounded-arithmetic-evaluation}.
Then the conclusions of Theorems~\ref{thm:boosting-uniform-conv}, \ref{lem:oos-per-round-error}, \ref{thm:non-rounded-mc-generalization}, and \ref{thm:final-discretization-mc-generalization} hold with sample size
\[
m
\ge
\tilde{O}\left(\frac{\log(1/\delta)+W_{T+1}k_{T+1}}{\xi^2}\right),
\]
with the same parameters and assumptions as in those theorems.
\end{theorem}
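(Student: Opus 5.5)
The plan is to reduce everything to the pseudodimension hypothesis used by the earlier theorems: Theorems~\ref{thm:boosting-uniform-conv}, \ref{lem:oos-per-round-error}, \ref{thm:non-rounded-mc-generalization}, and \ref{thm:final-discretization-mc-generalization} only use the sample size through the uniform convergence event of Theorem~\ref{thm:boosting-uniform-conv}. That event is applied to $\cF_{\le T}$, and in Corollary~\ref{cor:final-round} to $\cF_{\le T+1}$ (the rejected last-round predictor and the comparators $g\in\clipa[f]$ live there). So it suffices to show that $d_{T+1}=\operatorname{Pdim}(\cF_{\le T+1})\le O(W_{T+1}k_{T+1})$, and then substitute this bound into the sample-size requirement.

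The first step invokes Lemma~\ref{lem:arith-comparison-pdim} on $\cG=\cF_{\le T+1}$. The hypothesis gives a $(W_{T+1},k_{T+1})$ bounded arithmetic-comparison evaluation, so the lemma yields $d_{T+1}\le O(W_{T+1}k_{T+1})$. The second step uses monotonicity: $\cF_{\le T}\subseteq\cF_{\le T+1}$, so $d_T\le d_{T+1}$, and a pseudo-shattered set for the smaller class is pseudo-shattered for the larger one.

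The third step plugs these into the requirement $n\ge\tilde O((\log(1/\delta)+d_T)/\xi^2)$ of Theorem~\ref{thm:boosting-uniform-conv}. Because the explicit bound there is monotone in $d$, namely $\frac{32}{\xi^2}(\log(4/\delta)+1+\log(d+1)+d\log(32e/\xi))$, replacing $d$ by $O(W_{T+1}k_{T+1})$ gives the stated $m$. The uniform convergence event over $\cF_{\le T+1}$ then holds with probability $1-\delta$. Conditioned on that event, the proofs of Theorem~\ref{lem:oos-per-round-error}, Corollary~\ref{cor:final-round}, and Theorem~\ref{thm:non-rounded-mc-generalization} go through verbatim.

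For Theorem~\ref{thm:final-discretization-mc-generalization}, whose proof is not yet written, I would supply the argument here. Condition on the same event and apply Theorem~\ref{thm:non-rounded-mc-generalization} with $c=c_\sigma$, where $\sigma_S=\sign\E[b(x)(y-f(x))\1(f(x)\in S)]$. This bounds $\sum_S|\E[b(x)(y-f(x))\1(f(x)\in S)]|\le\sqrt{\gamma M}$. The rounding term is bounded by $\frac1K\E|b|\le\sqrt M/K$ via Lemma~\ref{lem:equal-width-discretization-error} and Cauchy--Schwarz, exactly as in Lemma~\ref{lemma:boosting-mc-value}.

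The main obstacle is bookkeeping rather than mathematics. First, the per-round union over $t$ must not cost extra factors; this holds because a single uniform event over the union class $\cF_{\le T+1}$ covers all rounds at once. Second, the data-dependent output must lie in $\cF_{\le T+1}$; this follows from the assumption of at most $T$ accepted updates. Third, the witness thresholds $s_i$ must be treated as extra real inputs rather than parameters in the arithmetic-comparison counting, and this is already handled inside Lemma~\ref{lem:arith-comparison-pdim}.
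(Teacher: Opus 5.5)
Your proposal is correct and follows the paper's own proof. Like the paper, you apply Lemma~\ref{lem:arith-comparison-pdim} to $\cF_{\le T+1}$ to get $d_{T+1}\le O(W_{T+1}k_{T+1})$, bound $d_T$ via $\cF_{\le T}\subseteq\cF_{\le T+1}$, and substitute into the sample-size requirements of the earlier theorems. Your added argument for Theorem~\ref{thm:final-discretization-mc-generalization}, which the paper only cites, mirrors Lemma~\ref{lemma:boosting-mc-value}; the data-dependent choice $c=c_\sigma$ is legitimate because Theorem~\ref{thm:non-rounded-mc-generalization} holds uniformly over all admissible $c$ on a single high-probability event.
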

\begin{proof}
By Lemma~\ref{lem:arith-comparison-pdim},
\[
d_{T+1}
=
\operatorname{Pdim}(\cF_{\le T+1})
\le
O(W_{T+1}k_{T+1})
\]
Since $\cF_{\le T}\subseteq\cF_{\le T+1}$, the same bound also upper bounds $d_T$.
Substituting these bounds into Theorems~\ref{thm:boosting-uniform-conv} and \ref{lem:oos-per-round-error} gives the displayed sample size for uniform convergence and loss generalization.
The residual-correlation and final-discretization multicalibration guarantees then follow from Theorems~\ref{thm:non-rounded-mc-generalization} and \ref{thm:final-discretization-mc-generalization}.
\end{proof}

\subsection{Generalization Guarantees for Decision Trees}
\label{sec:dt-gen}

Theorem~\ref{thm:bounded-expressivity-sample-complexity} reduces finite-sample generalization to the pseudodimension of the recursively generated boosted class. However, this is only useful if the unrolled class can be bounded for natural choices of the update family. We now give such an instantiation for clipped additive updates whose auditor class is a bounded-depth decision-tree class. We show that the sample size depends polynomially on the unrolled tree-update description length and hence the number of boosting rounds, rather than on the number of possible level sets of the final predictor.
Throughout this instantiation, $W$ counts real parameters and $k$ counts arithmetic/comparison operations in the sense of Definition~\ref{def:bounded-arithmetic-evaluation}.

\begin{theorem}[Decision-tree instantiation of the generalization bound]
\label{thm:tree-instantiation-sample-complexity}
Fix a distribution $\cD\in\Delta(\cX\times[0,1])$, parameters $\xi,\delta,\alpha\in(0,1)$ and $\tau\in[0,\alpha)$, and integers $\treedepth,K,T,q\in\bZ^+$. Take $\cX=[0,1]^q$.
Let $\cB_{\treedepth}$ be the class of depth-at-most-$\treedepth$ axis-aligned threshold decision trees on $\cX$, with leaf values in $[-1,1]$.

For the initial predictor class, take
\[
\cF_0
:=
\left\{
f_0\in\cB_{\treedepth} : f_0(x)\in[0,1]\text{ for every }x\in\cX
\right\},
\]
that is, the same tree class with leaf values restricted to $[0,1]$.
Let $\cS_K$ be the equal-width partition  of width $1/K$ from Lemma~\ref{lem:equal-width-discretization-error}.
Let $\cC_K$ be the class of bin functions
\[
\cC_K
:=
\left\{
u\mapsto
\sum_{S\in\cS_K}\theta_S\1(u\in S)
:
\theta_S\in[-1,1]\text{ for every }S\in\cS_K
\right\}.
\]
Thus every $c\in\cC_K$ is bounded by $1$, and $\cC_K$ contains the signed bin functions needed in Theorem~\ref{thm:final-discretization-mc-generalization}.
Let $\cA_{\treedepth,K}$ be the additive joint class
\[
\cA_{\treedepth,K}
:=
\left\{
(x,z)\mapsto
z+\eta b(x)c(z)
:
b\in\cB_{\treedepth},\ c\in\cC_K,\ \eta\ge0
\right\},
\]
and let $\clipa_{\treedepth,K}$ be its clipped version,
\[
\clipa_{\treedepth,K}
:=
\left\{
(x,z)\mapsto
\clip\left(z+\eta b(x)c(z)\right)
:
b\in\cB_{\treedepth},\ c\in\cC_K,\ \eta\ge0
\right\}.
\]
Run $\featureboost(\cA_{\treedepth,K},D,f_0,\alpha,\tau)$ on a sample $D\sim\cD^n$, with $f_0\in\cF_0$, $\tau<\alpha$, and suppose the run has at most $T$ accepted updates and halts with output $f$.
Let $\bar f=\discrete(f;\cS_K)$ and $\gamma:=\alpha+2\xi$.
If
\[
n
\ge
\tilde{O}\left(
\frac{
\log(1/\delta)
+
(T+2)^2(q2^{\treedepth}+K)^2
}{\xi^2}
\right),
\]
then with probability at least $1-\delta$,
\[
\left|\E_{\cD}\left[b(x)c(f(x))(y-f(x))\right]\right|
\le
\sqrt{\gamma}
\]
for every $b\in\cB_{\treedepth}$ and every $c\in\cC_K$.
Moreover, for every $b\in\cB_{\treedepth}$,
\[
K_{1,\cD}(\bar f,b)
\le
\sqrt{\gamma}
+
\frac{1}{K}.
\]
\end{theorem}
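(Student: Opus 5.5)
The plan is to reduce everything to Theorem~\ref{thm:bounded-expressivity-sample-complexity}. That requires exhibiting a $(W_{T+1},k_{T+1})$ bounded arithmetic-comparison evaluation of the unrolled class $\cF_{\le T+1}$ with $W_{T+1}k_{T+1}=O((T+2)^2(q2^{\treedepth}+K)^2)$. Once that is in hand, the two displayed conclusions follow from Theorem~\ref{thm:non-rounded-mc-generalization} and Theorem~\ref{thm:final-discretization-mc-generalization} with $M=1$. This is legitimate because every $b\in\cB_{\treedepth}$ has leaves in $[-1,1]$, so $\E[b^2]\le 1$ and $\cB_{\treedepth}=\cB_{\treedepth,1}(\cD)$. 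The closure hypothesis of Theorem~\ref{thm:non-rounded-mc-generalization} is immediate here. For $\sigma=-1$ one replaces $c$ by $-c\in\cC_K$ (the class is symmetric since $\theta_S\in[-1,1]$), so the signed update $\clip(f+\eta\sigma b\,c(f))$ with $\eta\ge0$ lies in $\clipa_{\treedepth,K}[f]$. Likewise, the signed bin functions $c_\sigma$ with $\theta_S=\sigma_S\in\{\pm1\}$ lie in $\cC_K$, as the statement already notes.

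The main work, and the step I expect to be fiddly, is the counting. First I would describe a single depth-$\treedepth$ axis-aligned tree by its real parameters. It has at most $2^{\treedepth}$ internal thresholds and $2^{\treedepth}$ leaf values. The coordinate choice at each node is discrete, but to keep the evaluation algorithm \emph{fixed} I would encode it either by a one-hot real vector of length $q$ per node, or by a scan over all $q$ coordinates with equality tests against a real-coded index. This gives $O(q2^{\treedepth})$ parameters and $O(q2^{\treedepth})$ operations per tree. Next, $c\in\cC_K$ is evaluated by $K$ stored values $\theta_S$ and a linear scan of $O(K)$ comparisons of $z$ against the fixed grid points $j/K$; a binary search would also do. Then one update $z\mapsto\clip(z+\eta b(x)c(z))$ costs $O(q2^{\treedepth}+K)$ parameters (including $\eta$) and $O(q2^{\treedepth}+K)$ operations, with $O(1)$ extra for the product, the sum and the two clipping comparisons. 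Here $\cF_0$ is a tree with $[0,1]$ leaves, so its clip is the identity.

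To handle the union $\cF_{\le T+1}$ with a single fixed algorithm, I would note that $\cF_{t}\subseteq\cF_{T+1}$ for $t\le T+1$. Setting $\eta=0$ gives the identity update, which is allowed since $\eta\ge0$ and $f_{t-1}\in[0,1]$. Hence $\cF_{\le T+1}=\cF_{T+1}$, and it suffices to evaluate a composition of $T+1$ updates on top of $f_0$. Unrolling yields $W_{T+1}=O((T+2)(q2^{\treedepth}+K))$ and $k_{T+1}=O((T+2)(q2^{\treedepth}+K))$. Their product is the claimed $(T+2)^2(q2^{\treedepth}+K)^2$.

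Plugging this into Lemma~\ref{lem:arith-comparison-pdim} and Theorem~\ref{thm:bounded-expressivity-sample-complexity} gives the sample-size condition. Under it, with probability $1-\delta$, Corollary~\ref{cor:final-round} and the two generalization theorems apply with $\gamma=\alpha+2\xi$ and $M=1$. This yields $|\E_\cD[b\,c(f)(y-f)]|\le\sqrt{\gamma}$ and $K_{1,\cD}(\bar f,b)\le\sqrt{\gamma}+1/K$.

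The one subtlety I would double-check is the discrete coordinate choice in the trees. The Anthony--Bartlett bound needs a fixed program parameterized only by reals, and the encoding above is exactly what makes the cost $q2^{\treedepth}$ rather than $2^{\treedepth}$. I would also confirm that the run having at most $T$ accepted updates means the rejected last candidate lies in $\cF_{\le T+1}$. That is precisely why the parameter count uses $T+2$ layers, one of them being the initial tree.
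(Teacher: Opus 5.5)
Your proposal is correct and follows the paper's proof essentially step for step. You bound the arithmetic-comparison cost of a tree, a bin function and one clipped update, unroll over the initial tree plus $T+1$ updates to get $O((T+2)^2(q2^{\treedepth}+K)^2)$ via Lemma~\ref{lem:arith-comparison-pdim}, and then apply Theorems~\ref{thm:non-rounded-mc-generalization} and~\ref{thm:final-discretization-mc-generalization} with $M=1$, using the sign symmetry of $\cC_K$. Your one-hot encoding of split coordinates is the paper's device of upper-bounding axis-aligned splits by linear thresholds, and your $\eta=0$ padding observation ($\cF_{\le T+1}=\cF_{T+1}$) explicitly justifies the single fixed evaluation program that the paper's ``composing at most $T+1$ updates'' leaves implicit.
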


\begin{lemma}[Arithmetic evaluation of axis-aligned decision trees]
\label{lem:tree-arithmetic-evaluation}
An axis-aligned decision tree of depth at most $\treedepth$ on $\cX=[0,1]^q$ may be represented using $W_B=O(q2^{\treedepth})$ real parameters and evaluated using $k_B=O(q2^{\treedepth})$ arithmetic/comparison operations.
\end{lemma}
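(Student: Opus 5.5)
We prove Lemma~\ref{lem:tree-arithmetic-evaluation} by giving an explicit evaluation algorithm for a depth-$\treedepth$ tree that fits Definition~\ref{def:bounded-arithmetic-evaluation}. We then count its real parameters and the operations it performs.

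\textbf{Representation.} Pad the tree to a complete binary tree of depth exactly $\treedepth$; this changes neither the function nor the bound. It has $2^{\treedepth}-1$ internal nodes and $2^{\treedepth}$ leaves. Each internal node $v$ stores a threshold $\theta_v\in\R$, and each leaf $\ell$ stores a value $\lambda_\ell\in[-1,1]$. The split coordinate $j_v\in[q]$ is discrete, so it cannot be hard-coded into a fixed algorithm that serves every tree. We encode it with real parameters instead: a one-hot vector $w_v\in\{0,1\}^q$ with $w_v^{(j_v)}=1$. The node tests whether $\langle w_v,x\rangle\ge\theta_v$. Node $v$ therefore uses $q+1$ reals and each leaf uses one, for a total of
\[
W_B=(2^{\treedepth}-1)(q+1)+2^{\treedepth}=O(q2^{\treedepth}).
\]
Padding nodes can take $w_v=e_1$ and give both children the same leaf values, so they do not change the function.

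\textbf{Evaluation.} The fixed algorithm starts at the root and repeats the following $\treedepth$ times. At the current node $v$, it computes $\langle w_v,x\rangle$ using $q$ multiplications and $q-1$ additions. It compares the result with $\theta_v$ once. It then makes a conditional jump to the code block for the left or right child. Every jump target is fixed in advance, because the algorithm is just the unrolled complete tree: it has one code block per node, and each block names its own parameter slots. On reaching a leaf, it outputs $\lambda_\ell$. The number of operations is
\[
k_B=\treedepth(2q+1)+O(1)=O(q\treedepth),
\]
which is at most $O(q2^{\treedepth})$. If we want to avoid data-dependent jumps, we can evaluate every node and combine the leaves with products of indicator bits. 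That version costs $O(q2^{\treedepth})$ operations, which still matches the stated bound.

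\textbf{Main obstacle.} The only real subtlety is that the feature index chosen at each node is combinatorial, while Definition~\ref{def:bounded-arithmetic-evaluation} requires one fixed algorithm parameterized only by reals. The one-hot inner product handles this, at a cost of a factor $q$ in both $W_B$ and $k_B$. An alternative is a chain of $q$ comparisons that selects the coordinate using a real index parameter $j_v$, which costs $O(q)$ operations per node. Either way, everything else is routine counting.
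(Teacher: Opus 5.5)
Your proof is correct and follows essentially the same route as the paper. The paper likewise upper-bounds each axis-aligned split by a general linear threshold over the $q$ coordinates (your one-hot $w_v$ is exactly this), counts $O(q)$ reals per internal node plus $2^{\treedepth}$ leaf values, and evaluates a single root-to-leaf path of $\treedepth$ thresholds at $O(q)$ operations each, giving $O(q\treedepth)\le O(q2^{\treedepth})$; your explicit handling of padding and fixed jump targets is a welcome but inessential refinement.
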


\begin{proof}
A depth-$\treedepth$ binary tree has at most $2^{\treedepth}-1$ internal nodes and $2^{\treedepth}$ leaves.
We upper bound each axis-aligned split by a general linear threshold over the $q$ coordinates, using $O(q)$ real parameters per internal node.
The leaf values contribute $O(2^{\treedepth})$ additional real parameters, so $W_B=O(q2^{\treedepth})$.
To evaluate the tree, the algorithm traverses a single root-to-leaf path, evaluating at most $\treedepth$ linear thresholds.
Each threshold evaluation uses $O(q)$ arithmetic/comparison operations, so the evaluation cost is $O(q\treedepth)$, which is at most $O(q2^{\treedepth})$ for $\treedepth\ge1$.
\end{proof}

\begin{lemma}[Size of the bin-function class]
\label{lem:bin-function-size}
Let $\cC_K$ be the class of bin functions
\[
\cC_K
:=
\left\{
u\mapsto
\sum_{S\in\cS_K}\theta_S\1(u\in S)
:
\theta_S\in[-1,1]\text{ for every }S\in\cS_K
\right\}.
\]
Then $\cC_K$ has a bounded arithmetic-comparison evaluation with $W_C=O(K)$ and $k_C=O(K)$.
\end{lemma}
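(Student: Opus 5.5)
The plan is to exhibit an explicit evaluation procedure that fits Definition~\ref{def:bounded-arithmetic-evaluation}. The parameters will be the $K$ coefficients $\theta_S$, $S\in\cS_K$. The breakpoints $j/K$ for $j\in\{1,\ldots,K-1\}$ are fixed constants and not learned; even if we count them as parameters, they add only $K-1$ more. Either way $W_C=O(K)$.

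For the evaluation, given the input $u\in[0,1]$, the algorithm scans $j=1,2,\ldots,K-1$. At each step it compares $u<j/K$. The value $j/K$ is either stored as a parameter or computed with one division, which costs $O(1)$ per step. On the first $j$ with $u<j/K$, a conditional jump outputs $\theta_{S_j}$, where $S_j=[(j-1)/K,j/K)$. If no comparison succeeds, the algorithm outputs $\theta_{S_K}$, which covers the closed last bin $[(K-1)/K,1]$. Every branch target is fixed in advance, so the procedure is a legitimate fixed algorithm. It uses at most $K-1$ comparisons and $O(K)$ auxiliary arithmetic operations, giving $k_C=O(K)$. The output is exactly $\sum_S\theta_S\1(u\in S)$ because the bins of $\cS_K$ partition $[0,1]$ with the stated half-open convention, so exactly one indicator fires.

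The only subtle point is matching the bin endpoint convention, in particular the closed right end of the last bin. The sequential-threshold scan handles this automatically. If inputs outside $[0,1]$ need to be handled, two extra comparisons route them to the value $0$ without changing the bounds. A binary search would even give $k_C=O(\log K)$, but the linear bound is all the statement requires.
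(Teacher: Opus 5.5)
Your proposal is correct and takes essentially the same approach as the paper: the $K$ cell values $\theta_S$ are the real parameters, the $K-1$ interior thresholds are fixed by the partition, and evaluation compares $u$ against these thresholds and selects the matching cell value in $O(K)$ operations. Your extra care with the closed last bin and with inputs outside $[0,1]$ refines that same construction without changing the bounds.
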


\begin{proof}
The partition $\cS_K$ has $K$ bins, so a function $c\in\cC_K$ has $K$ cell values.
It can be evaluated by comparing $z$ to the $K-1$ interior bin thresholds and selecting the corresponding cell value.
The thresholds are fixed by the partition, while the cell values contribute $O(K)$ real parameters; the comparisons and selection use $O(K)$ arithmetic/comparison operations.
\end{proof}

\begin{lemma}[Size of one clipped additive update]
\label{lem:clipped-additive-update-size}
Suppose every $b\in\cB_{\treedepth}$ has a bounded arithmetic-comparison evaluation using at most $W_B$ real parameters and $k_B$ arithmetic/comparison operations, and every $c\in\cC_K$ has a bounded arithmetic-comparison evaluation using at most $W_C$ real parameters and $k_C$ arithmetic/comparison operations.
Then every clipped additive update
\[
(x,z)\mapsto \clip\left(z+\eta b(x)c(z)\right)
\]
has a bounded arithmetic-comparison evaluation using
\[
W_{\mathrm{upd}}\le W_B+W_C+1+O(1),
\qquad
k_{\mathrm{upd}}\le k_B+k_C+O(1).
\]
In particular, using the bounds $W_B=O(q2^{\treedepth})$, $k_B=O(q2^{\treedepth})$, $W_C=O(K)$, and $k_C=O(K)$ gives
\[
W_{\mathrm{upd}}=O(q2^{\treedepth}+K),
\qquad
k_{\mathrm{upd}}=O(q2^{\treedepth}+K).
\]
\end{lemma}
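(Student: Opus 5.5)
The plan is to write down one explicit evaluation algorithm for the update map and count its parameters and operations term by term. Given an input $(x,z)$, the procedure runs in four stages, executed in order.

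\begin{enumerate}
\item First it runs the fixed evaluation algorithm for $b$ on $x$. This uses the $W_B$ real parameters of $b$ and at most $k_B$ operations, and produces the real value $b(x)$.
\item Next it runs the fixed evaluation algorithm for $c$ on $z$. This uses the $W_C$ real parameters of $c$ and at most $k_C$ operations, and produces $c(z)$.
\item It then computes $v:=z+\eta\, b(x)\, c(z)$. This needs one additional real parameter, $\eta$, and three arithmetic operations (two multiplications and one addition).
\item Finally it computes $\clip(v)$ by the fixed branching program ``if $v<0$ output $0$; else if $v>1$ output $1$; else output $v$.'' This uses two comparisons and two conditional jumps. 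The constants $0$ and $1$ are hard-coded in the algorithm (or, if one prefers, counted as $O(1)$ extra parameters).
\end{enumerate}

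The parameter count is the sum over the stages: $W_B$ from $b$, $W_C$ from $c$, one for $\eta$, and $O(1)$ for any constants. This gives $W_{\mathrm{upd}}\le W_B+W_C+1+O(1)$. The operation count sums the same way: $k_B+k_C$ from the two subroutines, three arithmetic operations, and $O(1)$ comparisons and jumps for the clip. This gives $k_{\mathrm{upd}}\le k_B+k_C+O(1)$.

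The one point that needs care is that Definition~\ref{def:bounded-arithmetic-evaluation} asks for a single \emph{fixed} algorithm for the whole class, with branch targets fixed in advance. I will check that sequentially composing the fixed algorithms for $\cB_{\treedepth}$ and $\cC_K$ with the fixed straight-line-plus-clip code above is again such an algorithm. This holds because the parameter vector is just the concatenation $(\theta_b,\theta_c,\eta)$, the jump targets inside each subroutine are relocated by a fixed offset, and the input representation of $(x,z)$ simply gives the reals $x$ to the first subroutine and $z$ to the second. I do not expect a real obstacle; this bookkeeping is the only place an error could slip in.

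For the ``in particular'' clause, I substitute the bounds from Lemma~\ref{lem:tree-arithmetic-evaluation} ($W_B,k_B=O(q2^{\treedepth})$) and Lemma~\ref{lem:bin-function-size} ($W_C,k_C=O(K)$). The $O(1)$ terms are absorbed, which yields $W_{\mathrm{upd}},k_{\mathrm{upd}}=O(q2^{\treedepth}+K)$.
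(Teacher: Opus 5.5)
Your proposal is correct and follows essentially the same route as the paper's proof: evaluate $b(x)$ and $c(z)$ with their own procedures, charge one extra real parameter for $\eta$ and $O(1)$ operations for the two multiplications, the addition of $z$, and the final clipping, then substitute the tree and bin-function bounds. The one thing you add is the explicit check that sequentially composing the fixed algorithms is again a single fixed algorithm (concatenated parameter vector, relocated jump targets), which the paper leaves implicit.
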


\begin{proof}
Evaluate $b(x)$ and $c(z)$ using their respective arithmetic-comparison procedures.
The step size $\eta$ is not discretized, so it contributes one additional real parameter.
The multiplication $b(x)c(z)$, multiplication by $\eta$, addition of $z$, and final clipping operation add only a constant number of parameters and arithmetic/comparison operations.
This gives the displayed generic bound, and substituting the preceding bounds for $\cB_{\treedepth}$ and $\cC_K$ gives the stated concrete bound.
\end{proof}

\begin{lemma}[Size of the unrolled generated class]
\label{lem:tree-instantiation-unrolled-size}
In the setting of Theorem~\ref{thm:tree-instantiation-sample-complexity}, the generated class $\cF_{\le T+1}$ has a bounded arithmetic-comparison evaluation using
\[
W_{T+1}
=
O\!\left(T(q2^{\treedepth}+K)\right),
\qquad
k_{T+1}
=
O\!\left(T(q2^{\treedepth}+K)\right).
\]
\end{lemma}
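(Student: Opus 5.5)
The plan is to unroll the recursion defining $\cF_{\le T+1}$ and write down one evaluation algorithm that covers every member. A function in $\cF_{t}$ for $t\le T+1$ has the form $f_t = g_t\circ\cdots$. Concretely, $f_0\in\cF_0$ is a depth-$\treedepth$ tree with leaves in $[0,1]$. Each later step is $f_s(x)=\clip\big(f_{s-1}(x)+\eta_s b_s(x)c_s(f_{s-1}(x))\big)$ with $b_s\in\cB_{\treedepth}$ and $c_s\in\cC_K$. The outer clip that defines $\clipa$ coincides with the clip already present in $\clipa_{\treedepth,K}$.

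The evaluation procedure proceeds in stages. First, evaluate $z_0=f_0(x)$ with the procedure of Lemma~\ref{lem:tree-arithmetic-evaluation}. This uses $O(q2^{\treedepth})$ parameters and operations, and is valid because $\cF_0$ is a subclass of $\cB_{\treedepth}$. Then, for $s=1,\ldots,t$, compute $z_s=\clip(z_{s-1}+\eta_s b_s(x)c_s(z_{s-1}))$ using the procedure of Lemma~\ref{lem:clipped-additive-update-size}. That lemma says each stage costs $W_{\mathrm{upd}},k_{\mathrm{upd}}=O(q2^{\treedepth}+K)$, with $W_B,k_B$ taken from Lemma~\ref{lem:tree-arithmetic-evaluation} and $W_C,k_C$ from Lemma~\ref{lem:bin-function-size}. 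Since real parameters and operation counts simply add under sequential composition, a single $f_t$ is evaluated with $O((t+1)(q2^{\treedepth}+K))$ parameters and operations.

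The one genuine issue is that Definition~\ref{def:bounded-arithmetic-evaluation} asks for one \emph{fixed} algorithm for the whole class $\cF_{\le T+1}$, which is a union over different depths $t$. I would handle this in one of two ways, and would try the first.

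\emph{First option: pad to full length.} Every $f_t$ with $t<T+1$ is also expressible as a depth-$(T+1)$ unrolling by taking $\eta_s=0$ in the extra stages. Because $z_{s-1}\in[0,1]$, the extra stage gives $\clip(z_{s-1}+0)=z_{s-1}$, an identity. So $\cF_{\le T+1}$ equals the set of depth-$(T+1)$ unrollings with $\eta_s\ge 0$. This requires $\eta=0$ to be allowed in $\cA_{\treedepth,K}$, and it is, since the class is defined with $\eta\ge0$. The padding works just as well if the extra stages take $c\equiv0$, which lies in $\cC_K$.

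\emph{Fallback: a selector.} Add one real parameter $t$ and compare it against $T+2$ constants. This adds $O(T)$ parameters and operations.

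Either way the totals are $W_{T+1},k_{T+1}=O((T+2)(q2^{\treedepth}+K))=O(T(q2^{\treedepth}+K))$ for $T\ge1$, which is the statement of the lemma.

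Finally, I would check that the per-stage evaluation reads its input $z_{s-1}$ as a computed real value rather than a fixed input coordinate. The comparisons in $c_s$ against the fixed bin thresholds, the multiplications, and the clip comparisons are all legitimate operations on computed reals under Definition~\ref{def:bounded-arithmetic-evaluation}. Conditional jumps in the tree traversal have targets fixed by the tree shape, and the shape is fixed for each depth bound $\treedepth$: a tree of smaller depth can be padded with trivial splits. This last padding point and the uniformity over $t$ are the only places where care is needed. The rest is the additive bookkeeping already provided by Lemmas~\ref{lem:tree-arithmetic-evaluation}, \ref{lem:bin-function-size} and~\ref{lem:clipped-additive-update-size}.
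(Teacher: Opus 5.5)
Your proposal is correct and follows the paper's argument: evaluate the initial tree, then compose at most $T+1$ clipped additive updates, adding the parameter and operation counts to get $W_0+(T+1)W_{\mathrm{upd}}$ and $k_0+(T+1)k_{\mathrm{upd}}$. Your padding with $\eta=0$, which uses $\clip(z)=z$ for $z\in[0,1]$, gives a single fixed algorithm for the whole union $\cF_{\le T+1}$; the paper leaves this step implicit in its phrase ``direct unrolling,'' and you handle it correctly.
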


\begin{proof}
The initial class $\cF_0$ has the same order of description complexity as $\cB_{\treedepth}$, namely
\[
W_0=O(q2^{\treedepth}),
\qquad
k_0=O(q2^{\treedepth}).
\]
By Lemma~\ref{lem:clipped-additive-update-size}, one clipped additive update has
\[
W_{\mathrm{upd}}=O(q2^{\treedepth}+K),
\qquad
k_{\mathrm{upd}}=O(q2^{\treedepth}+K).
\]
Every predictor in $\cF_{\le T+1}$ is obtained by choosing an initial predictor and composing at most $T+1$ clipped additive updates.
Thus a direct unrolling gives
\[
W_{T+1}\le W_0+(T+1)W_{\mathrm{upd}},
\qquad
k_{T+1}\le k_0+(T+1)k_{\mathrm{upd}}.
\]
Substituting the displayed bounds for $W_0,k_0,W_{\mathrm{upd}},k_{\mathrm{upd}}$ gives the claim.
\end{proof}

\begin{lemma}[Pseudodimension of the unrolled generated class]
\label{lem:tree-instantiation-pdim}
In the setting of Theorem~\ref{thm:tree-instantiation-sample-complexity},
\[
\operatorname{Pdim}(\cF_{\le T+1})
\le
O\left(T^2(q2^{\treedepth}+K)^2\right).
\]
Consequently, the same bound also upper bounds $\operatorname{Pdim}(\cF_{\le T})$.
\end{lemma}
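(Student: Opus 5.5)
The plan is to combine Lemma~\ref{lem:tree-instantiation-unrolled-size} with Lemma~\ref{lem:arith-comparison-pdim}. First, I will note that the arithmetic-comparison evaluation procedure given in Lemma~\ref{lem:tree-instantiation-unrolled-size} is a \emph{single fixed algorithm} for all of $\cF_{\le T+1}$, as Definition~\ref{def:bounded-arithmetic-evaluation} requires. The catch is that $\cF_{\le T+1}$ is a union over depths $t'\in\{0,\ldots,T+1\}$. I will handle this in one of two ways. Either I pad every predictor to exactly $T+1$ updates, using the update with $\eta=0$, which is the identity on $z\in[0,1]$ because clipping of a value in $[0,1]$ does nothing. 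Or I add one integer-valued real parameter encoding the number of rounds, and branch on it with $O(T)$ comparisons and conditional jumps. Both options keep
\[
W_{T+1}=O\!\left(T(q2^{\treedepth}+K)\right),\qquad k_{T+1}=O\!\left(T(q2^{\treedepth}+K)\right),
\]
using $T+1\le 2T$ for $T\ge1$.

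Next, I will apply Lemma~\ref{lem:arith-comparison-pdim}, which gives $\operatorname{Pdim}(\cF_{\le T+1})\le O(W_{T+1}k_{T+1})$. Substituting the bounds above yields $O(T^2(q2^{\treedepth}+K)^2)$.

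For the consequence, I will use the inclusion $\cF_{\le T}\subseteq\cF_{\le T+1}$ from Definition~\ref{def:augmented-hyp-class}. Pseudodimension is monotone under inclusion, since any set pseudo-shattered by the subclass is pseudo-shattered by the superclass with the same witnesses. So the same bound holds for $\cF_{\le T}$.

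The main obstacle I expect is checking that the unrolled evaluation really fits the model of Theorem~\ref{thm:anthony-bartlett-arithmetic-vc}. Two points need care. First, the tree structure (which coordinate each node splits on, and the branching) must be fixed by the algorithm rather than being data-dependent parameters. Lemma~\ref{lem:tree-arithmetic-evaluation} sidesteps this by treating each split as a general linear threshold with $q$ real weights. Second, the bin thresholds in $\cC_K$ are fixed constants, and clipping is implemented by comparisons and conditional jumps. Once these are confirmed, the bound is a direct substitution.
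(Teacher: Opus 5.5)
Your proposal is correct and follows the paper's proof. Like the paper, you combine the $(W_{T+1},k_{T+1})$ evaluation bound of Lemma~\ref{lem:tree-instantiation-unrolled-size} with Lemma~\ref{lem:arith-comparison-pdim}, substitute, and use monotonicity under $\cF_{\le T}\subseteq\cF_{\le T+1}$. Your padding argument with $\eta=0$ is valid because every generated predictor already lies in $[0,1]$, so the dummy updates act as the identity. It spells out something the paper leaves implicit: the union over depths is computed by a single fixed algorithm.
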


\begin{proof}
By Lemma~\ref{lem:tree-instantiation-unrolled-size}, $\cF_{\le T+1}$ has a bounded arithmetic-comparison evaluation using
\[
W_{T+1}
=
O\!\left(T(q2^{\treedepth}+K)\right),
\qquad
k_{T+1}
=
O\!\left(T(q2^{\treedepth}+K)\right).
\]
By Lemma~\ref{lem:arith-comparison-pdim},
\[
\operatorname{Pdim}(\cF_{\le T+1})
=
O(W_{T+1}k_{T+1}).
\]
Substituting the displayed bounds on $W_{T+1}$ and $k_{T+1}$ gives
\[
W_{T+1}k_{T+1}
=
O\left(T^2(q2^{\treedepth}+K)^2\right),
\]
where we absorb lower-order additive terms into the asymptotic notation.
Finally, $\cF_{\le T}\subseteq\cF_{\le T+1}$, so monotonicity of pseudodimension gives the same upper bound for $\operatorname{Pdim}(\cF_{\le T})$.
\end{proof}

\begin{proof}[Proof of Theorem~\ref{thm:tree-instantiation-sample-complexity}]
By Lemma~\ref{lem:tree-instantiation-pdim}, the complexity term required by Theorem~\ref{thm:bounded-expressivity-sample-complexity} is at most
\[
O\left(T^2(q2^{\treedepth}+K)^2\right).
\]
The displayed sample-size condition is exactly the one obtained by substituting this bound into Theorem~\ref{thm:bounded-expressivity-sample-complexity}.

It remains to identify the concrete consequences for calibration.
Every $b\in\cB_{\treedepth}$ is bounded by $1$, so $b\in\cB_1(\cD)$ for every distribution $\cD$.
Every $c\in\cC_K$ is also bounded by $1$.
Therefore Theorem~\ref{thm:non-rounded-mc-generalization} gives
\[
\left|\E_{\cD}\left[b(x)c(f(x))(y-f(x))\right]\right|
\le
\sqrt{\gamma}
\]
for every $b\in\cB_{\treedepth}$ and $c\in\cC_K$.
Since $\cC_K$ is closed under sign flips, for every sign $\sigma\in\{-1,1\}$ the update
\[
x\mapsto
\clip\left(f(x)+\eta\sigma b(x)c(f(x))\right)
\]
belongs to $\clipa_{\treedepth,K}[f]$.
Thus the signed-update closure assumption in Theorem~\ref{thm:non-rounded-mc-generalization} holds for this instantiation.
Finally, $\cC_K$ contains every signed bin function
\[
u\mapsto
\sum_{S\in\cS_K
}\sigma_S\1(u\in S),
\qquad
\sigma\in\{-1,1\}^{\cS_K}.
\]
Applying Theorem~\ref{thm:final-discretization-mc-generalization} with $M=1$ gives
\[
K_{1,\cD}(\bar f,b)
\le
\sqrt{\gamma}
+
\frac{1}{K}
\]
for every $b\in\cB_{\treedepth}$.
\end{proof}

\begin{remark}[Comparison with level-set boosting]
The dependence on the discretization parameter in Theorem~\ref{thm:tree-instantiation-sample-complexity} should be read as the cost of representing level-set tests inside the lifted update class. This is in contrast to the boosting procedure of \citep{lsboost}: there, once a discretization of the current predictor has been fixed, the algorithm effectively competes with corrections indexed by both a prediction bin and an auditor from the base class, for example functions of the form
\[
x\mapsto h_v(x)\1(\bar f(x)=v)
\]
for each discretized prediction value $v$.
In that work, statistical complexity therefore pays for controlling this product of level-set indicators and auditing functions, and this cost grows with the number of bins.
In the feature-augmented procedure analyzed here, the corresponding one-step competitors are represented instead as lifted functions of $(x,f(x))$, such as
\[
(x,z)\mapsto \clip\left(z+\eta b(x)c(z)\right).
\]
For the decision-tree instantiation, the class $\cC_K$ must contain all signed bin functions, so increasing the number of final discretization bins increases the update description complexity by the $O(K)$ terms in the sample bound.
Thus the guarantee does not remove the statistical cost of discretization; it moves that cost from separate per-level oracle calls to the expressivity of a single lifted update class, while the final post-processing step contributes the deterministic discretization error $1/K$ in the $K_1$ bound. However, in practice, when one actually \textit{runs} algorithms like level set boosting, the weak learning class one chooses often may in fact be closed with respect to the kinds of functions we assume here. Hence, in practice, this may correspond to sample complexity gains.
\end{remark}

\subsection{Unbounded arithmetic operations and constants may lead to unbounded dimension}
In sections \ref{sec:gen-unrolled} and \ref{sec:dt-gen}, we achieve generalization by bounding the number of total operations and ensuring that they are clipped to $[0,1]$.
In this example we show that when there is no limit on the precision of real numbers and unbounded operations, the expressiveness of the augmented function class can be unbounded. The example uses the fact that the inputs are real numbers with infinite precision in order to get an infinite VC dimension.

\begin{example}
    We will construct our hypothesis class $\cA$ as a union of two other functions classes: 
    Let $\cA=\{a\} \cup \{c_\eta : \eta\in[0,1]\}$ be the set of constant functions. I.e., 
    For every $\eta\in[0,1]$ the constant function $c_\eta(x,u)=\eta$ for all $x,u$.
    
   Additionally, we will construct the function $a:[0,1]\times[0,1]\rightarrow[0,1]$ as follows: Given $x,u\in[0,1]$, view $x,u$ as an infinite binary string, denoted $x^{(1)},x^{(2)}\ldots$ and $u^{(1)},u^{(2)}\ldots$.
    Similarly, view the string $x$ as an encoding of $(m,q)$ for some $m\in\mathbb N$ and $q_x\in [m]$.
    Given $m$, we view $u^{(1)},u^{(2)}\ldots$ as an encoding of a function $\varphi_u:[m]\rightarrow \{0,1\}$, by setting $\varphi_u(i)=u^{(i)}$.

    Then, we will define the function $a$ by:
    \begin{align*}
        a(x,u) = \varphi_u(q_x).
    \end{align*}
\end{example}

\begin{remark}
We remark that computing the function $a$ on all inputs requires an unbounded number of operations, in a standard computation model. This is because for every $d\in\mathbb{N}$, there are inputs which required reading or calculating the inputs up to $d$ digit precision.
\end{remark}

\begin{claim}
    The set $\cA$ has a finite Pseudo-dimension.
\end{claim}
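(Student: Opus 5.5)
The plan is to bound the pseudodimension of $\cA$ directly from the definition of pseudo-shattering. The key observation is that $\cA$ is a union of a single function $a$ with a one-parameter family of constant functions $\{c_\eta\}$. A union of two classes with pseudodimensions $d_1,d_2$ has pseudodimension at most $d_1+d_2+1$. This is the standard fact that the number of sign patterns realized by a union is at most the sum of the patterns of its parts, combined with the Sauer--Shelah bound applied to the subgraph classes. So it suffices to bound each piece separately. Here the inputs are pairs $(x,u)\in[0,1]^2$.

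First, the single function $\{a\}$ has pseudodimension $0$. A set of $n\ge1$ points can receive only one sign pattern from a single function, but pseudo-shattering requires $2^n\ge 2$ patterns.

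Second, the constant class $\{c_\eta:\eta\in[0,1]\}$ has pseudodimension at most $1$. I would argue directly. Suppose two points $(x_1,u_1),(x_2,u_2)$ were pseudo-shattered with witnesses $r_1,r_2$, and assume without loss of generality that $r_1\le r_2$. Then the pattern $\sgn(\eta-r_1)=0$, $\sgn(\eta-r_2)=1$ would require $\eta<r_1\le r_2\le\eta$, which is impossible. The same holds under either convention for $\sgn$ at zero, since monotonicity of the threshold in $\eta$ is all that is used.

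Third, I would combine the two bounds without invoking the general union lemma, since a direct argument is cleaner. Suppose $\cA$ pseudo-shatters a set of $n$ points with witnesses $r_i$. The constant functions realize at most $n+1$ distinct sign patterns on these points, because the pattern depends only on where $\eta$ falls among the sorted $r_i$. The function $a$ adds one more pattern. Hence $2^n\le n+2$, which forces $n\le 2$. The pseudodimension is therefore finite, at most $2$. With slightly more care the bound is actually $1$ or $2$, but finiteness is all the claim requires.

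The main obstacle is conceptual rather than technical. The claim must be consistent with the intended point of the example, namely that the boosted class $\cF_t$, obtained by composing $a$ with the constants through augmentation, has unbounded dimension. That is, $a(x,c_\eta(\cdot))=\varphi_\eta(q_x)$ can shatter arbitrarily large sets of $x$'s encoding $(m,1),\dots,(m,m)$, while $\cA$ itself does not. For this claim I only need to make sure I count patterns of $\cA$ as a class on $\cX\times\R$ and not of the composed class. The counting step above handles this.
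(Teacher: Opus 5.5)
Your proposal is correct and matches the paper's argument: split $\cA$ into the single function $a$ and the constant functions, note that the constants realize only monotone (threshold) sign patterns while $a$ contributes one extra pattern, and conclude that no three points can be pseudo-shattered. Your general count $2^n \le n+2$ is the $n$-point version of the paper's count $1+4<8$ on three points, and the union-lemma remark is an unneeded but harmless detour.
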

\begin{proof}
    The set $\cA$ is a union of two sets: $\{a\}$ and $\{c_\eta|\eta\in\mathbb{R}\}$. The set $ \{a\}$ contains a single function and therefore has a finite Pseudo-dimention by definition. The set $\{c_\eta|\eta\in\mathbb{R}\}$ contains only constant functions, so it also has a finite Pseudo-dimension.

    The union of two finite pseudo-dimension sets also has a finite pseudo-dimension. In fact, the pseudo-dimension is at most $2$, which can be seen by considering any $x_1,x_2,x_3\in[0,1]$ and $s_1,s_2,s_3$. The function $a$ is a single function, and can therefore contribute to a single $\sigma\in\{0,1\}^3$. The set of all constant functions can express at most $4$ values of $\sigma$:  assuming without loss of generality that $s_1\geq s_2\geq s_3$, then any constant can only express the signs for $\sigma_1 \leq \sigma_2\leq\sigma_3$.
    Therefore, we cannot express all $8$ possibilities for $\sigma\in\{-1,1\}^3$. 
\end{proof}

\begin{claim}
    The class of augmenting $\cA$ with itself:
    \begin{align*}
        \cF = \{f(x,f'(x,u))|f,f'\in\cA\}
    \end{align*}
    has an infinite pseudo-dimension.
\end{claim}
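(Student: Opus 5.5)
The plan is to exhibit, for every $m\in\mathbb N$, a set of $m$ points pseudo-shattered by $\cF$, using compositions of the form $a(x,c_\eta(x,u))$. Such a composition evaluates to $a(x,\eta)=\varphi_\eta(q_x)$: the first (inner) hypothesis is a constant function that writes an arbitrary real $\eta$ into the augmented coordinate, and the second (outer) hypothesis $a$ then reads off a bit of $\eta$ selected by $x$. So the constant class, which on its own is trivial, serves as an unbounded-precision memory, and $a$ serves as a lookup. Following the paper's notation, I will treat members of $\cF$ as functions of $x$ by fixing the augmented input $u$ (any value works, since the inner constant ignores it).

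\textbf{Step 1 (choose the points).} Fix $m$. For each $i\in[m]$, pick $x_i\in[0,1]$ whose binary expansion encodes the pair $(m,i)$, so that $q_{x_i}=i$. The encoding is not specified in detail, but any fixed injective prefix encoding of $(m,q)$ into binary strings, padded with zeros, works. Take witnesses $r_i=1/2$ for all $i$.

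\textbf{Step 2 (realize each labeling).} Given $\sigma\in\{0,1\}^m$, set $\eta_\sigma=\sum_{i=1}^m \sigma_i 2^{-i}\in[0,1]$. Then $u^{(i)}=\sigma_i$ for $i\le m$, and hence $\varphi_{\eta_\sigma}(i)=\sigma_i$. The function $f_\sigma(x)=a(x,c_{\eta_\sigma}(x,\cdot))$ lies in $\cF$ and satisfies $f_\sigma(x_i)=\sigma_i$. This value is on the correct side of $1/2$, with $\sigma_i=1$ mapping to $1>1/2$ and $\sigma_i=0$ mapping to $0<1/2$. Therefore $\{x_1,\dots,x_m\}$ is pseudo-shattered with witnesses $r$. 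Since $m$ is arbitrary, $\operatorname{Pdim}(\cF)=\infty$.

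\textbf{Main obstacle.} The delicate part is making the binary-expansion conventions well defined. Dyadic rationals have two expansions, so I will fix the terminating expansion. With that convention, $\eta_\sigma$ (a finite dyadic sum) has exactly the digits $\sigma_1,\dots,\sigma_m$ followed by zeros, and no carry problems arise. The one edge case is $\sigma=1^m$ with $\eta=1-2^{-m}$, which is still below $1$, so it is also fine. I also need the encoding of $(m,q)$ inside $x$ to be well defined, which the same convention handles. If a cleaner argument is wanted, I would note that the definition of $a$ allows choosing the encoding so that $q_x$ and $m$ are readable from a prefix of the digits.
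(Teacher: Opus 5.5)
Your proposal is correct and follows the paper's own argument: points $x_i$ encoding $(m,i)$, witnesses all equal to $1/2$, and for each labeling $\sigma$ the composition $a(x,c_{\eta_\sigma}(x,\cdot))$, where $\eta_\sigma$ encodes $\sigma$ in its binary digits. Your fixing of the terminating-expansion convention for dyadic rationals, and your explicit choice $\eta_\sigma=\sum_{i}\sigma_i 2^{-i}$, spell out details the paper leaves implicit.
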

\begin{proof}
    We prove the claim by showing that for every $d\in\mathbb{N}$, there is a set $x_1,\ldots,x_d\in[0,1]$ and $s_1,\ldots,s_d$ that are pseudo-shattered by $\cF$.

    Fix some $d\in\mathbb{N}$. Let $x_1,\ldots,x_d$ be such that the binary encoding of $x_i$ first encodes $d$, then $i$. Let $s_1,\ldots,s_d$ all be $0.5$.

    For every $\sigma\in\{0,1\}^d$, let $\eta_{\sigma}\in[0,1]$ be the encoding of the function $\varphi_\sigma:[d]\rightarrow\{0,1\}$, such that $\varphi(i)=\sigma_i$.
    Then, the function $f$ defined by $f_{\eta_\sigma}(x)=a(x,c_{\eta_\sigma}(x,0))$ is in the augmented class.

    The function $f_{\eta_\sigma}$ satisfies: for every $x_i$ we have that $f_{\eta_\sigma}(x_i)=\varphi_{\eta_\sigma}(i)=\sigma_i$. Since we can define such $f_{\eta_\sigma}\in\cF$ for every $\sigma\in \{-1,1\}^d$, the set $\cF$ has pseudo-dimension at least $d$. We conclude that $\cF$ has an unbounded pseudo-dimension.
\end{proof}

\section{Acknowledgments}

An LLM was used to provide notational consistency and a line-by-line audit of proof correctness, and provided suggested changes and technical fixes to formalism which were then integrated by hand. This process was iteratively performed on a near-final draft.

\bibliography{refs}
\bibliographystyle{plain}

\appendix
\section{Notation Reference}\label{app:notation}

This appendix collects the main notation used throughout the paper.

\begin{center}
\small
\renewcommand{\arraystretch}{1.25}
\begin{longtable}{p{0.18\textwidth}p{0.74\textwidth}}
\textbf{Notation} & \textbf{Meaning} \\
\hline
$\cX,\cY$ & Feature space and label space. Throughout the main analysis, labels lie in $[0,1]$, and often in $\{0,1\}$. \\
$\cD$ & Population distribution over examples $(x,y)$. \\
$D\sim\cD^n$ & Empirical sample of $m$ independent examples from $\cD$. When used inside empirical losses, $D$ also denotes the uniform distribution over this sample. \\
$n$ & Sample size. \\
$[n]$ & The set $\{1,\ldots,n\}$. \\
$L_{\cD}(f)$, $L_D(f)$ & Squared loss of a predictor $f$ under the population distribution $\cD$ and empirical sample $D$, respectively. \\
$f_0,\finit$ & Initial predictor. We use $\finit$ when emphasizing that the initial predictor is a fixed input to the algorithm. \\
$f_t$ & Predictor after $t$ accepted boosting updates. \\
$T$ & Number of accepted boosting updates under consideration. \\
$\cA$ & Augmented hypothesis class of update maps $a:\cX\times\R\to\R$. \\
$\clipa$ & Clipped version of the augmented class $\cA$ to outputs in $[0,1]$.\\
$\cF_{\le T}$ & Class of predictors obtainable from the initial class after at most $T$ accepted updates. \\
$d_T,d_{T+1}$ & Pseudodimension of the generated classes $\cF_{\le T}$ and $\cF_{\le T+1}$. \\
$\cB$ & Auditor or comparator class, usually consisting of functions $b:\cX\to\R$. \\
$\cC$ & Factor class controlling how update maps depend on the current prediction value. \\
$M$ & $L_2$ budget for auditors. For example, $\cB_M(\cD)$ denotes auditors with $\E_{\cD}[b(x)^2]\le M$. \\
$\eta$ & The update constant, typically $f_{t+1}(x)=f_t(x)+\eta a(x)$, where $a$ is the update function.\\
$\alpha$ & Threshold for squared error improvement at each round of boosting. In the exact-oracle algorithm, this is the stopping threshold; when the oracle access is approximate to tolerance $\tau$, the stopping threshold is instead $\alpha-\tau$. \\
$\tau$ & Oracle tolerance for a $\tau$-approximate squared-loss oracle. \\
$\xi$ & Uniform-convergence slack comparing empirical and population losses. \\
$\delta$ & Failure probability in high-probability generalization bounds. \\
$\gamma$ & Population-level one-step optimality slack, typically $\gamma:=\alpha+2\xi$. \\
$\eps$ & Target multiaccuracy or multicalibration error. \\
$K$ & Number of final prediction bins. \\
$\cS_K$ & Final partition of $[0,1]$ into $K$ prediction bins. \\
$S$ & A generic subset or bin, often an element of $\cS_K$. \\
$\cS'$ & A local auxiliary partition of auditor values, used only when conditioning jointly on prediction bins and auditor-value bins. \\
$\bar f$ & Final discretized predictor. \\
$\discrete(f;\cS_K)$ & Discretization of $f$ with respect to the partition $\cS_K$. \\
$\treedepth$ & Depth of an axis-aligned decision tree in the sample-complexity instantiation. \\
$q$ & Ambient feature dimension in the decision-tree instantiation, where $\cX=[0,1]^q$. \\
$\zeta$ & Noise strength in the rounding-necessity construction. \\
$Q$ & Oscillation frequency parameter in the rounding-necessity construction. \\
$P$ & Number of pieces in a piecewise-linear function in the rounding-necessity construction. \\
$\nu,\beta$ & Slope and intercept parameters for one-dimensional linear functions in the rounding-necessity construction. \\
\end{longtable}
\end{center}

\section{Comparison to previous works on the augmented boosting algorithm}
\label{sec:differences}
The work of \cite{tax2026mcgrad} primarily studies the augmented boosting algorithm empirically, showing that when used with the log loss it produces multicalibrated predictors in practice. The paper also proves, that under certain assumptions on the function classes $\cB$ and $\cC$ throughout the execution of the algorithm, the resulting predictor satisfies $\E[c(f(x))b(x)(y-f(x))]\leq\alpha$, an analog to Theorem \ref{thm:augmented-general} (only we do not have the additional assumptions). The paper does not investigate what level of expressiveness is required from $\cC$ for this guarantee to imply the standard notion of multicalibration (or stronger calibration notions), nor does it analyze theoretically the corresponding sample complexity.

The work of \cite{haimovich2026convergence} considers a finite hypothesis class $\cA=\{a_1,\ldots,a_m\}$ and studies the calibration error of a uniformly selected hypothesis. They define an error vector $\mathcal E$, whose $j$th coordinate is
\[
\mathcal E_j=\E[a_j(x,f(x))(y-\phi(f))],
\]
where $\phi$ depends on the chosen loss function. They then bound $\|\mathcal E\|$ in terms of the decrease in the loss achieved by the boosting updates.

This result provides an average-case guarantee over the hypotheses in $\cA$. In contrast, the standard notion of multicalibration is a worst-case guarantee, requiring every hypothesis in the class to have small calibration error. Therefore, the bound of \cite{haimovich2026convergence} does not by itself imply the standard multicalibration guarantee.

\section{Definitions and Theorems for Section \ref{sec:generalization}}
\label{app:textbook-results}

This appendix collects the theorems from Anthony and Bartlett's textbook ~\cite{anthony1999neural} and standard definitions that are invoked in Section~\ref{sec:generalization}. We restate them in the notation used in this paper.

\begin{definition}[Uniform Covering Number $\cN_1$ ([Paraphrase of p.148 in \cite{anthony1999neural})]
\label{def:uniform-covering-number}
Let $\cH$ be a class of functions $h: \cX \rightarrow \R$. Fix dataset $X = (x_1, \ldots, x_n) \in \cX^n$, and let 
\[
\cH_X = \{(h(x_1), \ldots, h(x_n)) : h \in \cH \} \subseteq \R^n
\]
be the collection of possible labelings induced by $\cH$ on $X$. Let 
\[
d_1(Y, Z) = \frac{1}{n} \sum_{i \in [n]} \vert y_i - z_i \vert
\]
be the normalized $\ell_1$ metric between two size $n$ samples $Y,Z \in \R^n$. We will say an $\xi$-cover of $\cH_X$ with respect to $d_1$ is a set
$V\subseteq \R^n$ such that for every $Y\in\cH_X$, there exists $Z\in V$ with $d_1(Y,Z)\le \xi$. Let $\cN_1(\xi,\cH_X)$ denote the minimum cardinality of such an $\xi$-cover. Then, we will say that the uniform $L_1$ covering number of $\cH$ is the worst (largest)-case cardinality for a sample of size $n$ of such a covering:
\[
\cN_1(\xi, \cH, n) = \sup_{X \in \cX^n} \cN_1(\xi, \cH_X).
\]
\end{definition}

\begin{theorem}[Uniform convergence for bounded real-valued functions, Theorem 17.1 of \cite{anthony1999neural}]
\label{thm:uniform-convergence-reals}
Let $\cF$ be a set of functions $f:\cX\to[0,1]$. Fix a distribution $\cD\in\Delta(\cX\times[0,1])$, let $\xi\in(0,1)$, and let $n\in\bZ^+$. For a sample $D\sim\cD^n$,
\[
\Pr\left(\max_{f\in\cF}\left|L_{\cD}(f)-L_D(f)\right|\ge\xi\right)
\le
4\cN_1(\xi/16,\cF,2n)\exp(-\xi^2n/32).
\]
\end{theorem}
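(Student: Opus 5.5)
The plan is to derive this from the classical symmetrization--covering argument of Anthony and Bartlett, applied to the loss class
\[
\ell_\cF=\{(x,y)\mapsto (f(x)-y)^2 : f\in\cF\}.
\]
Since $f(x),y\in[0,1]$, every loss takes values in $[0,1]$. We have $L_{\cD}(f)=\E[\ell_f]$ and $L_D(f)$ is its empirical mean, so the event in question is a uniform deviation event for a $[0,1]$-valued class.

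First I will dispose of the trivial regime. If $n<2/\xi^2$, then $\exp(-\xi^2 n/32)>e^{-1/16}>1/4$ and $\cN_1\ge 1$, so the right-hand side exceeds $1$ and there is nothing to prove. Hence assume $n\ge 2/\xi^2$.

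The argument then has three steps.
\begin{enumerate}
\item \textbf{Symmetrization.} Draw a ghost sample $D'\sim\cD^n$. By Chebyshev, for a fixed $f$ we have $\Pr(|L_{D'}(f)-L_{\cD}(f)|\ge \xi/2)\le 1/(n\xi^2)\le 1/2$ when $n\ge 2/\xi^2$. Applying this to the (measurably chosen) $f$ witnessing a large deviation on $D$ gives
\[
\Pr\Bigl(\sup_f|L_{\cD}(f)-L_D(f)|\ge\xi\Bigr)\le 2\Pr\Bigl(\sup_f|L_{D'}(f)-L_D(f)|\ge \xi/2\Bigr).
\]
\item \textbf{Random swaps.} The joint sample $(D,D')$ has $2n$ i.i.d.\ points, so its law is invariant under swapping $z_i\leftrightarrow z_{n+i}$. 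It therefore suffices to bound, for each fixed $2n$-point sample $Z$, the probability over uniform random swaps $\sigma$ that $\sup_f|L_{D'_\sigma}(f)-L_{D_\sigma}(f)|\ge\xi/2$.
\item \textbf{Discretization and Hoeffding.} Take a minimal $\xi/16$-cover $V$ of $\cF_{X}$ on the $2n$ feature points, so $|V|\le \cN_1(\xi/16,\cF,2n)$. Because $|(u-y)^2-(v-y)^2|\le 2|u-v|$ for $u,v,y\in[0,1]$, $V$ induces a $\xi/8$-cover of the loss vectors in normalized $\ell_1$ over $2n$ points. Transferring the swapped-difference statistic to the nearest cover element costs a constant fraction of $\xi$, so the problem reduces to a union bound over $V$ of events of the form $|\frac1n\sum_i \epsilon_i(a_i-b_i)|\ge c\xi$, where $\epsilon_i$ are Rademacher signs and $a_i,b_i\in[0,1]$ (clipping cover vectors to $[0,1]$ if needed). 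Hoeffding gives $2\exp(-c^2\xi^2 n/2)$ for each such event.
\end{enumerate}

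The main obstacle is the constant bookkeeping. The factor $2$ from symmetrization, the factor $2$ from the two-sided Hoeffding bound, and the loss in $\xi$ from the cover approximation must combine to exactly $4\,\cN_1(\xi/16,\cF,2n)\exp(-\xi^2 n/32)$.

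The difficulty is the approximation step. Normalized $\ell_1$ closeness over all $2n$ points, at scale $\xi/8$ for losses, controls the swapped difference statistic only up to an additive $\xi/4$. This leaves a margin of $\xi/4$ for Hoeffding, and plain Hoeffding on that margin gives $2\exp(-\xi^2n/32)$. Multiplied by $|V|$ and then by $2$ from symmetrization, this is the claimed $4|V|e^{-\xi^2n/32}$.

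I would try this accounting first. If it comes out a constant short, I will instead invoke Anthony--Bartlett Theorem 17.1 directly for $\ell_\cF$ with cover scale $\xi/8$, together with their Lemma 17.6, which bounds $\cN_1(\xi/8,\ell_\cF,2n)\le\cN_1(\xi/16,\cF,2n)$ via the $2$-Lipschitz property of squared loss on $[0,1]$.
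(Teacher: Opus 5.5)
Your proposal is correct and is essentially the proof of Anthony--Bartlett's Theorem 17.1. That proof runs through symmetrization with a ghost sample for $n\ge 2/\xi^2$, reduction to random swaps, a $\xi/8$-cover of the loss class obtained from a $\xi/16$-cover of $\cF$ via the $2$-Lipschitz property of squared loss on $[0,1]$, and Hoeffding with margin $\xi/4$. The paper gives no proof of its own here and simply cites that theorem, so your route is the intended one.

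Your constant bookkeeping reproduces $4\,\cN_1(\xi/16,\cF,2n)\exp(-\xi^2 n/32)$ exactly. The one detail to make explicit is the timing of clipping. You should clip cover vectors to $[0,1]$ already before invoking the bound $|(u-y)^2-(v-y)^2|\le 2|u-v|$, not only before Hoeffding. This costs nothing, since clipping cannot increase distance to $[0,1]$-valued vectors.
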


\begin{theorem}[Covering numbers from pseudodimension, Theorem 18.3 of \cite{anthony1999neural}]
\label{thm:covering-bound}
Let $\cF$ be a nonempty set of functions $f:\cX\to[0,1]$ with finite pseudodimension $d$. Then
\[
\cN_1(\xi,\cF,n)
\le
e(d+1)\left(\frac{2e}{\xi}\right)^d.
\]
\end{theorem}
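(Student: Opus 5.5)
The plan is to bound the covering number by a packing number, reduce the real-valued packing problem to a packing problem for a binary class, and then invoke Haussler's packing lemma for classes of bounded VC dimension. First, fix any sample $X=(x_1,\ldots,x_n)\in\cX^n$. A maximal $\xi$-separated subset of $\cF_X$ with respect to $d_1$ is automatically a $\xi$-cover of $\cF_X$. Hence $\cN_1(\xi,\cF_X)\le \mathcal{M}(\xi,\cF_X)$, where $\mathcal{M}$ denotes the largest cardinality of a set of elements of $\cF_X$ that are pairwise more than $\xi$ apart in $d_1$. It therefore suffices to show
\[
\mathcal{M}(\xi,\cF_X)\le e(d+1)(2e/\xi)^d
\]
uniformly in $X$, and then take the supremum over $X$.

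Second, I lift to the subgraph class $\cH=\{(x,s)\mapsto \1(f(x)\ge s): f\in\cF\}$ on $\cX\times[0,1]$. Let $P_X$ be the probability measure on $\cX\times[0,1]$ that picks $i\in[n]$ uniformly, sets $x=x_i$, and draws $s$ uniformly from $[0,1]$. Since every $f$ maps into $[0,1]$, for each $i$ the set of $s$ on which $\1(f(x_i)\ge s)\ne\1(g(x_i)\ge s)$ is an interval of length $|f(x_i)-g(x_i)|$. Therefore
\[
d_1\big(f|_X,g|_X\big)=\Pr_{(x,s)\sim P_X}\big[h_f(x,s)\ne h_g(x,s)\big].
\]
So a $\xi$-packing of $\cF_X$ in $d_1$ is exactly a $\xi$-packing of $\cH$ in the $L_1(P_X)$ (disagreement) pseudometric. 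Moreover, a set $\{(z_j,s_j)\}$ is shattered by $\cH$ if and only if $\{z_j\}$ is pseudo-shattered by $\cF$ with witnesses $s_j$; the same observation is used in the proof of Lemma~\ref{lem:arith-comparison-pdim}. Hence $\operatorname{VCdim}(\cH)\le d$.

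Third, I apply Haussler's packing lemma: for any binary class of VC dimension $d$ and any probability measure $P$, the maximum number of functions that pairwise disagree on $P$-mass more than $\xi$ is at most $e(d+1)(2e/\xi)^d$. Applying it with $P=P_X$ gives the desired bound on $\mathcal{M}(\xi,\cF_X)$. This bound is independent of $X$ and $n$, so it also bounds the supremum $\cN_1(\xi,\cF,n)$.

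The main obstacle is Haussler's lemma itself. I would cite it as in \cite{anthony1999neural}, where the theorem is derived this way. If a self-contained argument is needed, I would first try the standard route: draw $m$ i.i.d. points from $P$, project the packing onto them, and use the Sauer--Shelah bound together with a shifting/one-inclusion-graph edge-density argument. That argument shows the projected class has average degree at most $2d$, which forces the packing size to be $O((d+1)(1/\xi)^d)$ with the stated constants. The constants $e(d+1)(2e/\xi)^d$ come out of optimizing $m$ in that argument. The real-valued reduction in the second step is exact and needs no slack in $\xi$, which is why the final bound has the same form as the binary one.
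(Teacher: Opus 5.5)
Your proposal is correct. The paper gives no proof of its own and simply imports this bound from Anthony and Bartlett, and your argument (covering number at most the packing number, the exact identity between $d_1$ on the sample and the disagreement probability of subgraph indicators under the uniform-on-sample $\times$ uniform-on-$[0,1]$ measure, $\operatorname{VCdim}\le d$ for the subgraph class, then Haussler's packing lemma) is the standard derivation of that cited result. One caveat is worth making explicit: both Haussler's lemma and the displayed bound should be read for $0<\xi\le 1$, since for $d\ge 1$ and very large $\xi$ the right-hand side falls below $1$ while $\cN_1\ge 1$; this is harmless here because the paper only applies the bound at scale $\xi/16<1$.
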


\begin{theorem}[VC dimension of arithmetic-comparison algorithms, Theorem 8.4 of \cite{anthony1999neural}]
\label{thm:anthony-bartlett-arithmetic-vc}
Let $h:\R^W\times\R^q\to\{0,1\}$, and let
\[
\cH
:=
\{x\mapsto h(a,x):a\in\R^W\}.
\]
Suppose $h(a,x)$ can be computed by an algorithm which, on input $(a,x)$, outputs $h(a,x)$ after at most $k$ operations of the following types: arithmetic operations $+,-,\times,/$ on real numbers; conditional jumps based on comparisons $>,\ge,<,\le,=,\ne$ of real numbers; and output of $0$ or $1$.
Then
\[
\operatorname{VCdim}(\cH)<4W(k+2).
\]
\end{theorem}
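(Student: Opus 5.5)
We will prove the bound by the standard route of Anthony and Bartlett: reduce shattering to counting sign patterns of a family of polynomials in the parameter vector $a\in\R^W$, bound that count by Warren's theorem, and compare it with the $2^m$ dichotomies that shattering requires.

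\emph{Step 1: unfold the algorithm into a computation tree.} Fix $m$ points $x_1,\ldots,x_m\in\R^q$ that are shattered by $\cH$. For a fixed input $x_i$, we regard the algorithm as a function of the symbolic parameter vector $a$ only. Every branch of its execution takes at most $k$ steps, so the execution is a binary tree of depth at most $k$. Each node of this tree is a conditional jump, and it has at most $2^k$ such nodes.

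\emph{Step 2: each comparison is a polynomial sign test.} Along any root-to-node path we track every computed real quantity as a ratio $p/r$ of polynomials in $a$; the coordinates of $x_i$ are constants. By induction on the number of operations, each of $+,-,\times,/$ at most doubles the degrees, so after $j\le k$ operations every numerator and denominator has degree at most $2^k$. A comparison of $p/r$ with $p'/r'$ is decided by the sign of $pr'-p'r$ together with the signs of $r$ and $r'$. Consequently, whichever path is taken, and hence the output $h(a,x_i)$, is determined by the sign vector (in $\{-,0,+\}$) of a fixed list of polynomials in $a$. This list has at most $c\,2^k$ entries for a small constant $c$, each of degree at most $2^{k+1}$. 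A division by a denominator equal to zero only adds further polynomials of the same kind whose signs we track. Collecting these lists over all $i\in[m]$ gives $N\le c\,m2^k$ polynomials of degree at most $D\le 2^{k+1}$ in $W$ variables, and the labeling $(h(a,x_1),\ldots,h(a,x_m))$ is a function of their joint sign pattern.

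\emph{Step 3: count sign patterns and solve for $m$.} Warren's theorem, in the version that allows zero signs, bounds the number of sign patterns of $N\ge W$ polynomials of degree at most $D$ in $W$ real variables by $(8eND/W)^W$. Shattering the $m$ points therefore forces
\[
2^m\le \left(\frac{8e\,c\,m\,4^{k}\cdot 2}{W}\right)^W,
\]
that is, $m\le W\bigl(2k+\log_2(16ec\,m/W)\bigr)$. If $N<W$ we pad the list with dummy polynomials, which only loosens the bound. Because the left side grows linearly in $m$ and the right side logarithmically, it suffices to check that the inequality fails at $m=4W(k+2)$. Indeed $W\bigl(2k+\log_2(64ec(k+2))\bigr)<4W(k+2)$ because $\log_2(k+2)\le k+1$. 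This gives $\operatorname{VCdim}(\cH)<4W(k+2)$.

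The main obstacle is the constant bookkeeping in Step 2: the treatment of division and zero denominators, and the exact degree and count bounds. The final constant $4(k+2)$ is sensitive to these quantities. To get the tightest version, we would first try tracking only the polynomials along the single path actually taken, instead of the whole tree, in the style of Goldberg and Jerrum. If the constants still do not close, we would fall back on a slightly larger constant and absorb the difference.
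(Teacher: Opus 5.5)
The paper gives no proof of this statement; it imports it from Anthony and Bartlett. Their proof follows the same route as yours: unfold the computation, reduce the output on each $x_i$ to sign conditions of roughly $2^k$ polynomials of degree roughly $2^k$ in $a$, and apply a Warren-type count. Your Steps 1 and 2 are sound. The gap is in the last step, which is exactly where the constant $4(k+2)$ is decided. Substituting $m=4W(k+2)$, what you need is $\log_2(ec)+\log_2(k+2)<2k+2$. The fact $\log_2(k+2)\le k+1$ only reduces this to $\log_2(ec)<k+1$, which is a constraint on $c$ and $k$, not a consequence. Your own count charges $pr'-p'r$, $r$ and $r'$ to each comparison node, so $c\ge 3$. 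With that, your displayed inequality $W(2k+\log_2(64ec(k+2)))<4W(k+2)$ is false at $k=1$, since it needs $c<16/(3e)\approx 1.96$. Your stated reason also covers only $k\ge 3$. Your fallback of enlarging the constant would give $\operatorname{VCdim}(\cH)=O(W(k+1))$. That is all Lemma~\ref{lem:arith-comparison-pdim} actually uses, but it is not the bound as stated.

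The repair is bookkeeping. You need no extra polynomials for divisions. The algorithm computes $h$ on every input, so every divisor it actually uses is nonzero. Hence each denominator in your $p/r$ representation is nonzero at the true parameter, and the path is determined by the comparison polynomials alone. You can then finish in either of two ways.

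(i) Keep your degrees and take $c=3$, since the unfolded tree has at most $2^k-1$ nodes. Verify $\log_2(3e)+\log_2(k+2)<2k+2$ directly for all $k\ge 2$. Handle $k=1$ separately: the only operation is the output, so $\cH$ is a single constant function.

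(ii) Use the fact that the output is one of the $k$ operations. Each $x_i$ then has at most $2^{k-1}-1$ comparison nodes. The compared quantities come from at most $k-2$ arithmetic operations, so all three polynomials at a node have degree at most $2^{k-1}$. This gives $N\le 3m2^{k-1}$ and $D\le 2^{k-1}$, hence $2^m\le (6em4^k/W)^W$. To rule out shattering at $m=4W(k+2)$ it then suffices that $\log_2(24e)+\log_2(k+2)<2k+8$, which holds for every $k\ge 0$.

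Separately, your remark about monotonicity in $m$ is unnecessary. Subsets of shattered sets are shattered, so it is enough to exclude sets of size exactly $4W(k+2)$.
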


\end{document}